\newtheorem{remark}{Remark}
\newtheorem{definition}{Definition}
\newtheorem{theorem}{Theorem}
\newtheorem{corollary}{Corollary}
\newtheorem{lemma}{Lemma}
\newtheorem{proposition}{Proposition}
\def\tbf#1{\textbf{#1}}
\newcommand{\tp}{^\top}
\DeclareMathAlphabet{\mathbsf}{OT1}{cmss}{bx}{n}%
\DeclareMathAlphabet{\mathssf}{OT1}{cmss}{m}{sl}%
\def\what#1{\widehat{#1}}
\def\wbar#1{\overline{#1}}
\DeclareMathOperator*{\argmax}{arg\,max}
\DeclareMathOperator*{\argmin}{arg\,min}
\DeclareMathOperator{\proj}{proj}
\newcommand{\rvx}{\mathssf{x}}	%
\newcommand{\rve}{\mathssf{e}}	%
\newcommand{\rvy}{\mathssf{y}}	%
\newcommand{\rvu}{\mathssf{u}}	%
\newcommand{\hrve}{\hat{\mathssf{e}}}	%
\newcommand{\rvbx}{\mathbsf{x}}	%
\newcommand{\rvbv}{\mathbsf{v}}	%
\newcommand{\rvba}{\mathbsf{a}}	%
\newcommand{\truervba}{\mathbsf{a}^{\star}}	%
\newcommand{\limitedinfrvba}{\wbar{\mathbsf{a}}}	%
\newcommand{\limitedrvba}{\wbar{\mathbsf{a}}}	%
\newcommand{\rvbw}{\mathbsf{w}}	%
\newcommand{\svbx}{\boldsymbol{x}} %
\newcommand{\perf}[1][]{\Psi_{#1}(n)}
\newcommand{\loss}{\ell}
\newcommand{\cF}{\mathcal{F}}
\newcommand{\cD}{\mathcal{D}}
\newcommand{\cX}{\mathcal{X}}
\newcommand{\cY}{\mathcal{Y}}
\newcommand{\cE}{\mathcal{E}}
\newcommand{\cN}{\mathcal{N}}
\newcommand{\cSn}[1][n]{\mathcal{A}(\Delta(#1), \phi(#1))}
\newcommand{\bss}[1][S]{\texttt{Bootstrap-#1}}
\newcommand{\s}[1][S]{\texttt{#1}}
\newcommand{\cS}{\mathcal{A}(\Delta, \phi)}
\newcommand{\bcS}{\wbar{\mathcal{A}}(\Delta, \phi)}
\newcommand{\bS}{\mathbb{A}}
\newcommand{\cDu}{\mathcal{D}^{(u)}}
\newcommand{\cDo}{\mathcal{D}^{(o)}}
\newcommand{\cDp}{\mathcal{D}^{(p)}}
\newcommand{\cDi}[1][i]{\mathcal{D}^{(u)}_{#1}}
\newcommand{\bsigma}{\boldsymbol{\Sigma}}
\newcommand{\radius}{\tau}
\newcommand{\ball}{\mathcal{B}(0,1)}
\newcommand{\ballunc}{\mathcal{B}(\BexEstimated,\radius)}
\newcommand{\balluncn}{\mathcal{B}(\BexEstimated,\radius(n))}
\newcommand{\Expectation}{\mathbb{E}}
\newcommand{\Variance}{\mathbb{V}\text{ar}}
\newcommand{\Divergence}{\wbar{D}}
\newcommand{\B}{\mathbf{b}}
\newcommand{\Bxu}{\mathbf{b}_{xu}}
\newcommand{\Byx}{\mathbf{b}_{yx}}
\newcommand{\Bex}{\mathbf{b}_{ex}}
\newcommand{\BexEstimated}{\what{\mathbf{b}}_{ex}}
\newcommand{\Bux}{\mathbf{b}_{ux}}
\newcommand{\SigmaYY}{\mathbf{\Sigma}_{yy}}
\newcommand{\SigmaXX}{\mathbf{\Sigma}_{xx}}
\newcommand{\SigmaXXi}[1][i]{\mathbf{\Sigma}_{x_{#1}x_{#1}}}
\newcommand{\SigmaEX}{\mathbf{\Sigma}_{ex}}
\newcommand{\SigmaEXestimated}{\what{\mathbf{\Sigma}}_{ex}}
\newcommand{\SigmaUY}{\mathbf{\Sigma}_{uy}}
\newcommand{\SigmaYU}{\mathbf{\Sigma}_{yu}}
\newcommand{\SigmaUU}{\mathbf{\Sigma}_{uu}}
\newcommand{\SigmaEE}{\mathbf{\Sigma}_{ee}}
\newcommand{\trace}[1]{\mathrm{Tr}(#1)}
\newcommand{\Reals}{\mathbb{R}} %
\newcommand{\re}{r_e}
\newcommand{\hre}{\what{r}_e}
\newcommand{\ry}{r_y}
\newcommand{\sless}[1]{\stackrel{#1}{\leq}}
\newcommand{\sequal}[1]{\stackrel{#1}{=}}
\newcommand{\normalbrackets}[1]{[ #1 ]}
\newcommand{\bigbrackets}[1]{\big[ #1 \big]}
\newcommand{\normalparenth}[1]{( #1 )}
\newcommand{\bigparenth}[1]{\big( #1 \big)}
\newcommand{\biggparenth}[1]{\bigg( #1 \bigg)}
\newcommand{\normalbraces}[1]{\{ #1  \}}
\newcommand{\bigbraces}[1]{\big\{ #1 \big \}}
\newcommand{\normalabs}[1]{| #1  |}
\newcommand{\bigabs}[1]{\big| #1 \big|}
\newcommand{\normalinner}[2]{\langle #1,#2 \rangle}
\newcommand{\biginner}[2]{\big\langle #1,#2 \big \rangle}
\def\defeq{\triangleq} %
\newcommand{\defn}{\defeq}
\newcommand{\qtext}[1]{\quad\text{#1}\quad} 
\newcommand{\stext}[1]{\ \text{#1}\ }
\def\norm#1{\left\|{#1}\right\|} %
 \def\snorm#1{\|{#1}\|} %
\newcommand{\subGnorm}[1]{\norm{#1}_{\Psi_2}} %
\newcommand{\subEnorm}[1]{\norm{#1}_{\Psi_1}} %
\newcommand{\stwonorm}[1]{\snorm{#1}_2} %
\newcommand{\sinfnorm}[1]{\snorm{#1}_{\infty}} %
\newcommand{\matnorm}[1]{|\!|\!| #1 | \! | \!|}
\newcommand{\fronorm}[1]{\matnorm{#1}_{\mathrm{F}}} %
\newcommand{\polar}[2]{#1(\cos{#2}, \sin{#2})}
\newcommand{\thetay}{\theta_y}
\newcommand{\thetae}{\theta_e}
\newcommand{\hthetae}{\what{\theta}_e}
\newcommand{\balpha}{\wbar{\alpha}}
\newcommand{\Afun}[1][]{\mathbf{a}#1}
\newcommand{\ParameterSetTrue}{\Lambda_{\rvba}}
\newcommand{\Deltal}{\Delta}
\newcommand{\Deltau}{\Delta}
\newcommand{\phil}{\phi}
\newcommand{\phiu}{\phi}
\newcommand{\indep}{\hspace{1mm}{\perp \!\!\! \perp}}
\newcommand{\chidiv}[2]{\chi^2\left( #1\, \middle\Vert #2 \right)}
\newcommand{\qcqpsubspace}{$d = 2$ suffices for QCQP}
 \crefname{appendix}{Appendix}{Appendices}
\crefname{equation}{}{}
\crefname{lemma}{Lemma}{Lemmas}
\crefname{theorem}{Theorem}{Theorems}
\crefname{Corollary}{Corollary}{Corollaries}
\crefname{algorithm}{Algorithm}{Algorithms}
\crefname{section}{Section}{Sections}
\crefname{table}{Table}{Tables}
\crefname{remark}{Remark}{Remarks}
\crefname{definition}{Definition}{Definitions}
\crefname{Proposition}{Proposition}{Propositions}
\crefname{myproblem}{Problem}{Problems}
\crefname{myremark}{Remark}{Remarks}
\crefname{mylemma}{Lemma}{Lemmas}
\crefname{mydefinition}{Definition}{Definitions}
\crefname{myproposition}{Proposition}{Propositions}
\crefname{mycorollary}{Corollary}{Corollaries}
\crefname{myassumption}{Assumption}{Assumptions}
\crefname{mycondition}{Condition}{Conditions}
\crefname{myfact}{Fact}{Facts}
\crefname{figure}{Figure}{Figures}
\crefname{enumi}{}{}
\crefname{name}{}{} %
\title{Group Fairness with Uncertainty in Sensitive Attributes}
\author[1]{Abhin Shah}
\author[1]{Maohao Shen}
\author[1]{Jongha Jon Ryu}
\author[2,3]{Subhro Das}
\author[2,3]{Prasanna Sattigeri}
\author[4]{Yuheng Bu}
\author[1]{Gregory W. Wornell}
\affil[1]{Massachusetts Institute of Technology}
\affil[2]{MIT-IBM Watson AI Lab}
\affil[3]{IBM Research}
\affil[4]{University of Florida}
\date{}
\begin{document}
\sloppy
\maketitle
\begin{abstract}
Learning a fair predictive model is crucial to mitigate biased decisions against minority groups in high-stakes applications. A common approach to learn such a model involves solving an optimization problem that maximizes the predictive power of the model under an appropriate group fairness constraint. However, in practice, sensitive attributes are often missing or noisy resulting in uncertainty. We demonstrate that solely enforcing fairness constraints on uncertain sensitive attributes can fall significantly short in achieving the level of fairness of models trained without uncertainty. To overcome this limitation, we propose a bootstrap-based algorithm that achieves the target level of fairness despite the uncertainty in sensitive attributes. The algorithm is guided by a Gaussian analysis for the independence notion of fairness where we propose a robust quadratically constrained quadratic problem to ensure a strict fairness guarantee with uncertain sensitive attributes. Our algorithm is applicable to both discrete and continuous sensitive attributes and is effective in real-world classification and regression tasks for various group fairness notions, e.g., independence and separation. 
\end{abstract}
\section{Introduction}
\label{sec_intro}
Achieving fairness in predictive modeling, whether in classification or regression tasks, is crucial to avoid discriminatory decisions against marginalized groups. Although various problem formulations exist for ensuring fairness in model training, a widely adopted approach is to formulate an optimization problem that maximizes the model's predictive power while satisfying a group fairness constraint \citep{kamishima2011fairness, zafar2017fairness, agarwal2018reductions, verma2018fairness, golz2019paradoxes, mehrabi2021survey, castelnovo2022clarification}. The notion of group fairness \citep{barocas-hardt-narayanan} stipulates a certain (conditional) independence requirement involving the model prediction and the sensitive attribute. Then, the goal is to minimize the prediction loss while ensuring that the fairness loss, which measures the degree of group unfairness, i.e., the degree of violation of the (conditional) independence requirement, is less than a pre-defined tolerance level $\epsilon$, i.e.,
\begin{align}\label{eq_introFairness}
    \min \text{Prediction Loss} \qtext{s.t.} \text{Fairness Loss} \leq \epsilon.
\end{align}
Typically, it is assumed that the learner has access to true sensitive attributes for every sample in training, but in reality, labeled sensitive attributes are often missing or noisy. For instance, labeling sensitive attributes may require additional annotation of existing datasets for which such labels were not originally collected. Even if available, the sensitive attribute information can be uncertain due to various reasons, such as noisy or unreliable responses from survey participants due to fear of disclosure or discrimination \citep{krumpal2013determinants}. Moreover, privacy and legal regulations often limit the use of labeled sensitive attributes, such as race or gender, which are protected by laws such as the EU's General Data Protection Regulation or California's Consumer Privacy Act. In such cases, privatized sensitive attributes, which are obtained by adding noise, may be the only available option.
\begin{figure}[t]
    \centering
    \includegraphics[width=0.5\linewidth]{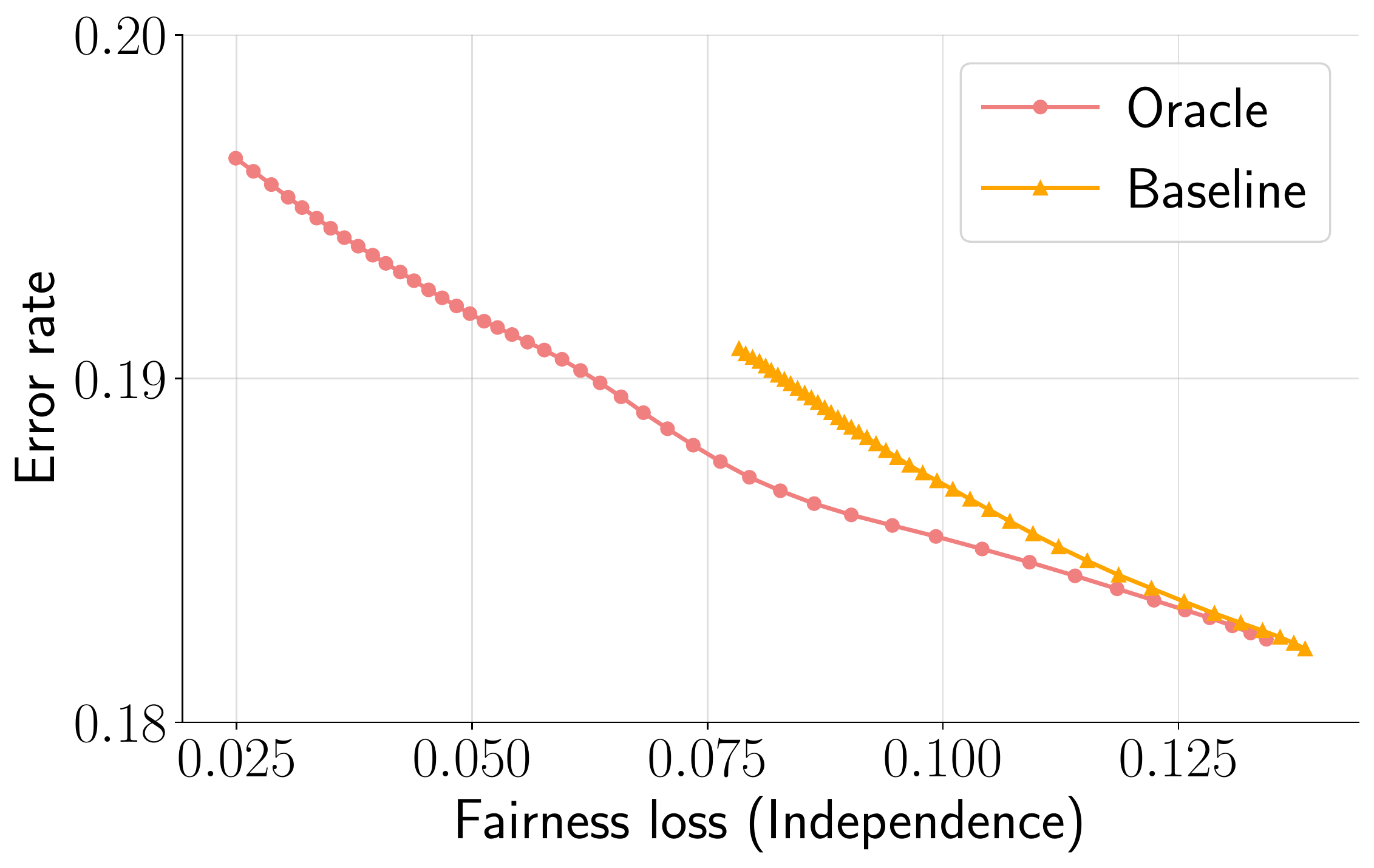}
\caption{Error vs fairness on Adult data for oracle and baseline models that enforce fairness using true and uncertain sensitive attributes, respectively. The baseline falls short of achieving the same range of fairness as the oracle.}
\vspace{-0.2em}
\label{histogram_adult_intro}
\end{figure}
In such scenarios, estimating the fairness loss in \cref{eq_introFairness} using uncertain sensitive attributes, as if correct, can lead to a model that does not accurately capture target fairness. Figure~\ref{histogram_adult_intro} shows trade-off between prediction (measured by error rate) and fairness (measured as violation of independence between predictions and sensitive attributes) obtained by varying $\epsilon$ in \cref{eq_introFairness} for Adult data~\citep{lantz2019machine}. The oracle (in red) has access to true sensitive attributes, denoted by $\cD^{\text{oracle}}$, enforces the fairness constraint: $\text{Fairness Loss} (\cD^{\text{oracle}}) \leq \epsilon$, and covers a wide range of fairness levels. In contrast, the baseline (in orange) has access to a random $<$1\% of the sensitive attributes, denoted by $\cD^{\text{uncertain}}$, enforces the fairness constraint: $\text{Fairness Loss} (\cD^{\text{uncertain}}) \leq \epsilon$, but is unable to achieve fairness below a threshold, i.e., the baseline provides less control over attainable fairness compared to the oracle.

As a result, for high-stakes applications where violating a fairness threshold incurs a significant cost, it is essential to develop a method that can learn fair models despite uncertainty in  sensitive attributes. 

\subsection{Contributions} 
In this work, we propose a solution to fair learning with uncertain sensitive attributes.
\begin{enumerate}[leftmargin=*,topsep=-0.6em,itemsep=-1pt] 
    \item We propose $\bss$, an algorithm that uses a bootstrap approach to impose $\s$ additional constraints to the optimization in \cref{eq_introFairness} for some parameter $\s$. For $i \in [\s]$, constraint $i$ requires $\text{Fairness Loss} (\cD^{\text{uncertain}}_i) \leq \epsilon$, where $\cD^{\text{uncertain}}_i$ is a collection of a fixed number of random subsamples of the uncertain sensitive attributes $\cD^{\text{uncertain}}$, i.e., $\bss$ aims to
    \begin{align}\label{eq_intro_bss}
        \min \text{Prediction Loss} \qtext{s.t.} & \text{Fairness Loss}(\cD^{\text{uncertain}}) \leq \epsilon \stext{and,} \\
        & \text{Fairness Loss}(\cD^{\text{uncertain}}_i) \leq \epsilon \stext{for all} i \in [\s]. \nonumber
    \end{align}
    We illustrate $\bss$ in Figure~\ref{fig_algo_sketch} where it is contrasted with the oracle model that constrains the fairness loss estimated using true sensitive attributes $\cD^{\text{oracle}}$ and the baseline that constrains the fairness loss estimated using available uncertain sensitive attributes $\cD^{\text{uncertain}}$ as if they are correct.
    \item  We analyze fair learning for Gaussian data with a focus on the independence notion of fairness, and this serves as a motivation for $\bss$. We begin by reducing a specific instance of this problem to a quadratically constrained quadratic problem (QCQP) when true sensitive attributes are available. Then, given uncertainty in sensitive attributes, we fully characterize the solution of the QCQP and robustify the QCQP to provide a strict fairness guarantee. Notably, when uncertainty arises due to randomly missing sensitive attributes, in some cases, the robust QCQP can achieve strict fairness without any performance loss, which we refer to as \textit{free fairness}.
    \item We showcase the practical effectiveness of $\bss$ in achieving fairness levels comparable to the oracle while maintaining high prediction performance (unlike the baseline) on a variety of synthetic and real data, including classification and regression tasks, discrete and continuous sensitive attributes, and independence and separation notions of group fairness.
\end{enumerate}
\begin{figure}[t]
\centering
\begin{tikzpicture}[scale=0.75]
   \draw[orange!80, very thick, rounded corners] (-6.0,0) rectangle (-1.5,1.5) node[pos=.5] (ourmain) {\begin{tabular}{c} Fair loss estimated \\  with $\cD^{\text{uncertain}}$ \end{tabular}};
   \draw[orange!70, fill = orange!30, thick, rounded corners] (-6.0,0.0) rectangle (-2.5,-0.5) node[pos=.5,text=black] {$\cD^{\text{uncertain}}$};
   \draw[white] (-1.0,0.0) rectangle (-0.5,1.5) node[pos=.5,text=black,font = {\Large\bfseries\sffamily}] {$+$};
   \draw[blue!80, very thick, rounded corners] (0.0,0.0) rectangle (4.5,1.5) node[pos=.5] (ourfirst) {\begin{tabular}{c} Fair loss estimated \\ with $\cD^{\text{uncertain}}_1$ \end{tabular}};
   \draw[orange!70, fill = orange!20, thick, rounded corners] (0.0,0.0) rectangle (3.5,-0.5) node[pos=.5]  {};
   \draw[blue!70, fill = blue!30, thick, rounded corners] (0.75,0.0) rectangle (3.00,-0.5) node[pos=.5,text=black] {$\cD^{\text{uncertain}}_1$};
   \draw[white, very thick, rounded corners] (4.6,0.0) rectangle (5.9,1.5) node[pos=.5,text=black] (ourinterim) {\begin{tabular}{c} $\cdots$  \end{tabular}};
   \draw[white] (5.9,0.0) rectangle (6.4,1.5) node[pos=.5,text=black,font = {\Large\bfseries\sffamily}] {$+$};
   \draw[blue!80, very thick, rounded corners] (7.0,0.0) rectangle (11.5,1.5) node[pos=.5] (ourlast) {\begin{tabular}{c} Fair loss estimated \\ with $\cD^{\text{uncertain}}_S$ \end{tabular}};
   \draw[orange!70, fill = orange!20, thick, rounded corners] (7.0,0.0) rectangle (10.5,-0.5) node[pos=.5]  {};
   \draw[blue!70, fill = blue!30, thick, rounded corners] (7.0,0.0) rectangle (9.25,-0.5) node[pos=.5,text=black]  {$\cD^{\text{uncertain}}_S$};
   \draw[white] (0.5,-1.1) rectangle (5.0,-0.6) node[pos=.5,text=black]  {\begin{tabular}{c} $(c)$ Our $\bss$ method \end{tabular}};
   \draw[orange!80, very thick, rounded corners] (4.5,3.0) rectangle (9,4.5) node[pos=.5] (baseline) {\begin{tabular}{c} Fair loss estimated \\ with  $\cD^{\text{uncertain}}$ \end{tabular}};
   \draw[orange!70, fill = orange!30, thick, rounded corners] (4.5,2.5) rectangle (8.0,3.0) node[pos=.5,text=black] (basedata) {$\cD^{\text{uncertain}}$};
   \draw[white] (4.5,1.9) rectangle (9.0,2.4) node[pos=.5,text=black]  {\begin{tabular}{c} $(b)$ Baseline method \end{tabular}};
   \draw[red!70, very thick, rounded corners] (-3.5,3.0) rectangle (1.0,4.5) node[pos=.5] (oracle) {\begin{tabular}{c} Fair loss estimated \\ with $\cD^{\text{oracle}}$ \end{tabular}};
   \draw[red!70, fill = red!30, thick, rounded corners] (-3.5,2.5) rectangle (1.0,3.0) node[pos=.5,text=black] (oracledata) {$\cD^{\text{oracle}}$};
   \draw[white] (-3.5,1.9) rectangle (1.0,2.4) node[pos=.5,text=black]  {\begin{tabular}{c} $(a)$ Oracle method\end{tabular}};
\end{tikzpicture}
\caption{Comparing fairness loss estimation methods: (a) Oracle uses true sensitive attributes $\cD^{\text{oracle}}$; (b) Baseline uses available uncertain sensitive attributes $\cD^{\text{uncertain}}$ as if correct; (c) $\bss$ constrains the optimization with additional fairness losses estimated using subsamples $\cD^{\text{uncertain}}_i, \forall i \in [\s]$. 
The horizontal bars illustrate these methods when uncertainty is due to missing sensitive attributes.} 
\label{fig_algo_sketch}
\end{figure}
\section{Related Work}
\noindent \textbf{Group fairness in machine learning.} 
Various metrics and criteria have been proposed to ensure group fairness in machine learning~\citep{verma2018fairness, mehrabi2021survey, castelnovo2022clarification,shah2022selective}, but many of these criteria are mutually exclusive in non-trivial cases~\citep{golz2019paradoxes}. For example, the independence and the separation criteria cannot both be satisfied simultaneously. Different approaches exist to enforce these criteria, mainly falling into one of three categories: $(a)$ pre-processing methods~\citep{zemel2013learning, calmon2017optimized}, $(b)$ post-processing methods~\citep{hardt2016equality, pleiss2017fairness}, and $(c)$ in-processing methods~\citep{kamishima2011fairness, zafar2017fairness, agarwal2018reductions}. In this work, we consider independence and separation, using an in-processing approach where the objective function accounts for both accuracy and fairness.\\

\noindent \textbf{Fairness without certain sensitive attributes.} 
The growing literature on fairness in the absence of true sensitive attributes can be broadly categorized into the following three groups.\\

\noindent \emph{A. Perturbed sensitive attribute.}
Several approaches have been proposed to handle perturbations in sensitive attributes. For example, in-processing methods for fair classification have been developed by \cite{lamy2019noise} and \cite{celis2021fair} to deal with noisy group labels, while \cite{awasthi2020equalized} investigated the performance of a post-processing algorithm proposed by \cite{hardt2016equality} with noisy sensitive labels. Additionally, \cite{mozannar2020fair} and \cite{celis2021fairclassification} explored achieving group fairness with adversarially perturbed and differentially private data, respectively. \\

\noindent \emph{B. Proxy variables.}
\cite{gupta2018proxy}, \cite{chen2019fairness}, and \cite{kallus2022assessing} proposed methods to achieve group fairness when proxy variables are available as substitutes for the sensitive attribute (e.g., zip code as a proxy of race). However, the effectiveness of these methods may be reduced if the correlation between the sensitive attribute and the proxy variables is weak. \cite{jung2022learning} proposed a semi-supervised learning approach to generate proxy pseudo-labels for partially observed sensitive attributes. While these proxy-based methods can be useful, they risk perpetuating biases.\\

\noindent \emph{C. No sensitive attribute.}
\cite{hashimoto2018fairness} and \cite{lahoti2020fairness} proposed methods for achieving fairness without relying on a labeled sensitive attribute, utilizing distributionally robust optimization to improve the performance of the worst-case risk for all distributions close to the empirical distribution. They aim to achieve Rawlsian max-min fairness, but their notion of fairness is not defined by the population distribution, which sets it apart from our focus on group fairness. Additionally, it may not be straightforward to combine their methods with existing fair-training methods, while our method can be generally applied to any fair-training method.\\

\noindent Our work is most closely related to \cite{wang2020robust}, who also focus on achieving a strict group fairness guarantee given uncertain sensitive attributes. However, they focus on classification problems with discrete sensitive attributes. In contrast, our approach is widely applicable, including both regression and classification, as well as both discrete and continuous sensitive attributes. 
\section{Problem Formulation}\label{sec_problem}
We consider a scenario where $\rvbx$ represents $d$-dimensional input features defined on the alphabet $\cX$, while $\rvy$ and $\rve$ denote $1$-dimensional target and sensitive attribute defined on the alphabets $\cY$ and $\cE$, respectively. Fair supervised learning seeks to find a predictor $f : \cX \to \cY$ that (a) accurately estimates the target variable for new input features and (b) avoids discrimination based on the sensitive attribute. To achieve this, we are given (a) a loss function $\loss : \cY \times \cY \to \Reals_+$, where $\loss(\rvy, f(\rvbx))$ measures the disagreement between the target variable and its prediction, and (b) a fairness measure $\Phi : \cY \times \cY \times \cE \to \Reals_+$, where $\Phi(\rvy, f(\rvbx), \rve)$ measures the level of discrimination of $f$. Given a fairness target $\epsilon \geq 0$ and a class of predictors $\cF$, the goal of fair learning is to find an $f \in \cF$ that minimizes the expected loss $\loss$, subject to the fairness measure $\Phi$ being small:
\begin{align}\label{eq_mainopt}
    f^* \in \argmin_{f \in \cF} \Expectation\bigbrackets{\loss(\rvy, f(\rvbx))} \qtext{s.t.} \Phi(\rvy, f(\rvbx), \rve) \leq \epsilon.
\end{align}
For ease of notation, hereon, we define $\rvu \defn f(\rvbx)$.\\

\noindent {\bf Choice of the loss function.} 
The choice of loss function $\loss$ depends on the specific alphabet $\cY$. In this work, we focus on regression and binary classification tasks, where $\cY$ is either $\Reals$ or $\normalbraces{0,1}$, respectively. For $\cY = \Reals$, we use the mean squared error (MSE) loss, defined as $\loss(\rvy, \rvu) = \normalparenth{\rvy - \rvu}^2$. For $\cY = \normalbraces{0,1}$, we use the log loss, defined as $\loss(\rvy, \rvu) = - \rvy \log \rvu - \normalparenth{1 - \rvy} \log \normalparenth{1- \rvu}$.\\

\noindent {\bf Choice of the fairness measure.} 
To design a fairness measure $\Phi$, it is important to establish what is meant by a perfectly fair predictor, i.e., $\epsilon = 0$ in \cref{eq_mainopt}. Typically, perfect fairness is described in terms of statistical independence. This work focuses on two commonly used fairness criteria: \textit{independence} and \textit{separation}. The independence criterion, also called demographic parity, demands that $\rvu \indep \rve$, meaning that predictions should not reveal any information about sensitive attributes. The separation criterion, also known as equalized odds, requires that $\rvu \indep \rve | \rvy$, indicating that predictions should not disclose any information about sensitive attributes given the knowledge of true target variables. 

Achieving perfect fairness is not feasible when learning a predictor from finite training samples \citep{agarwal2018reductions}. Instead, in practice, one often works with measures of approximate fairness. This is done by choosing $\epsilon > 0$ in \cref{eq_mainopt}, and then varying $\epsilon$ to find a balance between fairness and accuracy. As perfect fairness measures assert that certain random variables should be independent, a natural way to measure approximate fairness is to use divergence that measures the degree of independence between these variables. In recent years, $\chi^2$-divergence has emerged as an effective measure of approximate fairness \citep{mary2019fairness}. Following this, we adopt $\chi^2$-divergence as our measure of the degree of independence, except in cases where the data is Gaussian, where we use a different analytically convenient divergence (introduced later). For independence, the measure is given by $\Phi(\rvy, \rvu, \rve) = \chidiv{p_{\rve, \rvu}}{p_{\rve}p_{\rve}}$ where $p_{\rve, \rvu}, p_{\rve}$, and $p_{\rvu}$ are marginal distributions of $(\rve, \rvu)$, $\rve$, and $\rvu$, respectively. Likewise, for  separation $\Phi(\rvy, \rvu, \rve) = \Expectation_{p_{\rvy}}\normalbrackets{\chidiv{p_{\rve, \rvu | \rvy}}{p_{\rve| \rvy}p_{\rve| \rvy}}}$ where $p_{\rve, \rvu| \rvy}, p_{\rve| \rvy}$, and $p_{\rvu| \rvy}$ are conditional distributions of $(\rve, \rvu)$, $\rve$, and $\rvu$ given $\rvy$, respectively.

\subsection{Uncertain sensitive attributes} 
Typically, $N$ independent and identically distributed (i.i.d.) samples of the tuple $(\rvbx, \rvy, \rve)$ are available, denoted by $\cDo \defn \normalbraces{\rvbx^{(i)}, \rvy^{(i)}, \rve^{(i)}}_{i \in [N]}$. Then, in the optimization in \cref{eq_mainopt}, the objective is estimated using the subset $\cDp \defn \normalbraces{\rvbx^{(i)}, \rvy^{(i)}}_{i \in [N]}$ while the constraint is estimated using an appropriate subset of $\cDo$ depending on the functional form $\Phi$. We denote these estimates by $\Expectation_{\cDp}\bigbrackets{\loss(\rvy, f(\rvbx))}$ and $\Phi_{\cDo}(\rvy, f(\rvbx), \rve)$, respectively, for brevity. We assume that $N$ is sufficiently large and ignore any errors in these estimates to focus on errors due to uncertainty in sensitive attributes.

When dealing with uncertain sensitive attributes, access to $\cDo$ may not be possible. To account for such uncertainty, we assume access to $\cDp$ as well as $n \leq N$ (potentially noisy) labeled sensitive attributes $\cDu \defn \normalbraces{\rvbx^{(i)}, \rvy^{(i)}, \hrve^{(i)}}_{i \in [n]}$. For $ i \in [N]$, if $\hrve^{(i)} \neq \rve^{(i)}$, then sensitive attribute $\hrve^{(i)}$ is noisy. Further, if $n < N$, then sensitive attributes $\normalbraces{\rve^{(i)}}_{i = n+1}^{N}$ are missing. Then, the goal of fair learning with uncertain sensitive attributes is to solve the optimization in \cref{eq_mainopt} with access to $\cDp$ and $\cDu$. While this is an intuitively appealing goal, simply computing the constraint in \cref{eq_mainopt} with $\cDu$ may be sub-optimal as discussed in \cref{sec_intro}. In other words, a predictor $\rvu$ satisfying $\Phi_{\cDu}(\rvy, \rvu, \rve) \leq \epsilon$ may not necessarily satisfy $\Phi_{\cDo}(\rvy, \rvu, \rve) \leq \epsilon$. To address this issue and gain some insight, we first consider the case where $(\rvbx, \rvy, \rve, \rvu)$ is jointly Gaussian. Under this scenario, we fully characterize the optimization problem in \cref{eq_mainopt} and guarantee strict fairness despite uncertainty in sensitive attributes. Then, building on this analysis, we develop our general-purpose algorithm.

\subsection{Gaussian setting} 
For the ease of exposition, we consider zero-mean Gaussian variables, and assume that the marginal distribution $p_{\rvbx, \rvy}$ is known or can be learned from $\cDp$. {We think of $\rvu$ as a representation of the features and let the predictor be $\Expectation[\rvy|\rvu]$.} Naturally, the loss function $\loss$ is chosen to be the mean squared loss as $\cY = \Reals$. We focus on the independence criterion of fairness and measure the degree of independence between $\rvu$ and $\rve$ using the notion of $\Divergence$-divergence, a second-order approximation of Kullback--Leibler divergence, introduced by \cite{huang2019universal}. 

\begin{definition}[$\Divergence$-divergence]\label{def_div_inf_meausres}
The $\Divergence$-divergence between zero-mean Gaussian random vectors $\rvbv \sim p_{\rvbv} = \cN(\tbf{0}, \bsigma_v)$ and $\rvbw \sim p_{\rvbw} = \cN(\tbf{0}, \bsigma_w)$, with $\fronorm{\cdot}$ denoting the Frobenius norm, is given by
\begin{align}
    \Divergence(p_{\rvbv} \| p_{\rvbw}) \defn \frac{1}{2} \fronorm{\bsigma_w^{-1/2} \normalparenth{\bsigma_v - \bsigma_w} \bsigma_w^{-1/2}}^2.
\end{align}
\end{definition}
\noindent For these choices, the optimization in \cref{eq_mainopt} reduces to learning a Gaussian variable $\rvu$ such that
\begin{align}\label{eq_gaussian_opt}
    \rvu^* \in \argmin_{\rvu} \Expectation\bigbrackets{(\rvy - {\Expectation[\rvy|\rvu]})^2)} \qtext{s.t.} \Divergence(p_{\rve, \rvu} \| p_{\rve} p_{\rvu}) \leq \epsilon.
\end{align}
\noindent Next, we reformulate \cref{eq_gaussian_opt} into a quadratically constrained quadratic program (QCQP) by utilizing the notion of canonical correlation matrices (CCMs) defined by \cite{huang2019universal}.
\begin{definition}[Canonical correlation matrix]\label{def_ccm}
The canonical correlation matrix (CCM) between zero-mean jointly Gaussian random vectors $\rvbv \sim \cN(\tbf{0}, \bsigma_v)$ and $\rvbw \sim \cN(\tbf{0}, \bsigma_w)$ is given by $\tbf{b}_{vw} \defn \bsigma_{vv}^{-1/2}\bsigma_{vw}\bsigma_{ww}^{-1/2}$, where $\bsigma_{vw}$ is the cross-covariance matrix between $\rvbv$ and $\rvbw$.
\end{definition}
\noindent The $\Divergence$-divergence is conveniently represented by these CCMs. We now formally state the equivalence between \cref{eq_gaussian_opt} and a QCQP that uses CCMs. We prove this in \cref{proof_them_ibqcqp} by drawing connections to the information bottleneck principle \citep{bu2021sdp}.
\newcommand{\ibeqqcqp}{Gaussian Fair Learning $\iff$ QCQP}
\begin{theorem}[\ibeqqcqp]\label{thm_ibqcqp}
The optimization problem in \cref{eq_gaussian_opt} is equivalent to
\begin{align}
    \max_{\rvba \in \ball} \biginner{\rvba}{\Byx}^2 \qtext{s.t.}  \biginner{\rvba}{\Bex}^2 \leq \varepsilon,\label{eq_qcqp}
\end{align}
where $\ball$ denotes an $\ell_2$ ball centered at 0 with radius 1, $\normalinner{\cdot}{\!\cdot}$ denotes the inner product, and $\rvba$ plays the role of $\Bux$.
\end{theorem}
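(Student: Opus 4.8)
The plan is to reparametrize \cref{eq_gaussian_opt} entirely in terms of canonical correlation matrices, exploiting that treating $\rvu$ as a (stochastic) representation of the features makes $(\rvy,\rve) - \rvbx - \rvu$ a Gaussian Markov chain; this is the route suggested by the connection to the Gaussian information bottleneck reformulation of \cite{bu2021sdp}, where the $\Divergence$-divergence plays the role of a compression term and the MMSE plays the role of a relevance term, and both become quadratic once expressed through CCMs.

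First I would rewrite the objective. For jointly Gaussian scalars $\rvy$ and $\rvu$, $\Expectation[(\rvy - \Expectation[\rvy\mid\rvu])^2]$ is the linear MMSE and equals $\Variance(\rvy)\,(1 - \Byu^2)$, with $\Byu$ the scalar CCM of $\rvy$ and $\rvu$ (\cref{def_ccm}). Since $\Variance(\rvy)$ is fixed, minimizing the objective is the same as maximizing $\Byu^2$. Next I would rewrite the constraint: substituting $\rvbv = (\rve,\rvu)$ and $\rvbw$ the corresponding independent pair (so $\bsigma_w = \mathrm{blkdiag}(\bsigma_{ee},\bsigma_{uu})$) into \cref{def_div_inf_meausres} and using the $2\times 2$ block structure, the whitened difference $\bsigma_w^{-1/2}(\bsigma_v-\bsigma_w)\bsigma_w^{-1/2}$ is off-diagonal with blocks $\Beu$ and $\Beu^\top$, so $\Divergence(p_{\rve,\rvu}\Vert p_{\rve}p_{\rvu}) = \fronorm{\Beu}^2 = \Beu^2$, and the constraint becomes $\Beu^2 \le \epsilon$.

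Then I would invoke the CCM ``chain rule'' for Gaussian Markov chains: since $\rve \perp \rvu \mid \rvbx$ and $\Expectation[\,\cdot \mid \rvbx]$ is linear, $\bsigma_{yu} = \bsigma_{yx}\bsigma_{xx}^{-1}\bsigma_{xu}$ and $\bsigma_{eu} = \bsigma_{ex}\bsigma_{xx}^{-1}\bsigma_{xu}$, and inserting the whitening factors yields $\Byu = \Byx\Bxu$ and $\Beu = \Bex\Bxu$. Writing $\rvba \defn \Bux$ (so $\Bxu = \rvba^\top$), these read $\Byu = \biginner{\rvba}{\Byx}$ and $\Beu = \biginner{\rvba}{\Bex}$. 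It then remains to characterize the feasible set: I claim that as $\rvu$ ranges over all Gaussian representations of $\rvbx$, the vector $\rvba = \Bux$ ranges exactly over $\ball$. One inclusion follows from positive semidefiniteness of $\Covariance((\rvbx,\rvu))$: the Schur complement gives $\bsigma_{uu} \succeq \bsigma_{ux}\bsigma_{xx}^{-1}\bsigma_{xu} = \bsigma_{uu}\stwonorm{\rvba}^2$, hence $\stwonorm{\rvba} \le 1$. The converse is constructive: given $\stwonorm{\rvba}\le 1$, take $\rvu = \rvba^\top\bsigma_{xx}^{-1/2}\rvbx + \eta$ with $\eta\sim\cN(0,1-\stwonorm{\rvba}^2)$ independent of $(\rvbx,\rvy,\rve)$, which is a valid representation with $\Bux = \rvba$. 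Combining the three reductions turns \cref{eq_gaussian_opt} into $\max_{\rvba\in\ball}\biginner{\rvba}{\Byx}^2$ subject to $\biginner{\rvba}{\Bex}^2\le\epsilon$, which is \cref{eq_qcqp} with $\varepsilon = \epsilon$.

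The main obstacle is not any single calculation but getting the modeling of $\rvu$ exactly right so that the reduction is an \emph{equivalence} rather than an inequality: one must (i) verify the CCM chain rule — the place where the Markov/IB structure of \cite{bu2021sdp} is genuinely used — and (ii) show the achievable set of $\Bux$ is the full ball, which relies on allowing $\rvu$ to be a stochastic (noisy) function of $\rvbx$; with a deterministic $\rvu$ one would only reach the boundary $\stwonorm{\rvba}=1$, and the feasible region, hence the optimal value, would change. Everything else — the MMSE identity and the block computation of the $\Divergence$-divergence — is routine linear algebra.
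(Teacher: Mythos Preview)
Your proposal is correct and reaches the same destination, but by a more direct path than the paper. The paper's argument proceeds in three stages: (i) it first recasts \cref{eq_gaussian_opt} as $\max_{\rvu}\Divergence(p_{\rvy,\rvu}\Vert p_{\rvy}p_{\rvu})$ subject to the $\Divergence$-constraint, using the MMSE identity and \citet[Lemma~68]{huang2019universal}; (ii) it then invokes \citet[Theorem~1]{bu2021sdp} as a black box to convert this into the SDP $\max_{\mathbf{A}}\trace{\Byx\tp\Byx\mathbf{A}}$ s.t.\ $\trace{\Bex\tp\Bex\mathbf{A}}\le\varepsilon$, $0\preceq\mathbf{A}\preceq\mathbf{I}$ with $\mathbf{A}=\Bxu\Bxu\tp$; and (iii) it finally reduces the SDP to the QCQP via the rank-$1$ structure of $\mathbf{A}$. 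You instead bypass the SDP entirely: your CCM chain-rule step ($\Byu=\Byx\Bxu$, $\Beu=\Bex\Bxu$) together with the explicit Schur-complement-plus-construction argument that $\{\Bux\}=\ball$ is precisely what the cited Bu--Wang--Zou theorem does under the hood for scalar $\rvu$, so you have effectively unpacked that citation. Your route is more self-contained and makes the role of the Markov structure $(\rvy,\rve)-\rvbx-\rvu$ explicit; the paper's route is more modular, delegating the Markov-chain bookkeeping to an existing result and keeping the proof shorter at the cost of an external dependence.
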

\noindent We note that $\rvba$ in \cref{eq_qcqp} has the same dimension as $\rvbx$, i.e., $d$. The following result, with a proof in \cref{subsec_proof_thm_2d}, shows that any $d$-dimensional QCQP in \cref{eq_qcqp} can be mapped to a 2-dimensional QCQP. 
\begin{proposition}[\qcqpsubspace]\label{thm_2d}
The optimal solution $\rvba^\star$ of the QCQP in \cref{eq_qcqp} lies in the subspace spanned by the vectors $\Byx$ and $\Bex$. 
\end{proposition}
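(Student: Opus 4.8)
The plan is a short orthogonal-projection argument. Set $V \defn \mathrm{span}\braces{\Byx, \Bex} \subseteq \Reals^d$, a subspace of dimension at most $2$, let $\proj_V$ denote orthogonal projection onto $V$, and for an arbitrary candidate vector write $\rvba = \rvba_\parallel + \rvba_\perp$ with $\rvba_\parallel \defn \proj_V(\rvba) \in V$ and $\rvba_\perp \perp V$. The goal is to show the projection step never hurts, so an optimizer may be taken inside $V$.

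First I would observe that both the objective and the constraint in \cref{eq_qcqp} see $\rvba$ only through $\rvba_\parallel$: since $\Byx, \Bex \in V$ and $\rvba_\perp \perp V$, we have $\biginner{\rvba}{\Byx} = \biginner{\rvba_\parallel}{\Byx}$ and $\biginner{\rvba}{\Bex} = \biginner{\rvba_\parallel}{\Bex}$. Second, by the Pythagorean identity $\twonorm{\rvba}^2 = \twonorm{\rvba_\parallel}^2 + \twonorm{\rvba_\perp}^2$, so $\rvba \in \ball$ implies $\twonorm{\rvba_\parallel} \le \twonorm{\rvba} \le 1$, i.e. $\rvba_\parallel \in \ball$. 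Combining the two, if $\rvba$ is feasible for \cref{eq_qcqp} then so is $\rvba_\parallel$, with the same objective value. Applying this with $\rvba = \rvba^\star$ an optimizer shows $\proj_V(\rvba^\star) \in V$ is again an optimizer, so without loss of optimality the QCQP admits a solution supported on $V$; choosing an orthonormal basis of $V$ then rewrites \cref{eq_qcqp} as a QCQP in at most two scalar variables, which is the claimed reduction.

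There is no real obstacle here; the one point worth flagging is the temptation to "use up" the remaining norm budget by rescaling $\rvba_\parallel$ to unit length to boost the objective — this is invalid, because rescaling also inflates $\biginner{\rvba_\parallel}{\Bex}^2$ and may violate the constraint $\le \varepsilon$, so the argument must rely only on the unscaled projection being feasible with unchanged objective. I would also quickly dispatch the degenerate cases: if $\Byx = \tbf{0}$ the objective is identically $0$ and $\rvba = \tbf{0} \in V$ is trivially optimal, and if $\Byx$ and $\Bex$ are collinear then $\dim V = 1$ and the same argument applies verbatim; in every case the optimum can be taken in $\mathrm{span}\braces{\Byx, \Bex}$, as stated.
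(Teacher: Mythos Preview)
Your proposal is correct and takes essentially the same orthogonal-projection approach as the paper: show that projecting any feasible $\rvba$ onto $\mathrm{span}\{\Byx,\Bex\}$ preserves the objective, preserves the constraint, and stays in $\ball$. Your write-up is in fact slightly cleaner than the paper's, which verifies the preservation of inner products by building two explicit orthonormal bases of the span rather than invoking $\rvba_\perp \perp V$ directly.
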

\noindent In \cref{subsec_characterizing_qcqp}, we characterize the optimal $\rvba^\star$ in \cref{eq_qcqp} as a function of $\Byx$, $\Bex$, and $\varepsilon$ for $d = 2$. \cref{thm_ibqcqp} demonstrates that considering the uncertainty in the canonical correlation matrix $b_{ex}$ is sufficient to capture the uncertainty in sensitive attributes, which we will now explore in detail.
\section{Theoretical Results}
\label{sec_main_results}
In this section, we provide a characterization of fair learning for Gaussian data given some uncertainty in sensitive attributes. Specifically, we study how to \emph{robustify} the QCQP in \cref{eq_qcqp} to ensure strict fairness guarantee with high probability, as well as how this robustification affects the optimal objective. 

Let $\BexEstimated$ be an estimate of $\Bex$, say  obtained from  $\cDu$, such that $\stwonorm{\Bex - \BexEstimated} \leq \tau$ (with probability $1 - \delta$), for some $\tau \geq 0$,\footnote{For ease of the exposition, we assume $\radius \leq \stwonorm{\Bex}$.} and denoted by $\Bex \in \ballunc$. To achieve fairness as in \cref{eq_qcqp} with probability $1-\delta$, in the worst case, $\normalinner{\rvba}{\B}^2 \leq \varepsilon$ should hold for all $\B \in \ballunc$. Then, the following robust optimization maximizes the desired objective while achieving fairness as in \cref{eq_qcqp} (with probability $1-\delta$) without the precise knowledge of $\Bex$:
\begin{align}
\max_{\rvba \in \ball}\! \normalinner{\rvba}{\Byx}\hspace{-0.0mm}^2 \qtext{s.t.}  \normalinner{\rvba}{\B}\hspace{-0.0mm}^2 \! \leq \varepsilon, ~\forall \B \in \ballunc. \label{eq_qcqp_limited_infinite_worse_ball}   
\end{align}
In the following proposition, we show that any $d$-dimensional robust QCQP in \cref{eq_qcqp_limited_infinite_worse_ball} can be mapped to a 2-dimensional QCQP. See \cref{subsec_proof_thm_2d_robust} for a proof.
\newcommand{\robustqcqpsubspace}{$d = 2$ suffices for robust QCQP}
\begin{proposition}[\robustqcqpsubspace]\label{thm_2d_robust}
The optimal solution $\truervba$ of the robust QCQP in \cref{eq_qcqp_limited_infinite_worse_ball} lies in the subspace spanned by the vectors $\Byx$ and $\BexEstimated$. 
\end{proposition}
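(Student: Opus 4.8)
The plan is to first collapse the semi-infinite family of constraints in \cref{eq_qcqp_limited_infinite_worse_ball} into a single deterministic constraint, and then run essentially the same projection argument used for \cref{thm_2d}. For a fixed $\rvba$, maximizing $\normalinner{\rvba}{\B}$ over $\B \in \ballunc$ gives, by Cauchy--Schwarz, $\normalinner{\rvba}{\BexEstimated} + \radius\twonorm{\rvba}$ (the extremal perturbation points along $\rvba$), and similarly the minimum is $\normalinner{\rvba}{\BexEstimated} - \radius\twonorm{\rvba}$. Hence $\sup_{\B \in \ballunc}\normalinner{\rvba}{\B}^2 = \bigparenth{\abs{\normalinner{\rvba}{\BexEstimated}} + \radius\twonorm{\rvba}}^2$, so ``$\normalinner{\rvba}{\B}^2 \leq \varepsilon$ for all $\B \in \ballunc$'' is equivalent to $\abs{\normalinner{\rvba}{\BexEstimated}} + \radius\twonorm{\rvba} \leq \sqrt{\varepsilon}$, and \cref{eq_qcqp_limited_infinite_worse_ball} becomes
\begin{align*}
\max_{\rvba \in \ball} \normalinner{\rvba}{\Byx}^2 \qtext{s.t.} \abs{\normalinner{\rvba}{\BexEstimated}} + \radius\twonorm{\rvba} \leq \sqrt{\varepsilon}.
\end{align*}
The feasible set is compact and contains $\rvba = \tbf{0}$, so an optimal $\truervba$ exists.

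Next I would set $V \defn \mathrm{span}\{\Byx, \BexEstimated\}$, a subspace of dimension at most $2$, and write any $\rvba = \rvba_\parallel + \rvba_\perp$ with $\rvba_\parallel = \proj_V(\rvba)$ and $\rvba_\perp \perp V$. Three observations: (i) the objective is unchanged, $\normalinner{\rvba}{\Byx} = \normalinner{\rvba_\parallel}{\Byx}$; (ii) $\twonorm{\rvba_\parallel}^2 = \twonorm{\rvba}^2 - \twonorm{\rvba_\perp}^2 \leq \twonorm{\rvba}^2 \leq 1$, so $\rvba_\parallel \in \ball$; (iii) since $\normalinner{\rvba}{\BexEstimated} = \normalinner{\rvba_\parallel}{\BexEstimated}$ and $\twonorm{\rvba_\parallel} \leq \twonorm{\rvba}$, the constraint value only decreases, $\abs{\normalinner{\rvba_\parallel}{\BexEstimated}} + \radius\twonorm{\rvba_\parallel} \leq \abs{\normalinner{\rvba}{\BexEstimated}} + \radius\twonorm{\rvba}$. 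Hence if $\rvba$ is feasible, so is $\rvba_\parallel$, with the same objective value.

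Applying this to the optimal $\truervba$, its projection $\truervba_\parallel = \proj_V(\truervba)$ is feasible and attains the same (optimal) objective, so it is itself an optimal solution lying in $V$; equivalently, there is no loss of generality in restricting \cref{eq_qcqp_limited_infinite_worse_ball} to $\rvba \in V$, which reduces it to a $2$-dimensional problem. I do not expect a real obstacle here: the one nontrivial step is the Cauchy--Schwarz reduction of the robust (semi-infinite) constraint to the closed form $\abs{\normalinner{\rvba}{\BexEstimated}} + \radius\twonorm{\rvba} \leq \sqrt{\varepsilon}$, and once that is in place the argument mirrors the proof of \cref{thm_2d}. (Alternatively, one can skip the reduction and argue directly with the semi-infinite form: for $\B \in \ballunc$, its $V$-projection $\B_\parallel$ again lies in $\ballunc$ because projecting onto a subspace containing $\BexEstimated$ is a contraction toward $\BexEstimated$, and $\normalinner{\rvba_\parallel}{\B} = \normalinner{\rvba_\parallel}{\B_\parallel} = \normalinner{\rvba}{\B_\parallel}$, so feasibility of $\rvba$ transfers to $\rvba_\parallel$.)
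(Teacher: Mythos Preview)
Your proof is correct and follows essentially the same projection argument as the paper: both show that projecting a feasible $\rvba$ onto $\mathrm{span}\{\Byx,\BexEstimated\}$ preserves the objective, keeps $\rvba$ in $\ball$, and maintains feasibility of the robust constraint. The only cosmetic difference is that you first collapse the semi-infinite constraint to the closed form $\abs{\normalinner{\rvba}{\BexEstimated}}+\radius\twonorm{\rvba}\le\sqrt{\varepsilon}$ via Cauchy--Schwarz and then project, whereas the paper works directly with the semi-infinite form, writing $\B=\BexEstimated+\Delta\B$ and bounding $\normalinner{\proj(\rvba)}{\B}\le\normalinner{\rvba}{\BexEstimated}+\twonorm{\rvba}\,\radius\le\sqrt{\varepsilon}$ (the last inequality being exactly your Cauchy--Schwarz reduction applied at the extremal $\Delta\B$); the underlying logic is the same.
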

\begin{figure*}[t]
\centering
\begin{tabular}{cc}
\adjustbox{valign=t}
{\begin{tikzpicture}[scale=2.0, declare function={axis = 1; R = 1; phi = 15; alpha = 60; remin = 1.4; remax = 1.8;},
dot/.style={circle,fill,inner sep=1.5pt},
]
\centering
\begin{scope}[nodes={dot}]
\draw[name path=circ]   (0,0) coordinate (origin) circle[radius=R]; 
\draw[name path=yaxis, black] (0,-axis) -- (0,axis);
\draw[name path=xaxis, black] (-axis,0) -- (axis,0) coordinate (xplus);
\draw[name path=ori_to_eminus_upper, thick, orange, dotted] (phi:remin) node[label={[label distance=-16mm]290:\rotatebox{40}{\normalsize$(\hat{r}_e \!-\! \Delta, \phi)$}}] (eminusupper){}  -- (origin);
\draw[name path=ori_to_eminus_lower, thick, orange, dotted] (-phi:remin) node[label={[label distance=-10mm]270:\rotatebox{40}{\normalsize$(\hat{r}_e \!-\! \Delta, \!-\phi)$}}] (eminuslower){}  -- (origin);
\draw[name path=ori_to_eplus_upper,  thick, densely dashed, orange] (phi:remax) node[blue, label={[label distance=-16mm]290:\rotatebox{40}{\normalsize$(\hat{r}_e \!+\! \Delta, \phi)$}}](eplusupper){} -- (eminusupper);
\draw[name path=ori_to_eplus_lower,  thick, densely dashed, orange] (-phi:remax) node[green, label={[label distance=-10mm]270:\rotatebox{40}{\normalsize$(\hat{r}_e \!+\! \Delta, \!-\phi)$}}] (epluslower){}  -- (eminuslower);
\node[magenta] (thirdnode) at (0:2.0){};
\foreach \phii in {0,5,...,15}
{
    \draw[very thick, red] (alpha+\phii:R)  -- (-alpha+\phii:R);
    \draw[very thick, red] (180-alpha+\phii:R) -- (180+alpha+\phii:R);
    \draw[very thick, red] (origin) ++(alpha+\phii:R) arc (alpha+\phii:180-alpha+\phii:R);
    \draw[very thick, red] (origin) ++(180+alpha+\phii:R) arc (180+alpha+\phii:360-alpha+\phii:R);
}
\foreach \phii in {0,-5,...,-15}
{
    \draw[very thick, red] (alpha+\phii:R)  -- (-alpha+\phii:R);
    \draw[very thick, red] (180-alpha+\phii:R) -- (180+alpha+\phii:R);
    \draw[very thick, red] (origin) ++(alpha+\phii:R) arc (alpha+\phii:180-alpha+\phii:R);
    \draw[very thick, red] (origin) ++(180+alpha+\phii:R) arc (180+alpha+\phii:360-alpha+\phii:R);
}
\draw[ultra thick, densely dashed, orange] (origin) ++(phi:remin) arc (phi:-phi:remin);
\draw[ultra thick, orange] (origin) ++(phi:remax) arc (phi:-phi:remax);
\end{scope}
\begin{scope}[on background layer]
\draw[fill=red,opacity=.0,fill opacity=.2] (0,0) -- (phi:{cos{alpha}}) -- (phi+alpha:1) -- cycle;
\draw[fill=red,opacity=.0,fill opacity=.2] (0,0) -- (180-phi:{cos{alpha}}) -- (180-phi-alpha:1) -- cycle;
\draw[fill=red,opacity=.0,fill opacity=.2] (0,0) -- (-phi:{cos{alpha}}) -- (-phi-alpha:1) -- cycle;
\draw[fill=red,opacity=.0,fill opacity=.2] (0,0) -- (180+phi:{cos{alpha}}) -- (180+phi+alpha:1) -- cycle;
\draw[fill=red,opacity=.0,fill opacity=.2] (0,0) -- (phi+alpha:R) -- (180-phi-alpha:R) -- cycle;
\draw[fill=red,opacity=.0,fill opacity=.2]  ++(phi+alpha:R) arc (phi+alpha:180-phi-alpha:R);
\draw[fill=red,opacity=.0,fill opacity=.2] (0,0) -- (-phi-alpha:R) -- (180+phi+alpha:R) -- cycle;
\draw[fill=red,opacity=.0,fill opacity=.2]  ++(360-phi-alpha:R) arc (360-phi-alpha:180+phi+alpha:R);
\draw[fill=red,opacity=.0,fill opacity=.2] (0,0) -- (phi:{cos{alpha}}) -- (-phi:{cos{alpha}}) -- cycle;
\draw[fill=red,opacity=.0,fill opacity=.2]  ++(phi:{cos{alpha}}) arc (phi:-phi:{cos{alpha}});
\draw[fill=red,opacity=.0,fill opacity=.2] (0,0) -- (180-phi:{cos{alpha}}) -- (180+phi:{cos{alpha}}) -- cycle;
\draw[fill=red,opacity=.0,fill opacity=.2]  ++(180-phi:{cos{alpha}}) arc (180-phi:180+phi:{cos{alpha}});
\end{scope}
\end{tikzpicture}}
&
\adjustbox{valign=t}{\begin{tikzpicture}[scale=1.9, declare function={axis = 1; R = 1; phi = 30; alpha = 63; remin = 1.2; remax = 2.0;yintercept = 0.9;},
dot/.style={circle,fill,inner sep=1.5pt},
]
\centering
\begin{scope}[nodes={dot}]
\draw[name path=circ]   (0,0) coordinate (origin) circle[radius=R]; 
\draw[name path=yaxis, black] (0,-axis) -- (0,axis);
\draw[name path=xaxis, black] (-axis,0) -- (axis,0) coordinate (xplus);
\draw[name path=ori_to_eminus_upper, orange, thick, dotted] (phi:remin) node[label={[label distance=-15mm]90:\rotatebox{30}{\normalsize$(\hat{r}_e \!-\! \Delta, \phi)$}}] (eminusupper){}  -- (origin);
\draw[name path=ori_to_eminus_lower, orange, thick, dotted] (-phi:remin) node[label={[label distance=-16mm]90:\rotatebox{30}{\normalsize$(\hat{r}_e \!-\! \Delta, \!-\phi)$}}] (eminuslower){}  -- (origin);
\draw[name path=ori_to_eplus_upper, thick, orange, densely dashed] (phi:remax) node[blue, label={[label distance=-16mm]90:\rotatebox{30}{\normalsize$(\hat{r}_e \!+\! \Delta, \phi)$}}](eplusupper){} -- (eminusupper);
\draw[name path=ori_to_eplus_lower, thick, orange, densely dashed] (-phi:remax) node[green, label={[label distance=-17.5mm]270:\rotatebox{30}{\normalsize$(\hat{r}_e \!+\! \Delta, \!-\phi)$}}] (epluslower){}  -- (eminuslower);
\node[magenta] (thirdnode) at (0:2.2){};
\foreach \phii in {0,5,...,30}
{
    \draw[very thick, red] (alpha+\phii:R)  -- (-alpha+\phii:R);
    \draw[very thick, red] (180-alpha+\phii:R) -- (180+alpha+\phii:R);
    \draw[very thick, red] (origin) ++(alpha+\phii:R) arc (alpha+\phii:180-alpha+\phii:R);
    \draw[very thick, red] (origin) ++(180+alpha+\phii:R) arc (180+alpha+\phii:360-alpha+\phii:R);
}
\foreach \phii in {0,-5,...,-30}
{
    \draw[very thick, red] (alpha+\phii:R)  -- (-alpha+\phii:R);
    \draw[very thick, red] (180-alpha+\phii:R) -- (180+alpha+\phii:R);
    \draw[very thick, red] (origin) ++(alpha+\phii:R) arc (alpha+\phii:180-alpha+\phii:R);
    \draw[very thick, red] (origin) ++(180+alpha+\phii:R) arc (180+alpha+\phii:360-alpha+\phii:R);
}
\draw[ultra thick, densely dashed, orange] (origin) ++(phi:remin) arc (phi:-phi:remin);
\draw[ultra thick, orange] (origin) ++(phi:remax) arc (phi:-phi:remax);
\end{scope}
\begin{scope}[on background layer]
\draw[fill=red,opacity=.0,fill opacity=.2] (0,0) -- (phi:{cos{alpha}}) -- (0,yintercept) -- cycle;
\draw[fill=red,opacity=.0,fill opacity=.2] (0,0) -- (180-phi:{cos{alpha}}) -- (0,yintercept) -- cycle;
\draw[fill=red,opacity=.0,fill opacity=.2] (0,0) -- (-phi:{cos{alpha}}) -- (0,-yintercept) -- cycle;
\draw[fill=red,opacity=.0,fill opacity=.2] (0,0) -- (180+phi:{cos{alpha}}) -- (0,-yintercept) -- cycle;
\draw[fill=red,opacity=.0,fill opacity=.2] (0,0) -- (phi:{cos{alpha}}) -- (-phi:{cos{alpha}}) -- cycle;
\draw[fill=red,opacity=.0,fill opacity=.2]  ++(phi:{cos{alpha}}) arc (phi:-phi:{cos{alpha}});
\draw[fill=red,opacity=.0,fill opacity=.2] (0,0) -- (180-phi:{cos{alpha}}) -- (180+phi:{cos{alpha}}) -- cycle;
\draw[fill=red,opacity=.0,fill opacity=.2]  ++(180-phi:{cos{alpha}}) arc (180-phi:180+phi:{cos{alpha}});
\end{scope}
\end{tikzpicture}}
\\
(a) Feasible space when $\sqrt{\varepsilon} > (\hre+ \Delta) \sin{\phi}$.
&
(b) Feasible space when $\sqrt{\varepsilon} < (\hre+ \Delta) \sin{\phi}$.
\end{tabular}
\caption{Visualizing the feasible space of the robust QCQP in \cref{thm_ibqcqp_limited_infinite}, i.e., \cref{eq_qcqp_limited_infinite}, for $\varepsilon = 0.9$, $\hre = 1.6$, and $\hthetae = 0$. We set $\Delta = 0.2$ and $\phi = \pi/12$ for panel $(a)$, and $\Delta = 0.4$ and $\phi = \pi/6$ for panel $(b)$. Each point is shown in polar coordinates, i.e., a point $(r,\theta)$ denotes $(r\cos{\theta}, r\sin{\theta})$. The annular sector $\cS$ is the region enclosed by dashed lines, dashed arc, and solid arc in orange. The arc $\bcS$ is the solid arc in orange. The shaded region is the feasible space. The points $\Bex^{(1)}$, $\Bex^{(2)}$, and $\Bex^{(3)}$ from \cref{prop_ibqcqp_limited} are shown in magenta, blue and green, respectively.}
\label{fig_proofs_1}
\vspace{-5mm}
\end{figure*}
Now, to characterize the solution of the robust QCQP in \cref{eq_qcqp_limited_infinite_worse_ball}, we focus on $d = 2$ and work with polar coordinates. Further, to analyze the corresponding feasible space, we relax the uncertainty space $\ballunc$ from a ball to an annular sector. Formally, we let $\BexEstimated \defn \polar{\hre}{\hthetae}$ be the estimate of $\Bex$ such that $\normalabs{\re - \hre} \leq \Delta$ and $\normalabs{\thetae - \hthetae} \leq \phiu$ with probability $1-\delta$ where $\Deltal \defn \radius \geq 0$ and $\phil \defn \sin^{-1}(\radius / \BexEstimated) \in [0,\pi/2]$. In other words, given $\hre, \hthetae, \Delta$, and $\phi$, with probability $1-\delta$,
\begin{align}
    \Bex \in \cS \defn \normalbraces{\B = \polar{r}{\theta}\colon |r - \hre| \leq  \Deltau \stext{and} |\theta - \hthetae| \leq  \phiu},  \label{eq_angular_sector}
\end{align}
i.e., $\cS (\supset \ballunc)$ denotes the smallest annular sector around $\BexEstimated$ capturing our uncertainty in knowing $\Bex$ (see \cref{fig_proofs_1} where $\cS$ is the shown in orange). Now, to achieve fairness as in \cref{eq_qcqp} (with probability $1-\delta$), we constrain the robust QCQP in \cref{eq_qcqp_limited_infinite_worse_ball} as follows:
\begin{align}
\max_{\rvba \in \ball}\! \normalinner{\rvba}{\Byx}^2 \qtext{s.t.}  \biginner{\rvba}{\B}^2 \! \leq \varepsilon,  ~\forall \B \!\in\! \cS. \label{eq_qcqp_limited_infinite_worse}    
\end{align}
As we show below, the constraint in \cref{eq_qcqp_limited_infinite_worse} is equivalent to ensuring $\normalinner{\rvba}{\B}^2 \leq \varepsilon$ for all $\B \in \bcS$ where $\bcS$ is the arc on the boundary of the angular sector $\cS$ with maximum radius (shown in solid orange in \cref{fig_proofs_1}). See \cref{sec_proof_qcqpinf} for a proof. Further, in \cref{subsec_characterizing_qcqp_limited_infinite}, we characterize the optimal $\rvba$ in \cref{eq_qcqp_limited_infinite_worse} as a function of $\BexEstimated$, $\Byx$, $\Delta$, $\phi$, and $\varepsilon$.
\newcommand{\qcqpinf}{Robust QCQP with infinite constraints}
\begin{theorem}[\qcqpinf]\label{thm_ibqcqp_limited_infinite}
Let $\bcS \defn \normalbraces{\B: \B = \polar{(\hre+\Deltau)}{\theta}$ $\stext{and} |\theta - \hthetae| \leq  \phiu}$ be the arc on the boundary of $\cS$ with maximum radius. Then, the robust QCQP in \cref{eq_qcqp_limited_infinite_worse} is equivalent to
\begin{align}
\max_{\rvba \in \ball}\! \normalinner{\rvba}{\Byx}^2 \qtext{s.t.}  \biginner{\rvba}{\B}^2 \! \leq \varepsilon, ~\forall \B \!\in\! \bcS. \label{eq_qcqp_limited_infinite}  
\end{align}
\end{theorem}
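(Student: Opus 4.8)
The plan is to reduce the claim to a pointwise statement about the constraint function. Both \cref{eq_qcqp_limited_infinite_worse} and \cref{eq_qcqp_limited_infinite} maximize the same objective $\normalinner{\rvba}{\Byx}^2$ over the same set $\ball$, so it suffices to show that the two semi-infinite constraints cut out the same feasible set, i.e. that for every $\rvba \in \ball$ one has $\sup_{\B\in\cS}\langle\rvba,\B\rangle^2 = \sup_{\B\in\bcS}\langle\rvba,\B\rangle^2$. The inequality ``$\geq$'' is immediate from $\bcS \subseteq \cS$, so only ``$\leq$'' needs an argument.

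For the nontrivial direction, fix $\rvba \in \ball$ and write it in polar form as $\rvba = \rho(\cos\psi,\sin\psi)$ with $0 \le \rho \le 1$. Recalling from \cref{eq_angular_sector} that a generic $\B \in \cS$ has the form $\B = r(\cos\theta,\sin\theta)$ with $r \in [\hre-\Delta,\hre+\Delta]$ and $\theta \in [\hthetae-\phi,\hthetae+\phi]$, one gets $\langle\rvba,\B\rangle = \rho\, r\cos(\theta-\psi)$ and hence
\begin{align}
\langle\rvba,\B\rangle^2 = \rho^2\, r^2\, \cos^2(\theta-\psi). \nonumber
\end{align}
Since $\phil \defn \sin^{-1}(\radius/\BexEstimated)$ is well-defined (equivalently, by the standing assumption $\radius \le \stwonorm{\Bex}$), we have $\radius = \Delta \le \hre$, so $r \ge \hre-\Delta \ge 0$ on all of $\cS$; consequently, for each fixed $\theta$, the right-hand side above is nondecreasing in $r$ and is maximized at $r = \hre+\Delta$, which is precisely the radius defining $\bcS$. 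Taking the supremum over $r$ first and then over $\theta$ therefore gives
\begin{align}
\sup_{\B\in\cS}\langle\rvba,\B\rangle^2 = \sup_{|\theta-\hthetae|\le\phi}\rho^2(\hre+\Delta)^2\cos^2(\theta-\psi) = \sup_{\B\in\bcS}\langle\rvba,\B\rangle^2, \nonumber
\end{align}
which is the claimed identity. Hence $\langle\rvba,\B\rangle^2\le\varepsilon$ for all $\B\in\cS$ if and only if $\langle\rvba,\B\rangle^2\le\varepsilon$ for all $\B\in\bcS$, so the feasible sets — and therefore the two optimization problems — coincide.

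There is no deep obstacle here: once the uncertainty region has been relaxed from the ball $\ballunc$ to the annular sector $\cS$, the constraint function factors into a radial part $r^2$ that is monotone on the sector and an angular part that does not couple to $r$, so pushing $r$ to its extreme value $\hre+\Delta$ is always optimal. The only point requiring care is ensuring the radial coordinate stays nonnegative on $\cS$ — which is exactly what makes $r\mapsto r^2$ monotone there — and this follows from the well-definedness of the angular half-width $\phi \le \pi/2$ (equivalently, from $\radius \le \stwonorm{\Bex}$). Beyond that, the write-up is just bookkeeping to make sure $\cS$ and $\bcS$ match the sets named in the statement and that the suprema above are genuinely attained (both are over compact sets with a continuous integrand, so maxima exist, though only the suprema are needed for the ``$\leq \varepsilon$'' reformulation).
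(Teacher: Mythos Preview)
Your proof is correct and follows essentially the same approach as the paper: both arguments reduce to the observation that for fixed angle $\theta$, the map $r \mapsto \langle\rvba,\B\rangle^2 = r^2\langle\rvba,(\cos\theta,\sin\theta)\rangle^2$ is nondecreasing on $[\hre-\Delta,\hre+\Delta]$, so the constraint at the outer radius $\hre+\Delta$ dominates. The paper phrases this as the containment $\ParameterSetTrue(\B_1)\subseteq\ParameterSetTrue(\B_2)$ whenever $\B_1,\B_2$ share an angle and $r_1\ge r_2$, while you phrase it as $\sup_{\cS}=\sup_{\bcS}$; these are equivalent, and your write-up is in fact slightly more explicit about why $r\ge 0$ on the sector.
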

\noindent There is a phase transition in the nature of the feasible space of the robust QCQP in \cref{eq_qcqp_limited_infinite}. \cref{fig_proofs_1}(a) illustrates the space if $\sqrt{\varepsilon} \!\geq\! (\hre+ \Delta) \sin{\phi}$, and \cref{fig_proofs_1}(b) illustrates the space if $\sqrt{\varepsilon} \!\leq\! (\hre+ \Delta) \sin{\phi}$.

While \cref{thm_ibqcqp_limited_infinite} simplifies the optimization in \cref{eq_qcqp_limited_infinite_worse}, the resulting optimization in \cref{eq_qcqp_limited_infinite} still has infinite constraints. Below, we provide an approximation to the feasible space in \cref{eq_qcqp_limited_infinite} such that it has finitely many constraints. See \cref{subsec_proof_lemma_ibqcqp_approx} for a proof. We note that alternative approximations are possible.
\newcommand{\approxqcqp}{Robust QCQP with 3 constraints}
\begin{theorem}[\approxqcqp]\label{prop_ibqcqp_limited}
Let $\Bex^{(1)}=$ $\polar{\frac{(\hre+\Deltau)}{\cos{\phi}}}{\hthetae},$ $\Bex^{(2)}= $ $\polar{(\hre+\Deltau)}{(\hthetae+\phiu)}, \stext{and} \Bex^{(3)} = \polar{(\hre+\Deltau)}{(\hthetae-\phil)}$. Then, the feasible space of the optimization below is a subset of the feasible space of the optimization in \cref{eq_qcqp_limited_infinite}:
\begin{align}
    \max_{\rvba \in \ball} \biginner{\rvba}{\Byx}^2 \qtext{s.t}  \biginner{\rvba}{\Bex^{(i)}}^2 \leq \varepsilon \stext{ for all} i \in [3]. \label{eq_qcqp_limited_approx}
\end{align}
\end{theorem}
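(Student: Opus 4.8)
The plan is to prove the pointwise feasibility inclusion: any $\rvba$ satisfying $\biginner{\rvba}{\Bex^{(i)}}^2 \le \varepsilon$ for all $i\in[3]$ also satisfies $\biginner{\rvba}{\B}^2 \le \varepsilon$ for every $\B \in \bcS$ (recall $\bcS$ from \cref{thm_ibqcqp_limited_infinite}). Since the ball constraint $\rvba\in\ball$ and the objective $\normalinner{\rvba}{\Byx}^2$ are identical in \cref{eq_qcqp_limited_approx} and \cref{eq_qcqp_limited_infinite}, this inclusion is exactly what is needed. Both constraint systems involve only the arc vectors, and since $\normalinner{Q\tp\rvba}{\B} = \normalinner{\rvba}{Q\B}$ for any rotation $Q$, the statement is invariant under simultaneously rotating $\BexEstimated$ and $\bcS$; hence I would assume without loss of generality that $\hthetae = 0$ and abbreviate $\rho \defn \hre+\Deltau$, so that $\bcS = \{\rho(\cos\theta,\sin\theta)\colon \theta\in[-\phi,\phi]\}$, while $\Bex^{(1)} = (\rho\sec\phi,0)$, $\Bex^{(2)} = \rho(\cos\phi,\sin\phi)$, $\Bex^{(3)} = \rho(\cos\phi,-\sin\phi)$.

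The core step is the geometric claim that $\bcS$ lies inside the triangle $\mathrm{conv}\{\Bex^{(1)},\Bex^{(2)},\Bex^{(3)}\}$, i.e., for each $\theta\in[-\phi,\phi]$ the point $\B(\theta)\defn\rho(\cos\theta,\sin\theta)$ is a convex combination $\lambda_1\Bex^{(1)}+\lambda_2\Bex^{(2)}+\lambda_3\Bex^{(3)}$ with $\lambda_i\ge0$. I would prove this by describing the three edges of the triangle as half-planes and checking $\B(\theta)$ against each: the edge through $\Bex^{(2)},\Bex^{(3)}$ is $\{x=\rho\cos\phi\}$ and $\rho\cos\theta\ge\rho\cos\phi$; the edge through $\Bex^{(1)},\Bex^{(2)}$ is $\{x+y\tan\phi=\rho\sec\phi\}$, and $\rho\cos\theta+\rho\sin\theta\tan\phi = \rho\sec\phi\,\cos(\theta-\phi)\le\rho\sec\phi$; the edge through $\Bex^{(1)},\Bex^{(3)}$ is handled symmetrically using $\cos(\theta+\phi)\le1$. (Equivalently one can exhibit the barycentric coordinates explicitly, $\lambda_1 = \cos\phi(\cos\theta-\cos\phi)/\sin^2\phi$, $\lambda_2 = (1-\cos(\theta+\phi))/(2\sin^2\phi)$, $\lambda_3 = (1-\cos(\theta-\phi))/(2\sin^2\phi)$, verify they are nonnegative and sum to $1$ when $\phi\in(0,\pi/2]$, and note the $\phi=0$ case is trivial since then $\bcS=\{\Bex^{(2)}\}=\{\Bex^{(1)}\}$.) This is precisely where the radius $\frac{\hre+\Deltau}{\cos\phi}$ of $\Bex^{(1)}$ enters — any smaller radius lets the arc protrude outside the triangle near $\theta=0$.

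Given this representation the conclusion is immediate: for $\rvba$ feasible in \cref{eq_qcqp_limited_approx} and any $\B=\B(\theta)\in\bcS$,
\[
\bigabs{\normalinner{\rvba}{\B}} = \Bigabs{\textstyle\sum_{i\in[3]}\lambda_i\normalinner{\rvba}{\Bex^{(i)}}} \le \textstyle\sum_{i\in[3]}\lambda_i\bigabs{\normalinner{\rvba}{\Bex^{(i)}}} \le \bigparenth{\textstyle\sum_{i\in[3]}\lambda_i}\sqrt{\varepsilon} = \sqrt{\varepsilon},
\]
so $\normalinner{\rvba}{\B}^2\le\varepsilon$, i.e., $\rvba$ is feasible in \cref{eq_qcqp_limited_infinite}. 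Hence the feasible set of \cref{eq_qcqp_limited_approx} is contained in that of \cref{eq_qcqp_limited_infinite}. The only nontrivial ingredient is the triangle-containment claim; the reduction to feasibility, the rotation normalization, and the final averaging bound are all routine, so I expect the mild obstacle to be just the verification of the three half-plane inequalities (equivalently, nonnegativity of the $\lambda_i$).
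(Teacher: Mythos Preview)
Your proof is correct and arguably cleaner than the paper's. The key difference is the space in which you work: you argue in ``constraint space'' by showing the arc $\bcS$ lies in the triangle $\mathrm{conv}\{\Bex^{(1)},\Bex^{(2)},\Bex^{(3)}\}$, and then any feasible $\rvba$ automatically satisfies the arc constraints via the convex-combination bound $|\normalinner{\rvba}{\B}|\le\sum_i\lambda_i|\normalinner{\rvba}{\Bex^{(i)}}|\le\sqrt{\varepsilon}$. The paper instead works directly in ``$\rvba$-space'': it compares the feasible regions $\cap_{\B\in\bcS}\ParameterSetTrue(\B)$ and $\cap_{i\in[3]}\ParameterSetTrue(\Bex^{(i)})$ by tracing their boundaries geometrically (straight-line pieces coming from the endpoints $\Bex^{(2)},\Bex^{(3)}$, arc pieces at radius $\sqrt{\varepsilon}/(\hre+\Delta)$, with $\Bex^{(1)}$ chosen so its constraint strip passes through the arc endpoints), leaning heavily on \cref{fig_proofs_qcqp_limited_infinite} and \cref{fig_proofs_qcqp_limited_approx}. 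Your route is more self-contained and does not rely on pictures; the paper's route makes the role of $\Bex^{(1)}$ visually transparent but is closer to a proof sketch.
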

\noindent See \cref{subsec_characterizing_qcqp_limited_approx} for a characterization of the optimal $\rvba$ in \cref{eq_qcqp_limited_approx} as a function of $\BexEstimated$, $\Byx$, $\Delta$, $\phi$, and $\varepsilon$. We visualize $\Bex^{(1)}$, $\Bex^{(2)}$, and $\Bex^{(3)}$ in \cref{fig_proofs_1} (in magenta, blue, and green, respectively), and note that $\Bex^{(1)}$ approximates the effect of the points in-between $\Bex^{(2)}$ and $\Bex^{(3)}$ on $\cS$. 

\subsection{Uncertainty due to randomly missing sensitive attributes}
Now, we focus on understanding how the optimal objective of the robust QCQP in \cref{eq_qcqp_limited_approx} changes when the uncertainty set $\cS$ changes. For concreteness, we consider the case where uncertainty only stems from sensitive attributes missing at random, and the uncertainty can be improved by collecting more labeled sensitive attributes. Then, we analyze the power of each new labeled sensitive attribute by characterizing the exact difference in the optimal objectives of the  QCQP in \cref{eq_qcqp} and the robust QCQP in \cref{eq_qcqp_limited_approx}. To enable this, we show (in \cref{sec_mono_perf}) that the optimal objective in \cref{eq_qcqp_limited_approx} either monotonically increases or coincides with the optimal objective in \cref{eq_qcqp} whenever the uncertainty set monotonically decreases with $n$.  
 
The response of our algorithm to collecting more labeled sensitive attributes can be classified into 3 categories (see the formal result in \cref{subsec_power_each_sample}): $(a)$ \textit{Any uncertainty hurts:} Here, the optimal objective of the robust QCQP in \cref{eq_qcqp_limited_approx} matches the optimal objective in \cref{eq_qcqp} only when all the uncertainty is removed, i.e., $n \to N$; $(b)$ \textit{Some uncertainty does not hurt:}  Here, the optimal objective of the robust QCQP in \cref{eq_qcqp_limited_approx} matches  the optimal objective in \cref{eq_qcqp} when some uncertainty is removed, i.e., by collecting few extra labeled sensitive attributes; $(c)$ \textit{Uncertainty does not hurt:} Here, the optimal objective of the robust QCQP in \cref{eq_qcqp_limited_approx} matches  the optimal objective in \cref{eq_qcqp} without collecting any extra labeled sensitive attributes. We state this informally below (see \cref{lemma_power_new_sample} for a formal statement).
\begin{corollary}[Free fairness]
\label{coro_free_fairness}
There exist problem instances of the robust QCQP in \eqref{eq_qcqp_limited_approx} where the uncertainty incurs no performance loss while achieving a strict fairness guarantee, without requiring additional labeled sensitive attributes. 
\end{corollary}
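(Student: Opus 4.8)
The plan is to prove the corollary by producing an explicit family of witnesses. An instance of the robust QCQP in \cref{eq_qcqp_limited_approx} is determined by the tuple $(\BexEstimated,\Byx,\Delta,\phi,\varepsilon)$, so it suffices to exhibit one such tuple with $\Delta>0$ and $\phi\in(0,\pi/2)$ --- so that the uncertainty set $\cS$ genuinely has positive area --- for which the optimal value of \cref{eq_qcqp_limited_approx} equals the optimal value of the oracle QCQP in \cref{eq_qcqp} that knows the true $\Bex$. I would work with the $d=2$, polar-coordinate form in which \cref{eq_qcqp_limited_approx} is already stated, writing $\Byx=\polar{\ry}{\thetay}$. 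The first observation is that over $\ball$ the objective $\normalinner{\rvba}{\Byx}^2$ is at most $\ry^2$, so both optimal values lie in $[0,\ry^2]$; hence they agree whenever at least one optimizer of the oracle QCQP happens to be feasible for all three constraints of \cref{eq_qcqp_limited_approx}. The whole construction is arranged so that this occurs.

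Concretely, I would put the true $\Bex$ at the outer corner $\Bex^{(2)}=\polar{(\hre+\Delta)}{(\hthetae+\phi)}$ of $\cS$ --- so $\Bex\in\cS$ automatically --- choose $\Byx=\polar{\ry}{(\hthetae+\phi)}$ parallel to $\Bex^{(2)}$, and fix any $\varepsilon\in\bigl(0,(\hre+\Delta)^2\bigr)$. Since $\Byx$ and $\Bex$ point in the same direction, the oracle QCQP is solved by pushing $\rvba$ along that direction until the fairness constraint binds; one optimizer is $\rvba^\star=\polar{\tfrac{\sqrt\varepsilon}{\hre+\Delta}}{(\hthetae+\phi)}\in\ball$, with optimal value $\tfrac{\ry^2\varepsilon}{(\hre+\Delta)^2}<\ry^2$, so the fairness constraint is active (this is not the vacuous regime). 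Because $\Bex=\Bex^{(2)}$ is literally one of the three constraints in \cref{eq_qcqp_limited_approx}, the robust feasible set is contained in the oracle feasible set, so the robust optimal value is at most $\tfrac{\ry^2\varepsilon}{(\hre+\Delta)^2}$.

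The remaining step is the routine verification that this particular $\rvba^\star$ satisfies all three robust constraints, a short polar-coordinate computation using $\normalinner{\polar{s}{\alpha}}{\polar{t}{\beta}}=st\cos(\alpha-\beta)$: $\normalinner{\rvba^\star}{\Bex^{(2)}}^2=\varepsilon$ by construction; the angle between $\rvba^\star$ and $\Bex^{(3)}=\polar{(\hre+\Delta)}{(\hthetae-\phi)}$ is $2\phi$, so $\normalinner{\rvba^\star}{\Bex^{(3)}}^2=\varepsilon\cos^2(2\phi)\le\varepsilon$; and the angle between $\rvba^\star$ and $\Bex^{(1)}=\polar{\tfrac{\hre+\Delta}{\cos\phi}}{\hthetae}$ is $\phi$, so $\normalinner{\rvba^\star}{\Bex^{(1)}}^2=\bigl(\tfrac{\sqrt\varepsilon}{\hre+\Delta}\cdot\tfrac{\hre+\Delta}{\cos\phi}\cdot\cos\phi\bigr)^2=\varepsilon$. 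Thus $\rvba^\star$ is robust-feasible and attains the oracle value, so the two optimal values coincide --- the uncertainty costs nothing in objective --- while the robust formulation still enforces $\normalinner{\rvba}{\B}^2\le\varepsilon$ for every $\B\in\cS$ (\cref{thm_ibqcqp_limited_infinite,prop_ibqcqp_limited}), in particular for the true $\Bex$, which is the strict fairness guarantee; and nothing in the argument asked for a smaller $\cS$, i.e., no extra labeled sensitive attributes. I would also record the even simpler degenerate witness obtained by taking $\Byx$ orthogonal to the angular bisector of $\cS$ and $\varepsilon\ge(\hre+\Delta)^2\sin^2\phi$: there $\polar{1}{\thetay}$ is orthogonal to $\Bex^{(1)}$ and satisfies $\normalinner{\polar{1}{\thetay}}{\B}^2\le(\hre+\Delta)^2\sin^2\phi\le\varepsilon$ for every $\polar{r}{\theta}=\B\in\cS$ (which includes $\Bex^{(2)},\Bex^{(3)}$ and the true $\Bex$), hence satisfies all robust constraints and the oracle constraint, and both problems attain $\ry^2$.

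The hard part is not any of these computations --- they are elementary trigonometry --- but recognizing in advance which geometric configuration lets an oracle optimizer survive the three-point relaxation of \cref{prop_ibqcqp_limited}: one needs the conflict between the ``accuracy direction'' $\Byx$ and the ``fairness directions'' that fill $\cS$ to be weak enough that the auxiliary corner constraints $\Bex^{(1)}$ and $\Bex^{(3)}$ come out automatically slack (or exactly tight) at $\rvba^\star$. Aligning $\Byx$ with a corner of $\cS$ is the cleanest way to force precisely that, and it is exactly the borderline case that ties this corollary to the trichotomy of categories $(a)$--$(c)$ discussed above and made precise in \cref{lemma_power_new_sample}. A full description of the ``uncertainty does not hurt'' regime would instead start from the closed-form optima derived in \cref{subsec_characterizing_qcqp} and \cref{subsec_characterizing_qcqp_limited_approx} and compare them term by term.
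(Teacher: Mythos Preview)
Your proof is correct. Both explicit witnesses do the job: the corner-alignment construction and the orthogonal one each exhibit an instance where the robust optimum equals the oracle optimum, and feasibility in \cref{eq_qcqp_limited_approx} forces $\normalinner{\rvba}{\B}^2\le\varepsilon$ for all $\B\in\cS$ (via \cref{thm_ibqcqp_limited_infinite,prop_ibqcqp_limited}), hence for the true $\Bex$.

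The paper takes a different route. It does not argue by direct construction but instead reads the corollary off the exhaustive case analysis in \cref{lemma_power_new_sample}: the gap $\perf$ between the oracle and robust optimal values is computed in closed form for every placement of $\thetay$, and \cref{coro_free_fairness} is precisely the content of \cref{item:case_D} (Category~3 in \cref{remark_1}), where $\perf=0$ whenever $\thetay$ lies in the ``nearly orthogonal'' arc $[\hthetae+\phi+\balpha,\pi+\hthetae-\phi-\balpha]$ and $\sqrt\varepsilon\ge(\hre+\Delta)\sin\phi$. Your second, ``degenerate'' witness is exactly an instance of this regime; your first witness sits instead on the boundary of \cref{item:case_A} of \cref{lemma_power_new_sample}, where the choice $\re=\hre+\Delta$ and $\thetae=\hthetae+\phi$ forces $\cos\alpha=\cos\balpha$ and hence $\perf=0$ as well --- a valid but measure-zero instance that the paper's trichotomy would file under Category~1 for generic $\Bex$.

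What each approach buys: yours is more elementary --- a handful of trigonometric identities, no need for the full machinery of \cref{lemma_optimal_a_qcqp,lemma_optimal_a_limited_approx,lemma_power_new_sample} --- and it makes transparent why any oracle optimizer that happens to clear the three corner constraints suffices. The paper's approach is heavier but delivers strictly more: it characterizes the \emph{entire} set of $(\Byx,\BexEstimated,\Delta,\phi,\varepsilon)$ for which free fairness occurs, not just isolated witnesses, and that characterization is what underlies the informal trichotomy stated before the corollary. You acknowledge this distinction yourself in the closing sentence.
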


Solving the robust QCQP in \cref{eq_qcqp_limited_approx} for $d = 2$ is straightforward using a standard convex optimization solver. However, as the dimensionality of $\rvbx$ increases, solving the problem becomes more challenging. Furthermore, real datasets may not be well-modeled by Gaussian distributions, and thus, the proposed robust QCQP may not be directly deployable. 
\section{$\bss$: A General-Purpose Algorithm}
\label{sec_algorithm}
In this section, we leverage our theoretical analysis to propose a generic algorithm that handles high-dimensional features and non-Gaussian data while accounting for uncertainty. The core idea of the robust QCQP (in \cref{prop_ibqcqp_limited}) is to construct an uncertainty set around the estimated canonical correlation matrix $\BexEstimated$ by imposing additional constraints. This approach effectively addresses the unknown nature of the true $\Bex$. Another perspective is to view the robust QCQP in \cref{eq_qcqp_limited_approx} as
\begin{align}
    \max_{\rvba \in \ball} \biginner{\rvba}{\Byx}^2 \qtext{s.t}  \biginner{\rvba}{\BexEstimated}^2 \leq \varepsilon \qtext{and} \biginner{\rvba}{\Bex^{(i)}}^2 \leq \varepsilon \stext{ for all} i \in [3], \label{eq_qcqp_limited_approx_algo}
\end{align}
where the constraint $\biginner{\rvba}{\BexEstimated}^2 \leq \varepsilon$ becomes redundant in the presence of the constraint $\biginner{\rvba}{\Bex^{(1)}}^2 \leq \varepsilon$, and $\normalbraces{\Bex^{(i)}}_{i \in [3]}$ can be viewed as multiple estimates of $\BexEstimated$. For general non-Gaussian data, we use a similar idea but in a non-parametric fashion following the 
\emph{bootstrap} procedure~\citep{efron1992}. 

Specifically, given uncertain sensitive attribute data $\cDu = \normalbraces{\rvbx^{(i)}, \rvy^{(i)}, \hrve^{(i)}}_{i \in [n]}$, a fairness measure $\Phi(\rvy, \rvu, \rve)$, and a parameter $\s$: we draw uniformly $\s$ subsets $\cDi[1],\ldots,\cDi[\s]$ of some size $k \in [n]$ from $\cDu$ at random with replacement. Then, we estimate the fairness measure using each of these subsets as well as $\cDu$, and impose the collection of $\s$ constraints $\{\Phi_{\cDi[i]}(\rvy, \rvu, \rve) \leq \epsilon\}_{i \in [\s]}$ together with the constraint $\Phi_{\cDu}(\rvy, \rvu, \rve) \leq \epsilon$. In summary, we aim to solve the following optimization:
\begin{align}\label{eq_alg_bss}
         \min_{\rvu} \Expectation_{\cDp}\bigbrackets{\loss(\rvy, \rvu)} \qtext{s.t} \Phi_{\cDu}(\rvy, \rvu, \rve)  \leq \epsilon \stext{and} \Phi_{\cDi[i]}(\rvy, \rvu, \rve) \leq \epsilon \stext{for all} i \in [\s].
\end{align}
At a high level, the idea is similar to bootstrap confidence intervals \citep{wasserman2006} allowing construction of better uncertainty set with a larger number of subsamples $\s$.
\begin{algorithm}[t]
\KwInput{loss function $\loss$, fairness measure $\Phi$, dataset $\cDp = \normalbraces{\rvbx^{(i)}, \rvy^{(i)}}_{i \in [N]}$, dataset $\cDu \defn \normalbraces{\rvbx^{(i)}, \rvy^{(i)}, \hrve^{(i)}}_{i \in [n]}$, subsample size $k \in [n]$, number of subsamples $\s \ge 1$}
\For{$i = 1,\cdots,\s$} 
{
Draw a subsample $\cDi[i]$ of size $k$ from $\cDu$ at random with replacement.
}
Then, solve the optimization problem in \cref{eq_alg_lag}.
\caption{\bss}
\label{alg_bootstrap}
\end{algorithm}
Notice that \cref{eq_alg_bss} is a constrained optimization problem, which is non-trivial to solve in practice, especially for neural network training. Typically, this problem is addressed by simply adding the fairness constraints as regularizers with hyperparameters to control the trade-off during optimization, i.e., $ \min \text{Prediction Loss} + \lambda \times  \text{Fairness Loss}$. However, the performance can be sub-optimal as it depends on the choice of $\lambda$. Instead, we follow the approach of  \citet{lee2019learning} by considering the Lagrangian dual of \cref{eq_alg_bss} and optimizing the resulting objective over the duality variables, i.e.,
\begin{equation}\label{eq_alg_lag}
        \min_{\rvu} \max_{\lambda, \lambda_1, \cdots, \lambda_{\s}} \Expectation_{\cDp}\bigbrackets{\loss(\rvy, \rvu)} + \lambda\bigparenth{\Phi_{\cDu}(\rvy, \rvu, \rve) - \epsilon} + \sum_{i \in [\s]} \lambda_i\bigparenth{\Phi_{\cDi[i]}(\rvy, \rvu, \rve) - \epsilon}
\end{equation}
We summarize our method, referred to as \texttt{Bootstrap-S}, in Algorithm \ref{alg_bootstrap}.
\section{Empirical Evaluation}
\label{sec_exp}
In Section~\ref{subsec_gaussian_expts}, we show the efficacy of the robust QCQP in achieving strict fairness on synthetic Gaussian data and demonstrate that $\bss$ serves as a good approximation for the robust QCQP. In Section~\ref{subsec_real_expts}, we show the performance of $\bss$ on various real-world datasets.

\begin{figure*}[t]
    \centering
    \begin{tabular}{ccc}
    \includegraphics[width=0.3\linewidth,clip]{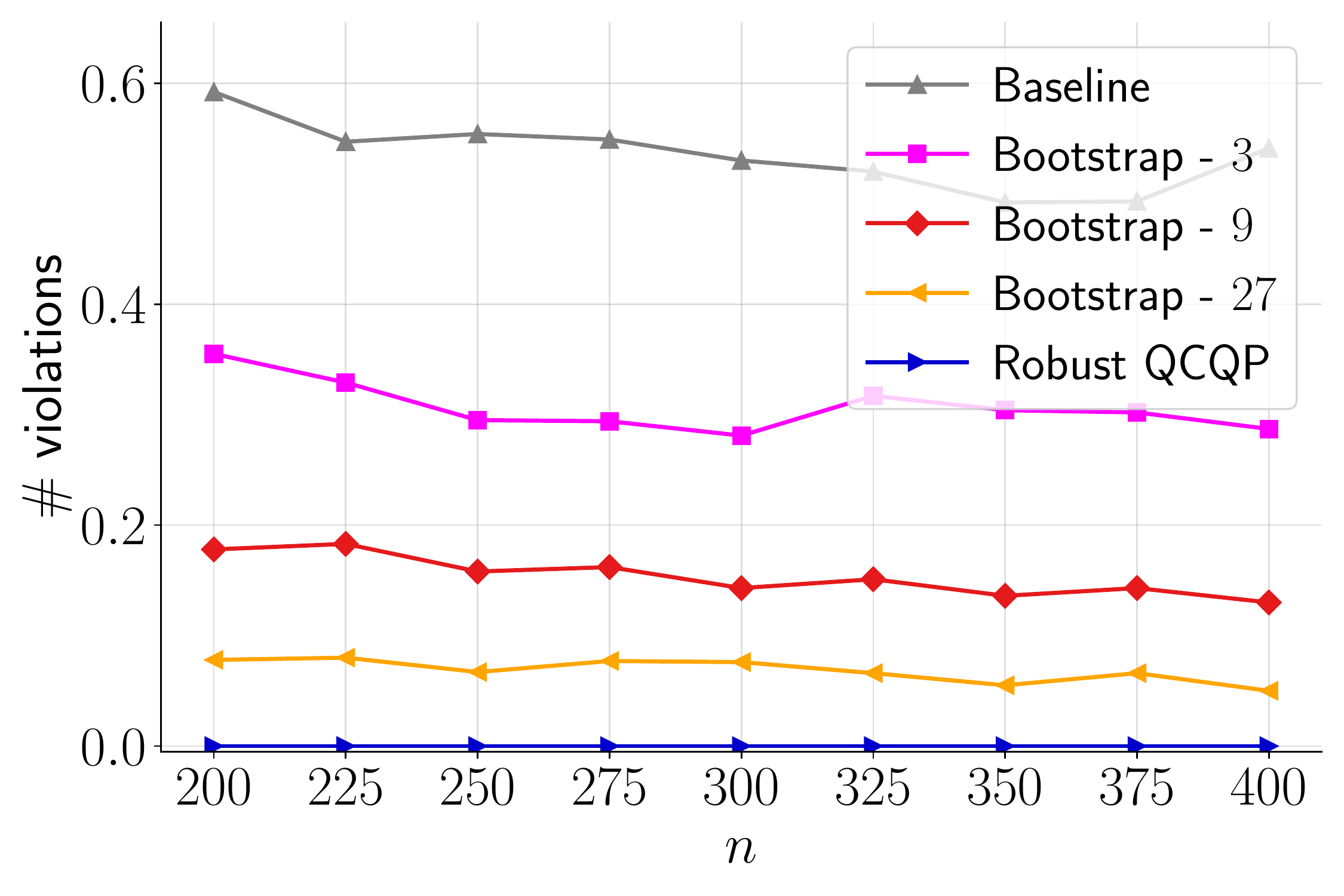}&
    \includegraphics[width=0.3\linewidth,clip]{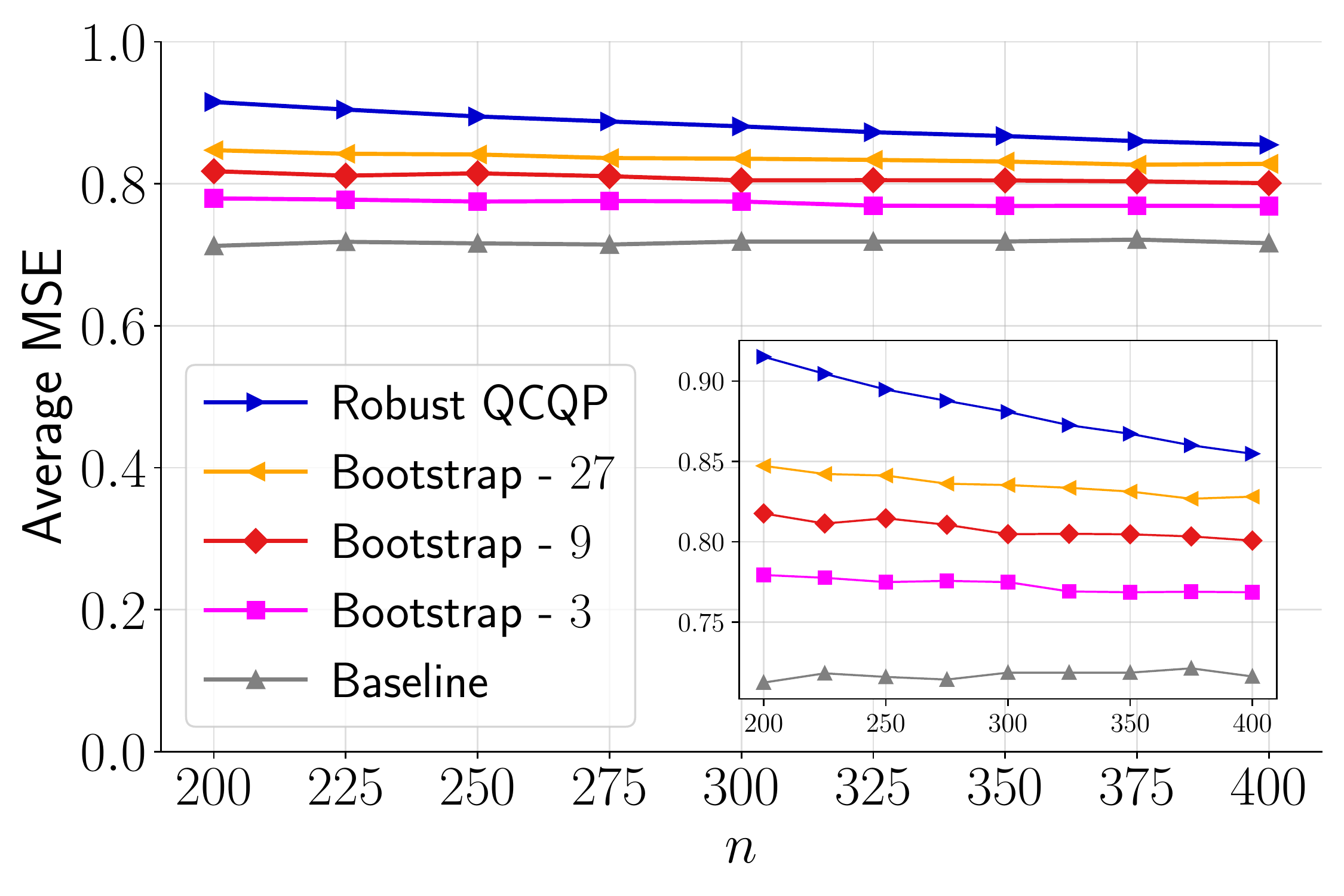}&
    \includegraphics[width=0.3\linewidth,clip]{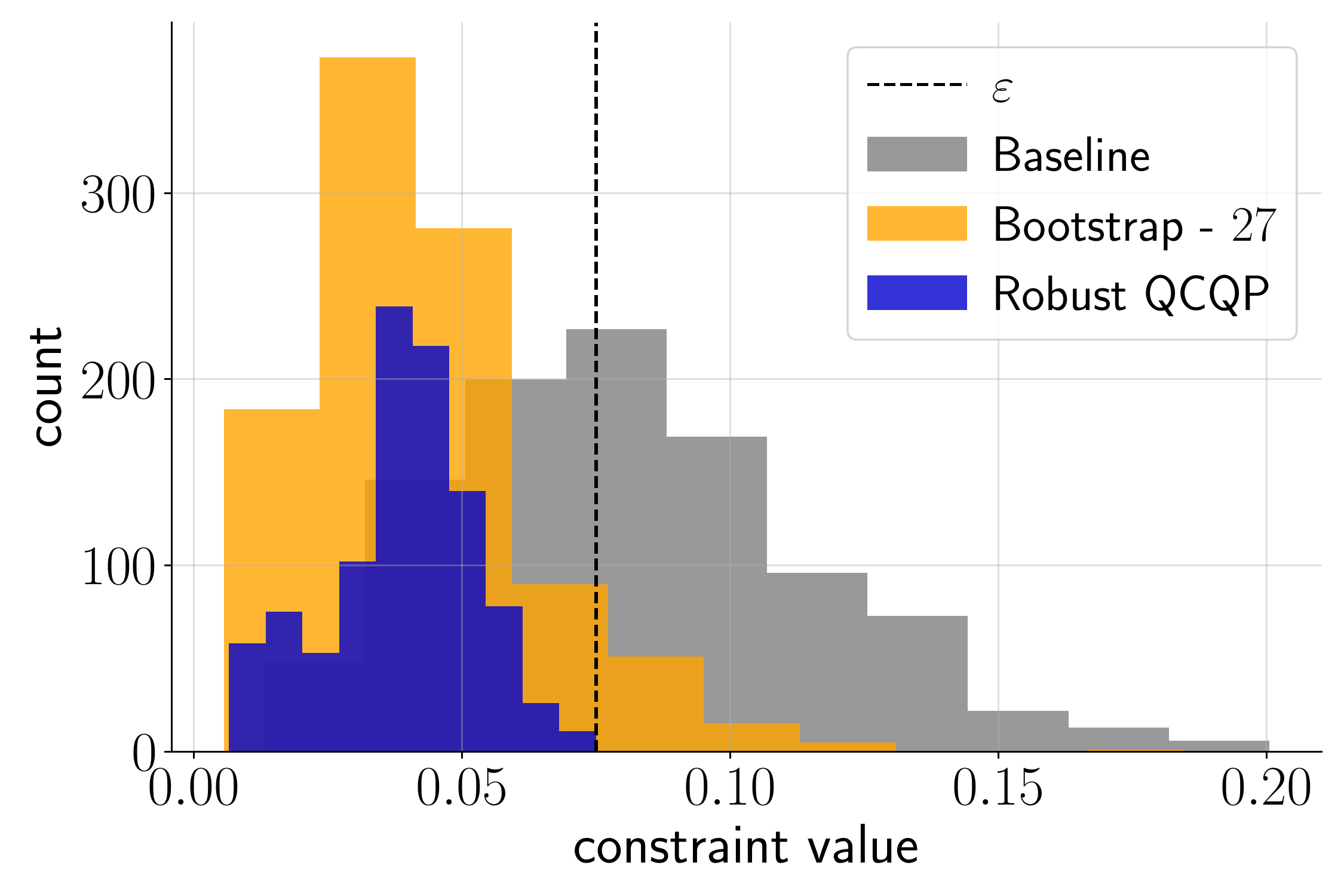}
    \\  & (a) $d = 2$, $\Sigma^{\star} = \Sigma^{\text{gen}}_2$, $\varepsilon = 0.075$ & \\
   
    \includegraphics[width=0.3\linewidth,clip]{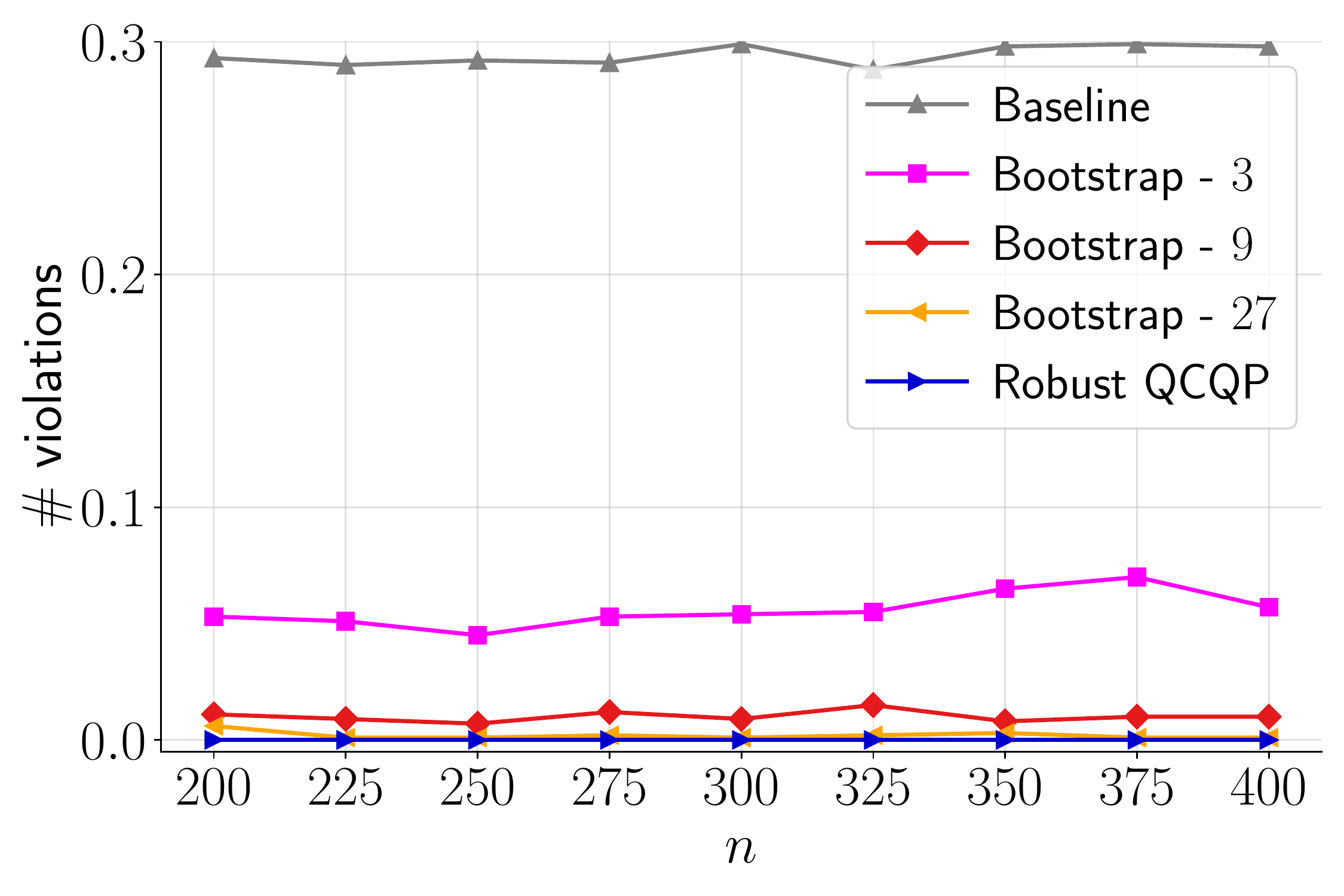}&
    \includegraphics[width=0.3\linewidth,clip]{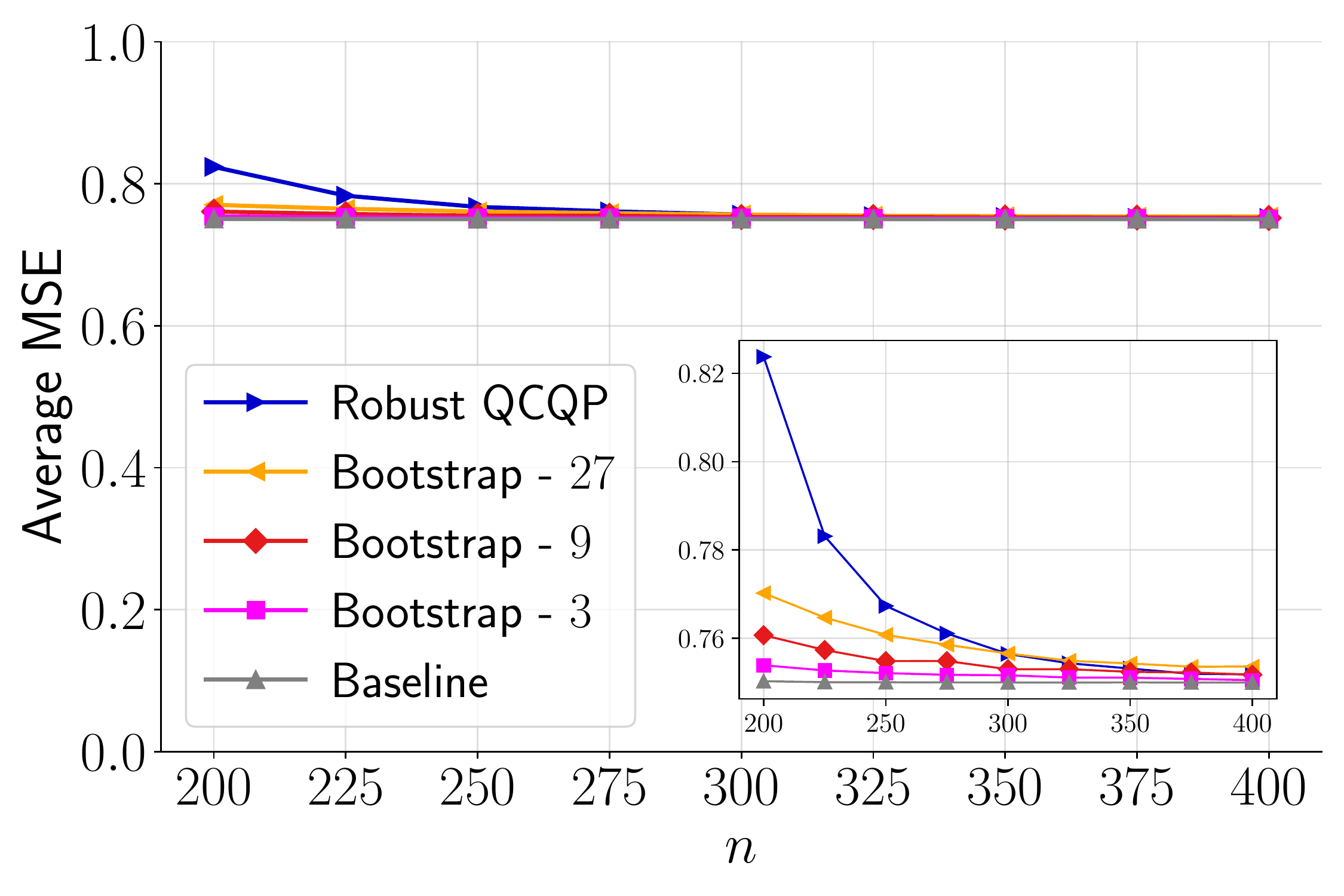}&
    \includegraphics[width=0.3\linewidth,clip]{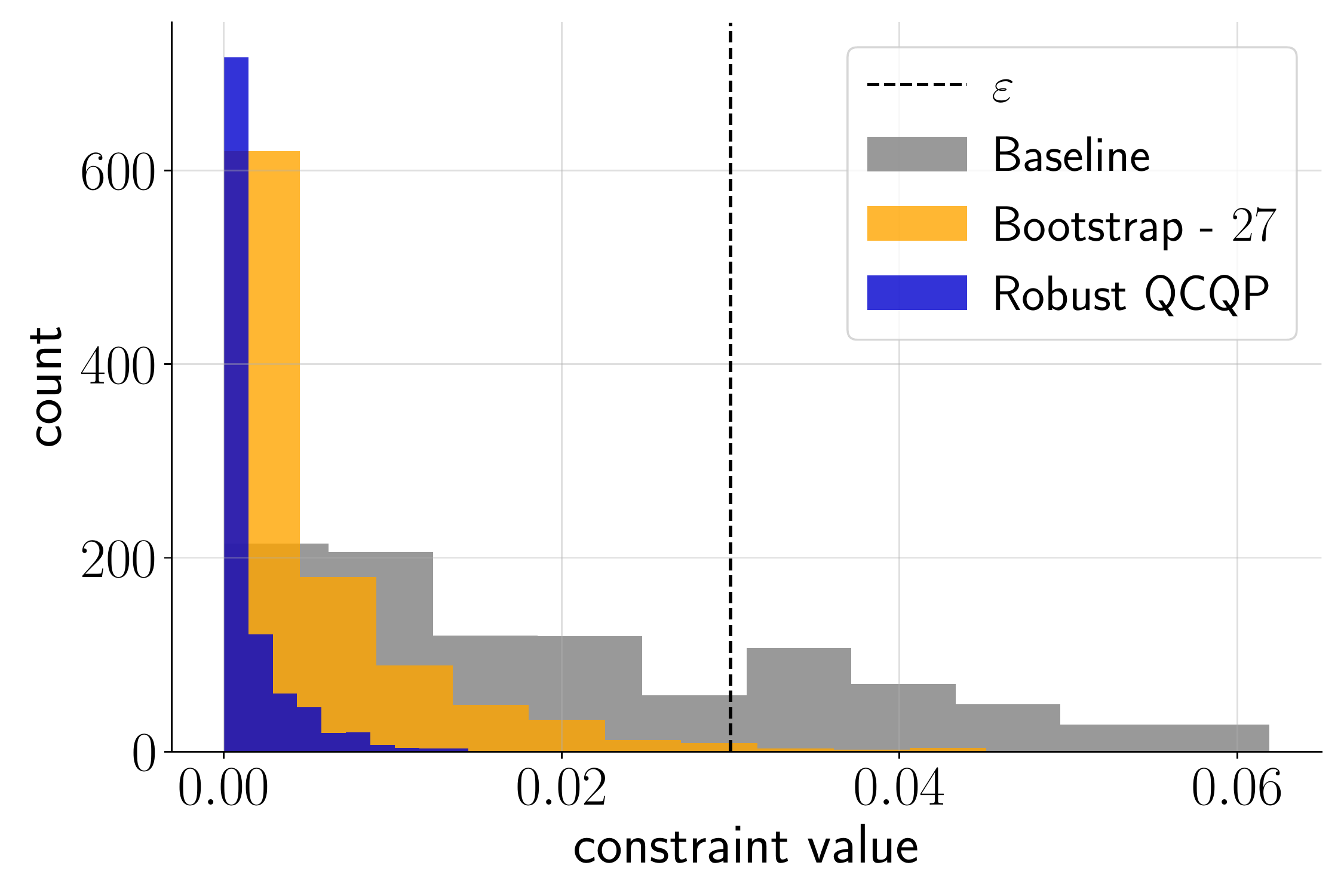}
    \\ & (b) $d = 2$, $\Sigma^{\star} = \Sigma^{\text{free}}_2$, $\varepsilon = 0.025$ & \\
    \end{tabular}
\caption{The performance of robust QCQP in \cref{eq_qcqp_limited_approx}, $\bss$ with $\s\in \{3,9,27\}$, and \texttt{Baseline} for $d=2$ and various $\Sigma^{\star}$. In the left column, we plot the fraction of violations of the true fairness constraint $\normalinner{\rvba}{\Bex}^2 \leq \varepsilon$ vs. $n$; in the middle column, we plot average MSE vs. $n$; in the right column, we plot the histogram of the value of $\normalinner{\rvba}{\Bex}^2$ over 1,000 trials for $n = 250$.}
\label{fig_gaussian_general}
\end{figure*}

\subsection{Synthetic data}
\label{subsec_gaussian_expts}
We generate synthetic data using two Gaussian distributions with zero mean, $d = 2$, and covariances matrices (a) $\Sigma^{\text{gen}}_2$ and (b) $\Sigma^{\text{fair}}_2$ defined in \cref{sec_more_details_gaussian}. We present our results when uncertainty is due to randomly missing sensitive attributes. In \cref{sec_more_details_gaussian}, we present results on $(i)$ another choice of covariance, $(ii)$ $d = 3$, and $(iii)$ uncertainty in sensitive attribute due to noise. We estimate $\BexEstimated$ using $n$ samples of $(\rvbx, \rve)$ for various choices of $n$. Then, we compare the robust QCQP in \eqref{eq_qcqp_limited_approx} and $\bss$ applied to the QCQP in \eqref{eq_qcqp} (for various $\s$) against \texttt{Baseline}, which solves the QCQP in \eqref{eq_qcqp} using $\BexEstimated$. We solve the resulting optimization problems using the CVXPY library \citep{diamond2016cvxpy}. The covariances $\Sigma^{\text{gen}}_2$ an $\Sigma^{\text{fair}}_2$ are designed to demonstrate the general behavior (where uncertainty hurts) and the free-fairness behavior (\cref{coro_free_fairness}), respectively. 

The results, averaged over 1000 random trials, are shown in \cref{fig_gaussian_general} (the error bars are too small to see). We observe that robust QCQP always ensures no fairness violations. Additionally, the performance (in terms of average MSE) of robust QCQP monotonically improves as $n$ increases, as stated in \cref{sec_mono_perf}. More importantly, in \cref{fig_gaussian_general} (b), robust QCQP does not incur any significant loss in the performance, and thus demonstrates the free-fairness phenomenon in \cref{coro_free_fairness}, say, $n \approx 350$ onwards. 
We also see that $\bss$ well approximates the performance of robust QCQP and outperforms \texttt{Baseline} in terms of fairness violations. As alluded to earlier, $\bss$ achieves a better fairness criterion as we increase the number of subsamples $\s$ by forming a more accurate uncertainty set. However, the benefit of larger $\s$ comes with an increased computation.

\subsection{Real-world data}
\label{subsec_real_expts}
\begin{table}[b]
\caption{Overview of datasets.}
\centering
{\small
\begin{tabular}{p{14mm}p{20mm}p{55mm}p{35mm}}
\toprule
\textbf{Dataset} & \textbf{Task} & \textbf{Outcome} & \textbf{Sensitive Attribute} \\
\midrule
Adult &Classification & Income $\geq $ \$50000 (binary) & Sex (binary) \\
Crime & Regression & Crimes per population (continuous) & Race (continuous) \\
Insurance & Regression & Medical expenses (continuous) & Sex (binary) \\
\bottomrule
\end{tabular}}\label{table:datasets}
\end{table}
We test $\bss$ on real-world classification and regression tasks for group fairness notions of independence and separation using 3 datasets: Adult, Crime, and Insurance. We provide an overview of these datasets in \cref{table:datasets} with detailed descriptions and pre-processing steps in \cref{sec_more_details_real}.\\

\noindent \textbf{Training details.} For all datasets, we train a two-layered neural network. 
We use the log loss for classification and MSE for regression. We use the $\chi^2$-divergence to impose the independence (conditional independence) for independence (separation). For continuous sensitive attributes, we induce uncertainty in every sensitive attribute by adding independent $\mathcal{N}(0,\sigma^2)$ noise ($\sigma = 0.5$ for Crime). For binary sensitive attributes, we induce uncertainty by keeping only $n$ out of $N$ sensitive attributes ($n=100$ for Adult and $n=10$ Insurance). For $\bss$, we set $\s=5$ for all three datasets. Given a fairness target $\epsilon$, we train a model over 50 independent trials of random missingness (for Adult and Insurance) or random noise (for Crime), and report the average performance (the error bars are too small to see). We sweep over 500 different $\epsilon$ from $0.001$ to $0.5$, and plot the prediction-fairness trade-off frontier by using a simple moving average over 5 entries. We provide more implementation details and experiments with different $n$ and $\sigma$ in \cref{sec_more_details_real}.\\

\noindent \textbf{Evaluation metrics.} On a held-out test set, we report predictive power using error rate (lower is better) for classification and MSE (lower is better) for regression. We evaluate (a) independence (lower is better) using demographic parity, i.e., $|P(\hat{\rvy}|\rve=1)-P(\hat{\rvy}|\rve=0)|$ for classification and $\chi^2$-divergence for regression, and (b) separation (lower is better) using equal opportunity, i.e., $|P(\hat{\rvy}|\rve=1, \rvy=1)-P(\hat{\rvy}|\rve=0, \rvy=1)|$ for classification and $\chi^2$-divergence for regression.\\

\begin{figure*}[t]
    \centering
    \begin{tabular}{ccc}
    \includegraphics[width=0.31\linewidth,clip]{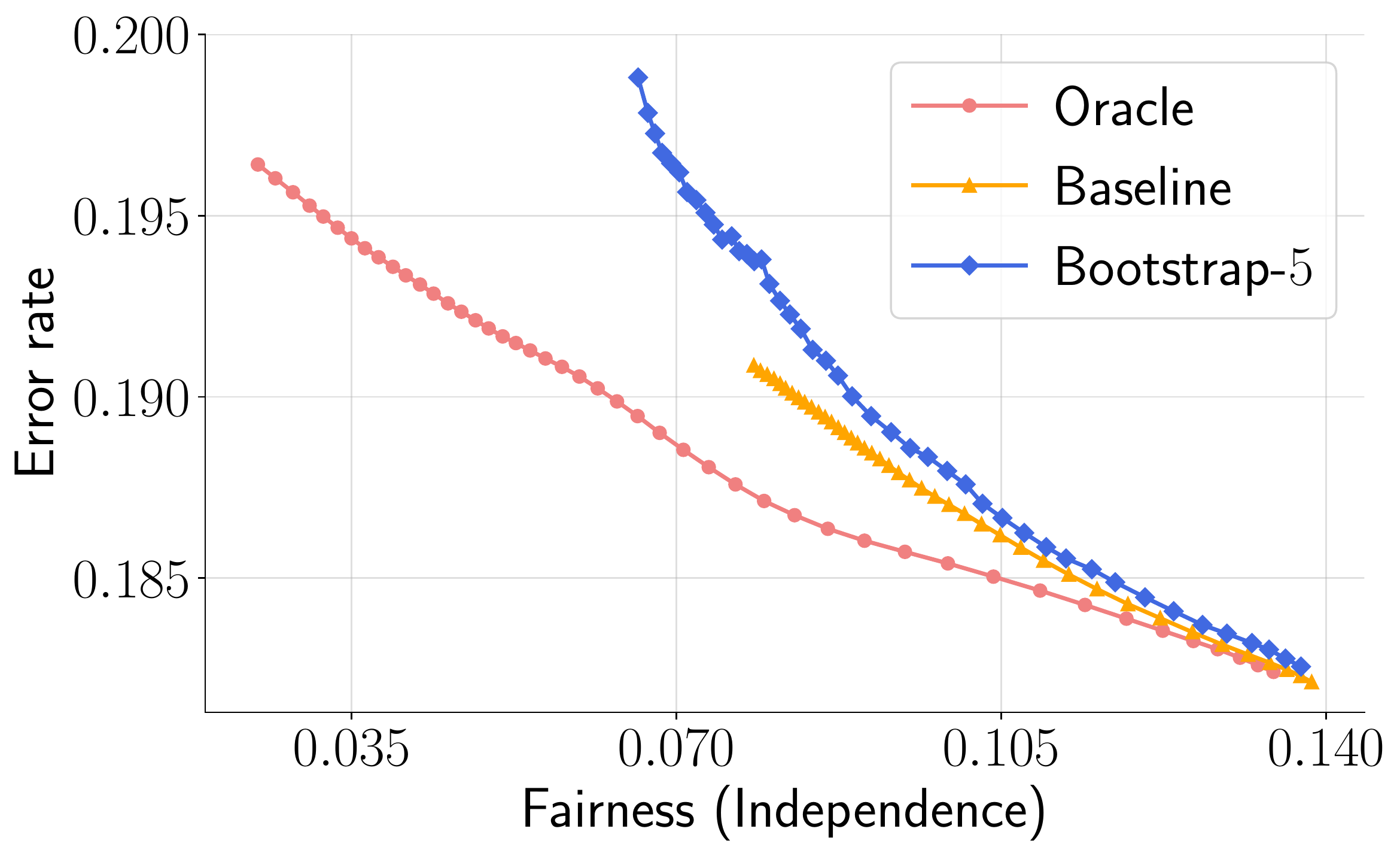}&
    \includegraphics[width=0.31\linewidth,clip]{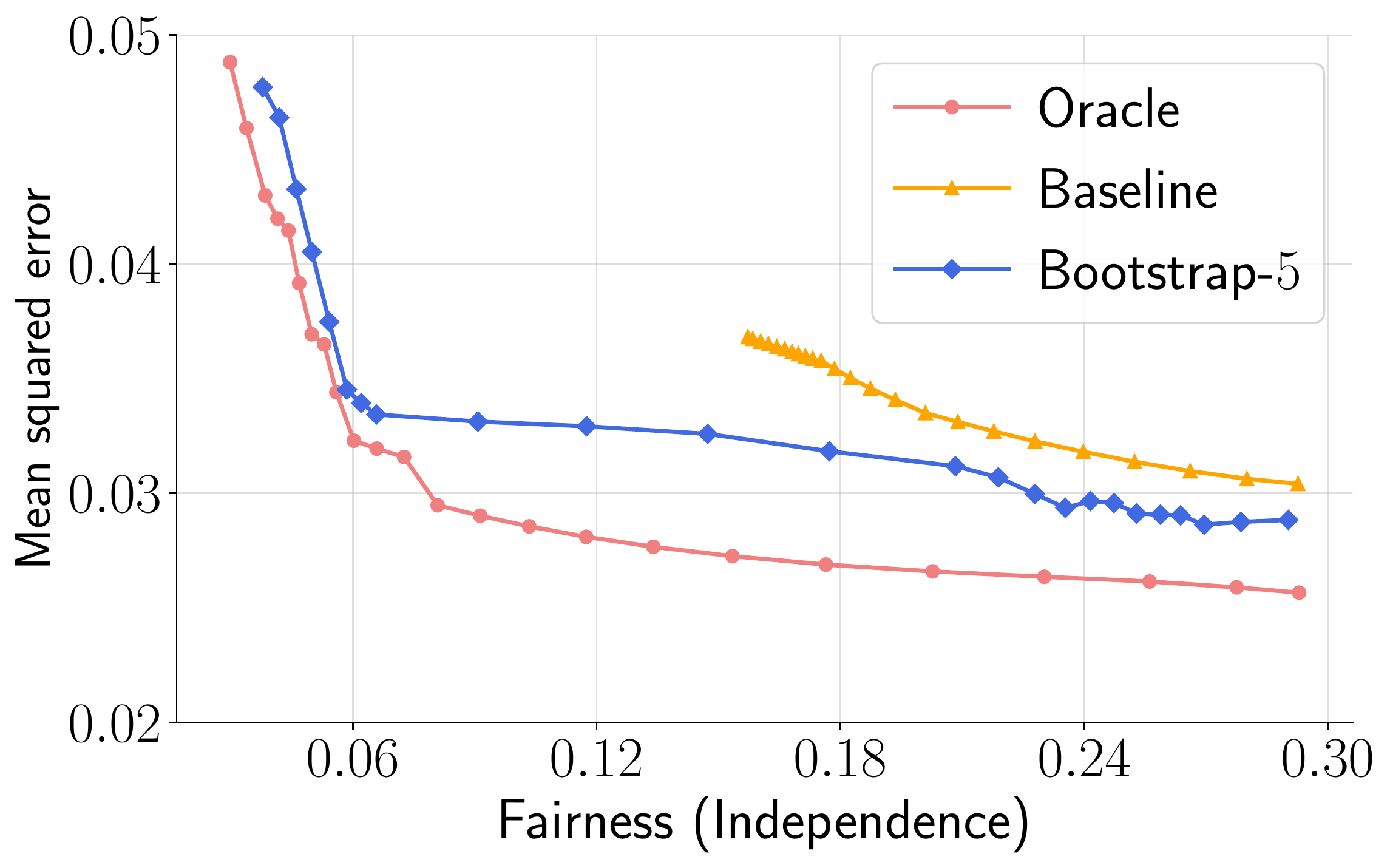}&
    \includegraphics[width=0.31\linewidth,clip]{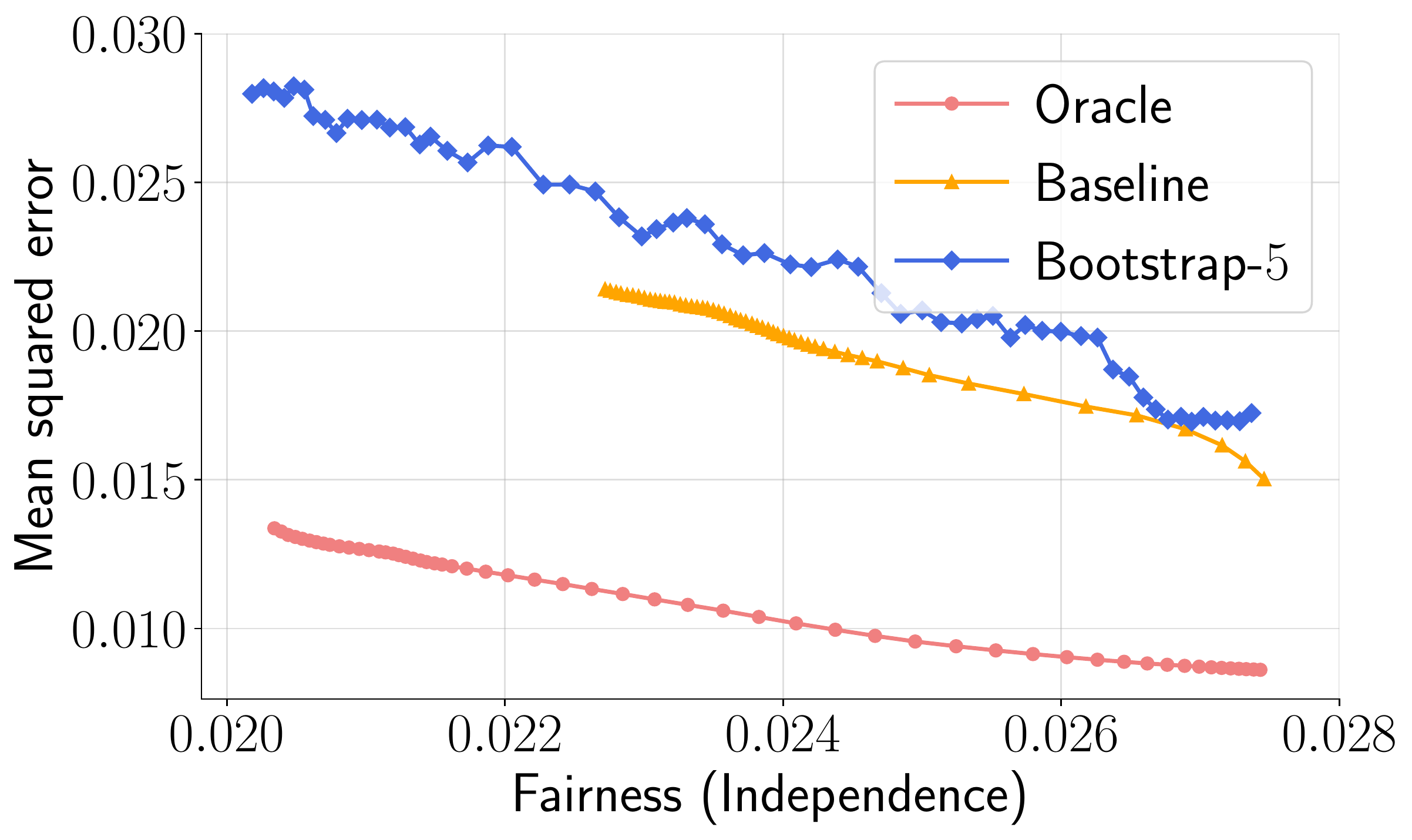}
    \\ (a) Adult dataset & (b) Crime dataset & (c) Insurance  dataset \\
    \includegraphics[width=0.31\linewidth]{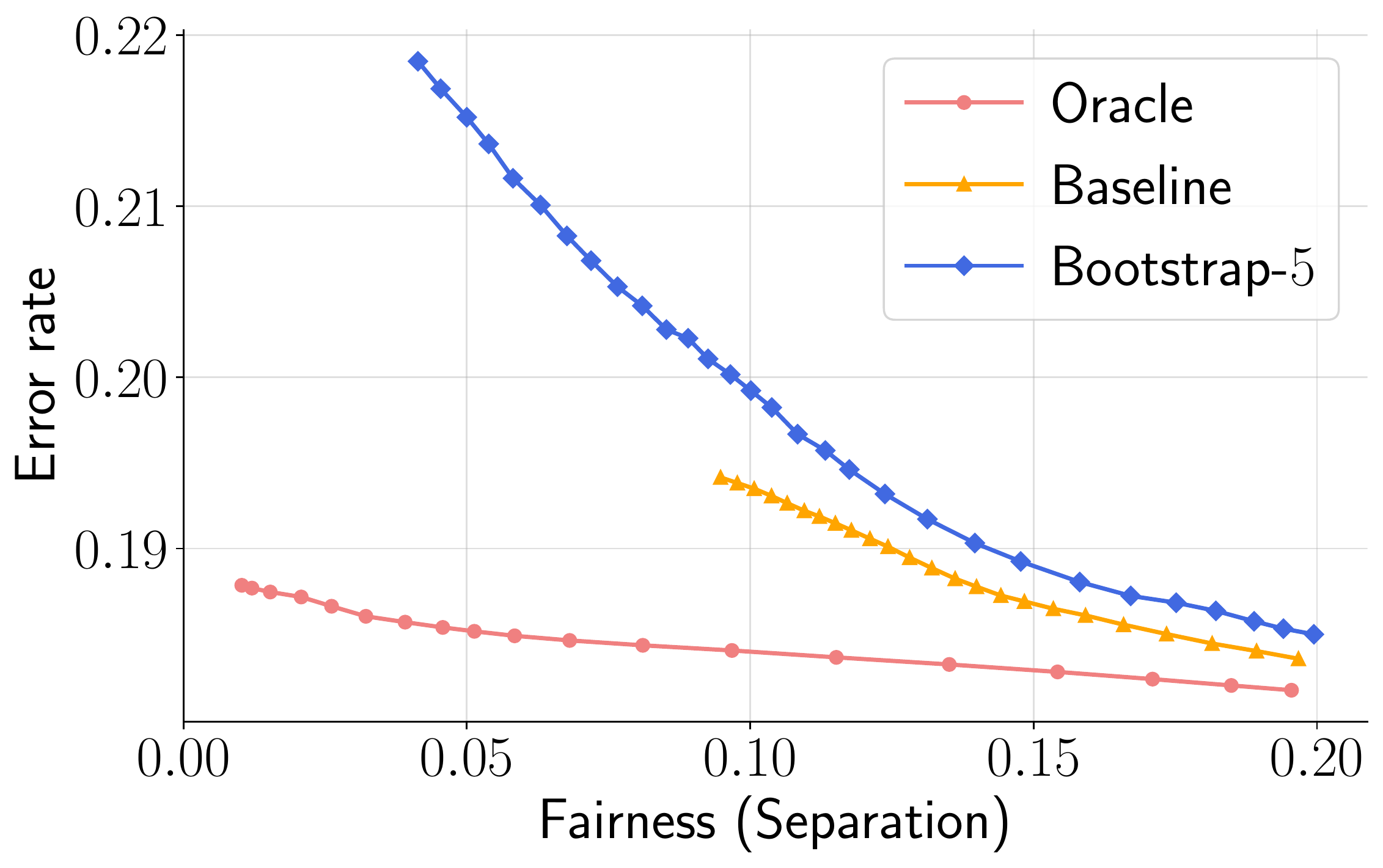}&
    \includegraphics[width=0.31\linewidth]{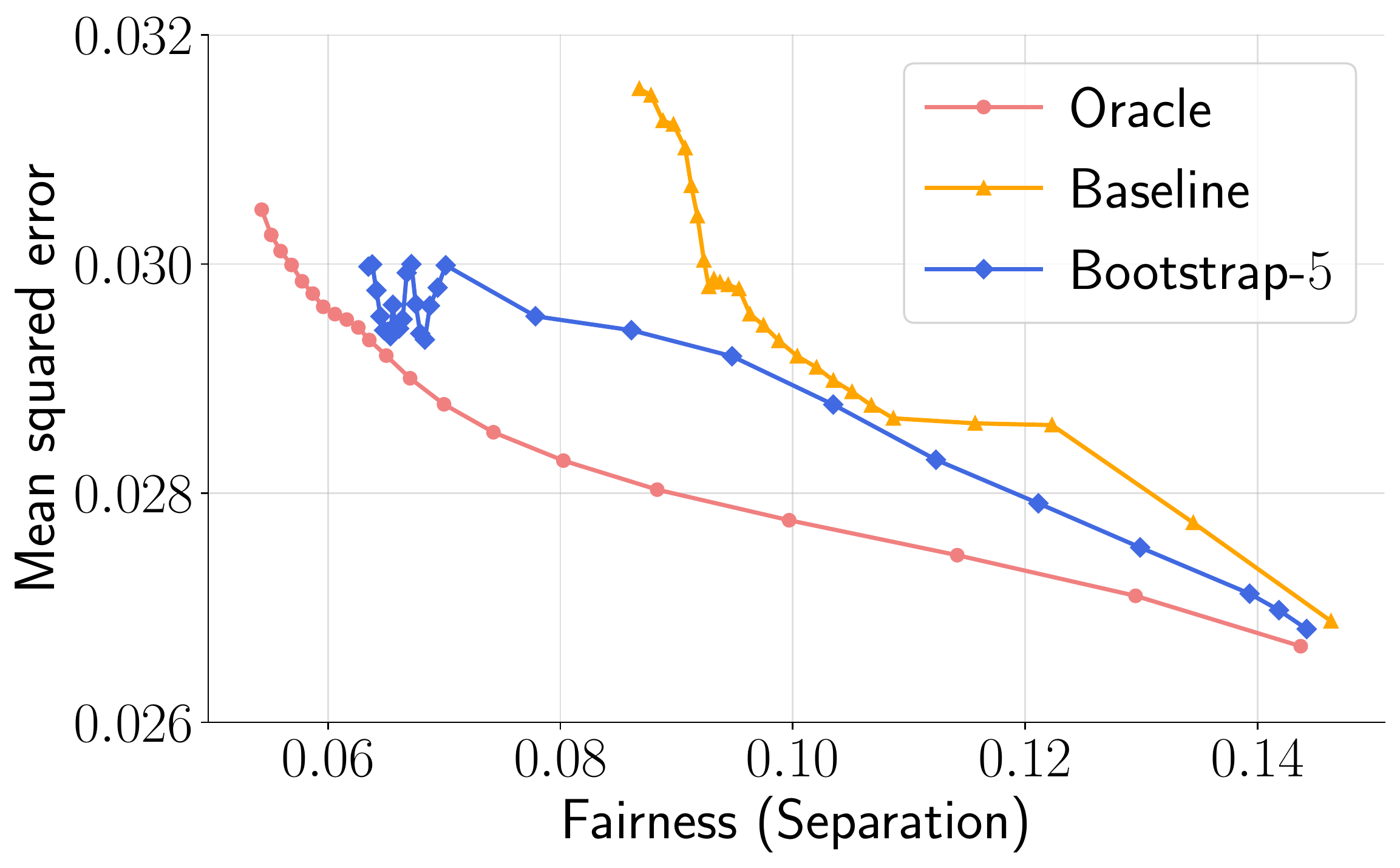}&
    \includegraphics[width=0.31\linewidth,clip]{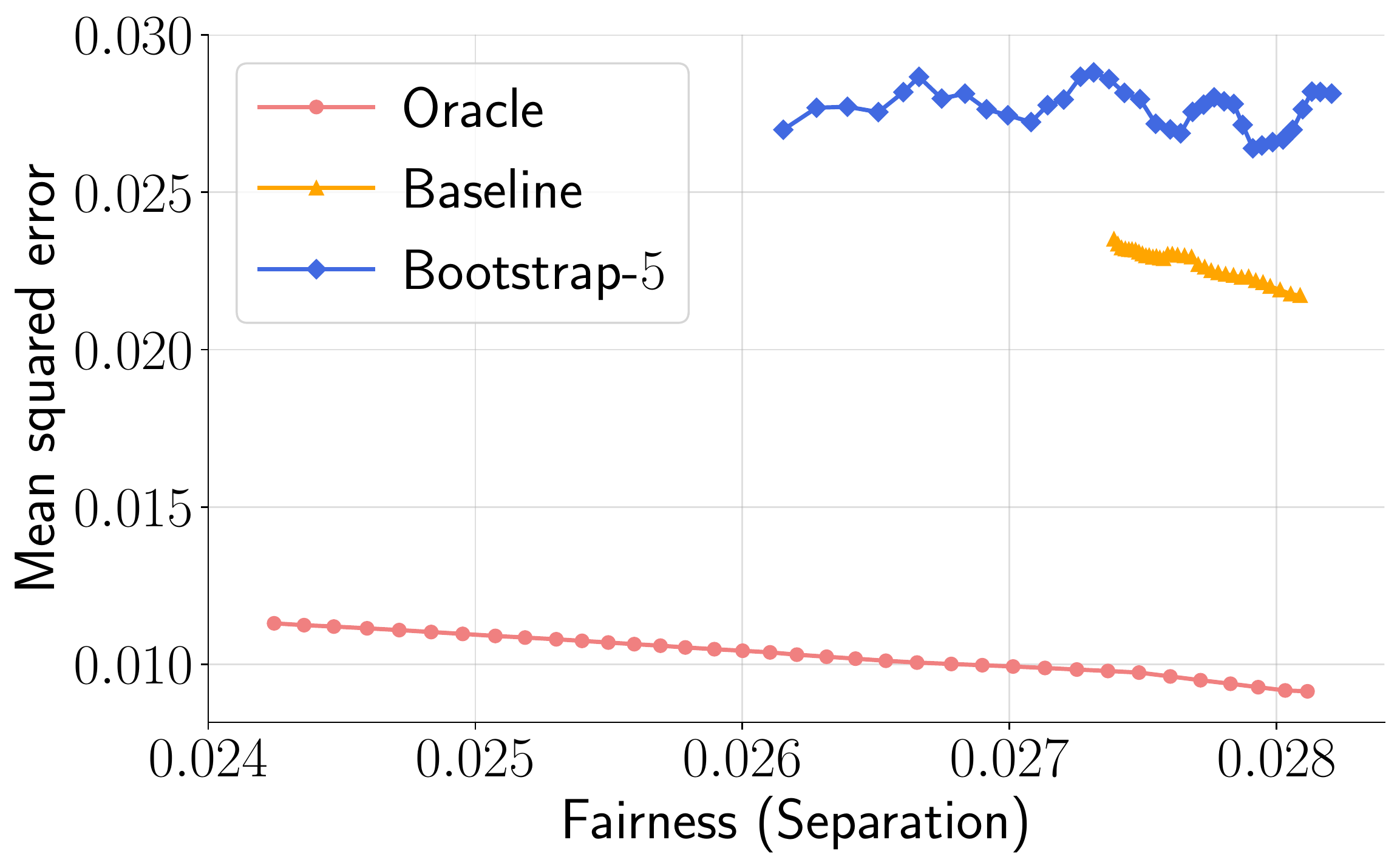}
    \\ (d) Adult dataset & (e) Crime  dataset & (f) Insurance dataset 
    \end{tabular}
\caption{Performance of $\bss$ and \texttt{Baseline} for independence (top row) and separation (bottom row). $\bss$ achieves much better fairness levels compared to \texttt{Baseline} throughout.}
\label{plots_real}
\end{figure*}

\noindent \textbf{Results.} We compare with \texttt{Baseline} which does not have the additional constraints of $\bss$, i.e., it solves for \cref{eq_alg_lag} with $\lambda_1, \cdots, \lambda_{\s}$ fixed to 0. For reference, we also compare with \texttt{Oracle} that has access to all the true sensitive attributes. \cref{plots_real} show that \texttt{Baseline} and $\bss$ exhibit a concentration of fairness levels near the extreme values of $\epsilon$ with $\bss$ being more noisy than \texttt{Baseline}. However, $\bss$ achieves significantly smaller fairness levels compared to \texttt{Baseline}. In fact, in most cases, $\bss$ achieves fairness levels that are comparable to \texttt{Oracle} while maintaining a relatively high level of predictive power.
\section{Future Work}
In this work, inspired by our theoretical analysis, we propose an algorithm that introduces additional constraints to address the challenge of uncertain sensitive attributes in fair learning problems. While our algorithm demonstrates effectiveness in various practical scenarios, it is important to consider its potential limitations and areas for further exploration.

First, our theoretical analysis is focused on Gaussian distributions and the independence notion of fairness. The framework of the robust quadratically constrained quadratic problem, which provides a strict fairness guarantee, is applicable only within this specific setting. It is desirable to extend the analysis and provide rigorous guarantees for fair learning problems involving non-Gaussian distributions and alternative notions of fairness.

Second, it is crucial to fully characterize the behavior of our algorithm. For instance, understanding the trade-off associated with different choices of subsample size $k$ and the number of subsamples $\s$ would be valuable. While we explore various choices of $\s$ in experiments with synthetic data, we fix $\s = 5$ for experiments with real-world data due to computational constraints. Intuitively, $\s$ controls the balance between fairness and prediction by influencing the effectiveness of the fairness constraint in the optimization process. Analyzing the three-way trade-off between fairness, prediction accuracy, and computational requirements is essential for a comprehensive understanding of the proposed algorithm.

Third, in our experiments involving missing sensitive attributes, we assume that these values are missing at random. However, the behavior of our algorithm may differ when the missingness of sensitive attributes is not random. Further investigation is necessary to understand how our algorithm performs under scenarios where the missingness is not at random, as it could impact the fairness and accuracy of the model.

\bibliographystyle{abbrvnat}
\bibliography{main}
\clearpage
\appendix
\noindent {\bf \LARGE{Appendix}}
\section{Proof of \cref{thm_ibqcqp,thm_2d}}
In this section, we prove \cref{thm_ibqcqp,thm_2d} as well as provide a characterization of the optimal $\rvba^\star$ in \cref{eq_qcqp}.

\subsection{Proof of \cref{thm_ibqcqp}: \ibeqqcqp}
\label{proof_them_ibqcqp}
First, we show that optimization in \cref{eq_gaussian_opt} is equivalent to the optimization below:
\begin{align}\label{eq_opt_inter_gaussian}
    \argmax_{\rvu} \Divergence(p_{\rvy, \rvu} \| p_{\rvy} p_{\rvu})  \qtext{s.t.} \Divergence(p_{\rve, \rvu} \| p_{\rve} p_{\rvu}) \leq \epsilon.
\end{align}
To that end, from the definition of conditional variance, we have
\begin{align}
    \Expectation\normalbrackets{(\rvy - \Expectation\normalbrackets{\rvy | \rvu})^2} = \Expectation\bigbrackets{\Variance\normalbrackets{\rvy | \rvu}}.  \label{eq_mmse}
\end{align}
Now, for a joint Gaussian vector $(\rvy, \rvu) \in \Reals^2$ with covariance $\bsigma$, we have
\begin{align}
\Expectation\bigbrackets{\Variance\normalbrackets{\rvy | \rvu}} \sequal{(a)} \Expectation\bigbrackets{\SigmaYY - \SigmaYU\SigmaUU^{-1} \SigmaUY} = \SigmaYY - \SigmaYU\SigmaUU^{-1} \SigmaUY,
\end{align}
where $(a)$ follows from Schur's complement. Since $\SigmaYY$ is a constant w.r.t. $\rvu$, we can write
\begin{align}
 \arg \min_{\rvu} \Expectation\bigbrackets{\Variance\normalbrackets{\rvy | \rvu}} =  \arg \min_{\rvu}- \SigmaYU\SigmaUU^{-1} \SigmaUY   = \arg \max_{\rvu} \SigmaYU\SigmaUU^{-1} \SigmaUY. \label{eq_optimal_mse_gaussian}
\end{align}
Now, from \citet[Lemma. 68]{huang2019universal}, we have
\begin{align}
\Divergence(p_{\rvy, \rvu} \| p_{\rvy} p_{\rvu})  = \fronorm{\SigmaYY^{-1/2} \SigmaYU \SigmaUU^{-1/2}}^2 \sequal{(a)} \SigmaYY^{-1/2} \SigmaYU \SigmaUU^{-1}\SigmaUY\SigmaYY^{-1/2},
\end{align}
where $\fronorm{\cdot}$ denotes the Frobenius norm, and $(a)$ follows because $\SigmaYY^{-1/2} \SigmaYU \SigmaUU^{-1}\SigmaUY\SigmaYY^{-1/2} \in \Reals$. As before, since $\SigmaYY$ is a constant w.r.t. $\rvu$, we can write
\begin{align}
    \arg \max_{\rvu} \Divergence(p_{\rvy, \rvu} \| p_{\rvy} p_{\rvu})  = \arg \max_{\rvu} \SigmaYU \SigmaUU^{-1}\SigmaUY. \label{eq_Ibar_gaussian}
\end{align}
Then, \cref{eq_opt_inter_gaussian} follows by combining \cref{eq_mmse,eq_optimal_mse_gaussian,eq_Ibar_gaussian}. 

Next, from \citet[Theorem. 1]{bu2021sdp}, the optimization problem in \cref{eq_opt_inter_gaussian} is equivalent to the following semi-definite program (SDP) using the notion of canonical correlation matrices (\cref{def_ccm}):
\begin{align}
    \max_{\tbf{A} \in \bS^d} \trace{\Byx\tp \Byx \tbf{A}} \qtext{s.t.}  \trace{\Bex\tp \Bex \tbf{A}} \leq \varepsilon \stext{and} 0 \preceq \tbf{A} \preceq \tbf{I}, \label{eq_sdp}
\end{align}
where $\trace{\cdot}$ denote the trace of a matrix, $\bS^d$ is the space of $d \times d$ symmetric matrices, and $\tbf{A}$ is of the form $\Bxu\Bxu\tp$. Finally, we show that the SDP in \cref{eq_sdp} is equivalent to the QCQP in \cref{eq_qcqp}. First, we have
\begin{align}
    \trace{\Byx\tp \Byx \tbf{A}} \sequal{(a)} \trace{\Byx\tp \Byx \Bxu\Bxu\tp} \sequal{(b)} \trace{\Bxu\tp\Byx\tp \Byx \Bxu} & = \trace{(\Byx \Bxu)\tp \Byx \Bxu} \\
    & \sequal{(c)}\biginner{\Byx}{\Bxu\tp}^2, \label{eq_equivalence_1}
\end{align}
where $(a)$ follows because $\tbf{A} = \Bxu\Bxu\tp$, $(b)$ follows from the cyclic property of trace, and $(c)$ follows because $\normalinner{\Byx}{\Bxu\tp} = \Byx \Bxu  \in \Reals^{1 \times 1}$. Similarly, we have
\begin{align}
    \trace{\Bex\tp \Bex \tbf{A}} = \biginner{\Bex}{\Bxu\tp}^2. \label{eq_equivalence_2}
\end{align}
Finally, noting that $\tbf{A}$ is a rank 1 matrix, we have
\begin{align}
    0 \preceq \tbf{A} \preceq \tbf{I} \iff \Bxu \in \ball. \label{eq_equivalence_3}
\end{align}
Putting together \cref{eq_sdp,eq_equivalence_1,eq_equivalence_2,eq_equivalence_3} completes the proof.

\subsection{Proof of \cref{thm_2d}: \qcqpsubspace}
\label{subsec_proof_thm_2d}
It suffices to show that the projection of any $\rvba \in \ball$, satisfying the constraint $\normalinner{\rvba}{\Bex}^2 \leq \varepsilon$, onto the subspace spanned by $\Byx$ and $\Bex$ preserves the value of the objective and continues to satisfy the constraint. Fix some $\rvba$ in the feasible space. Let $\proj(\rvba)$ denote the projection of $\rvba$ on the subspace spanned by $\Byx$ and $\Bex$.

\paragraph{The value of the objective is preserved.} It suffices to show $\normalinner{\proj(\rvba)}{\Byx}^2 = \normalinner{\rvba}{\Byx}^2$. Consider an orthonormal basis $\normalbraces{\mathbf{n}_y,\mathbf{n}_e} \defn \bigbraces{\frac{\Byx}{\stwonorm{\Byx}}, \mathbf{n}_e}$ spanned by $\Byx$ and $\Bex$ where $\mathbf{n}_y = \frac{\Byx}{\stwonorm{\Byx}}$ and $\mathbf{n}_e$ is chosen to be orthogonal to $\mathbf{n}_y$. Then, 
\begin{equation}
    \proj(\rvba) = \biginner{\rvba}{\mathbf{n}_y} \mathbf{n}_y + \biginner{\rvba}{\mathbf{n}_e} \mathbf{n}_e.
\end{equation}
Using the definition of $\mathbf{n}_y$ and the orthogonality between $\mathbf{n}_e$ and $\Byx$, we have
\begin{equation}
    \biginner{\proj(\rvba)}{\Byx}^2 = \biginner{\rvba}{\mathbf{n}_y}^2 \stwonorm{\Byx}^2.
\end{equation}
Similarly, using the definition of $\mathbf{n}_y$, we have $\biginner{\rvba}{\Byx}^2 = \biginner{\rvba}{\mathbf{n}_y}^2 \stwonorm{\Byx}^2$.

\paragraph{The constraint is satisfied.} It suffices to show $\proj(\rvba) \in \ball$ and $\normalinner{\proj(\rvba)}{\Bex}^2 \leq \varepsilon$. It is easy to see $\proj(\rvba) \in \ball$ because $\stwonorm{\proj(\rvba)}\le \stwonorm{\rvba}$. Now, consider a different set of orthonormal basis $\normalbraces{\mathbf{n}'_e,\mathbf{n}'_y} \defn \bigbraces{\frac{\Bex}{\stwonorm{\Bex}}, \mathbf{n}'_y}$ spanned by $\Bex$ and $\Byx$ where $\mathbf{n}'_e = \frac{\Bex}{\stwonorm{\Bex}}$ and $\mathbf{n}'_y$ is chosen to be orthogonal to $\mathbf{n}'_e$. Then,
\begin{equation}
    \proj(\rvba) = \biginner{\rvba}{\mathbf{n}'_e} \mathbf{n}'_e + \biginner{\rvba}{\mathbf{n}'_y} \mathbf{n}'_y.
\end{equation}
Using the definition of $\mathbf{n}'_e$ and the orthogonality between $\mathbf{n}'_y$ and $\Bex$, we have
\begin{equation}
    \biginner{\proj(\rvba)}{\Bex}^2 = \biginner{\rvba}{\mathbf{n}'_e}^2 \stwonorm{\Bex}^2 \sequal{(a)} \biginner{\rvba}{\Bex}^2 \sless{\cref{eq_qcqp}} \varepsilon,
\end{equation}
where $(a)$ follows from the definition of $\mathbf{n}'_e$.

\subsection{Characterizing the optimal $\rvba^\star$ in \cref{eq_qcqp}}
\label{subsec_characterizing_qcqp}
Fix any $\varepsilon > 0$. For $d = 2$, it is convenient to work with polar coordinates. Let $\Bex = \polar{\re}{\thetae}$ for some $\re > 0$ and $\thetae \in [0,2\pi]$. Similarly, let $\Byx = \polar{\ry}{\thetay}$ for some $\ry > 0$ and $\thetay \in [0,2\pi]$. Let $\truervba$ denote the set of optimal $\rvba$ in \cref{eq_qcqp}, i.e.,
\begin{align}
    \truervba = \arg\max_{\rvba \in \ball} \biginner{\rvba}{\Byx}^2 \stext{s.t}  \biginner{\rvba}{\Bex}^2 \leq \varepsilon.
\end{align}
Then, the following lemma characterizes $\truervba$ depending on the values of $\re, \thetae$, $\thetay$, and $\varepsilon$.
\begin{lemma}\label{lemma_optimal_a_qcqp}
Let $\alpha \in [0,2\pi]$ be such that $\cos{\alpha} =
\sqrt{\varepsilon}/\re$ and $\sin{\alpha} = 
\sqrt{1 - \varepsilon/\re^2}$. Define the function $\Afun: [0,2\pi] \to [-1,1]^2$ such that $\Afun[(\theta)] = \polar{}{\theta}$. Then,
\begin{enumerate}[label=Case 1.\arabic*., leftmargin=20mm, labelsep=0.5em]
    \item \label{item:regime_1} $\truervba = \bigbraces{\Afun[(\alpha + \thetae)], \Afun[(\pi + \alpha + \thetae)]}$ $\iff$ $\thetay \in [\thetae, \alpha + \thetae] \cup [\pi + \thetae,\pi + \alpha + \thetae]$, 
    \item \label{item:regime_2} $\truervba = \bigbraces{\Afun[(\thetay)], \Afun[(\pi + \thetay)]}$ $\iff$ $\thetay \in [\alpha + \thetae, \pi - \alpha + \thetae] \cup [\pi + \alpha + \thetae, 2\pi- \alpha + \thetae]$, and
    \item \label{item:regime_3} $\truervba = \bigbraces{\Afun[(\pi - \alpha + \thetae)], \Afun[(- \alpha + \thetae)]}$ $\iff$ $\thetay \in [\pi - \alpha + \thetae, \pi + \thetae] \cup [-\alpha + \thetae, \thetae]$.
\end{enumerate}
\end{lemma}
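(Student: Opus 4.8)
The plan is to solve the QCQP in \cref{eq_qcqp} directly in polar coordinates by writing the feasible candidate $\rvba = \Afun[(\theta)] = \polar{}{\theta}$ on the unit circle (the maximum of $\normalinner{\rvba}{\Byx}^2$ over $\ball$ always lies on the boundary, since scaling $\rvba$ up increases the objective, and if the unconstrained maximizer violates the fairness constraint we can rotate along the circle rather than shrink) and then reducing everything to a one‑dimensional optimization over $\theta \in [0,2\pi]$. With $\Bex = \polar{\re}{\thetae}$ and $\Byx = \polar{\ry}{\thetay}$, we have $\normalinner{\rvba}{\Byx}^2 = \ry^2\cos^2(\theta - \thetay)$ and $\normalinner{\rvba}{\Bex}^2 = \re^2\cos^2(\theta - \thetae)$. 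So the problem becomes: maximize $\cos^2(\theta-\thetay)$ subject to $\cos^2(\theta-\thetae) \le \varepsilon/\re^2$. Using the angle $\alpha$ with $\cos\alpha = \sqrt{\varepsilon}/\re$, $\sin\alpha = \sqrt{1-\varepsilon/\re^2}$ (which is well defined precisely because we assumed $\radius \le \stwonorm{\Bex}$, hence $\varepsilon \le \re^2$ in the relevant regime), the constraint $|\cos(\theta-\thetae)| \le \cos\alpha$ is equivalent to $\theta - \thetae$ lying in the union of two arcs of half-width $\pi/2 - \alpha$ centered at $\thetae + \pi/2$ and $\thetae - \pi/2$; equivalently, the \emph{feasible} $\theta$ form the two closed arcs $[\thetae+\alpha,\ \thetae+\pi-\alpha]$ and $[\thetae+\pi+\alpha,\ \thetae+2\pi-\alpha]$ (mod $2\pi$).

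Next I would observe that $\cos^2(\theta-\thetay)$ is maximized (value $1$) at $\theta = \thetay$ and $\theta = \thetay+\pi$, is monotone decreasing in $|\theta - \thetay|$ as one moves away from these peaks (within a half-period), and is symmetric about them. Because the feasible region is a union of two antipodal arcs and the objective has the same antipodal symmetry ($\theta \mapsto \theta+\pi$ preserves both objective and feasibility), it suffices to analyze one arc, say $[\thetae+\alpha,\ \thetae+\pi-\alpha]$, and the location of the nearest peak of the objective relative to that arc. There are exactly three cases, which is where the three-way split comes from: (i) the peak $\thetay$ (mod $\pi$, suitably shifted) lies to the "left" of the arc, i.e. in $[\thetae,\thetae+\alpha]$ — then the constrained max over the arc is attained at the left endpoint $\theta = \thetae+\alpha$, giving $\truervba = \{\Afun[(\alpha+\thetae)],\Afun[(\pi+\alpha+\thetae)]\}$; (ii) $\thetay$ lies inside the arc, i.e. in $[\thetae+\alpha,\thetae+\pi-\alpha]$ — then the unconstrained optimum is feasible and $\truervba = \{\Afun[(\thetay)],\Afun[(\pi+\thetay)]\}$; (iii) $\thetay$ lies to the "right", i.e. in $[\thetae+\pi-\alpha,\thetae+\pi]$ — then the max is at the right endpoint $\theta = \thetae+\pi-\alpha$, giving $\truervba = \{\Afun[(\pi-\alpha+\thetae)],\Afun[(-\alpha+\thetae)]\}$. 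Translating each of these conditions on $\thetay$ into the union-of-two-intervals form stated in the lemma (by adding $\pi$ to get the antipodal copy) yields exactly \ref{item:regime_1}–\ref{item:regime_3}. In each case I would also check the boundary overlaps are consistent (e.g. at $\thetay = \thetae+\alpha$ both Case~1.1 and Case~1.2 give the same answer), so the three cases genuinely partition $[0,2\pi]$ up to shared endpoints.

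The main obstacle I anticipate is purely bookkeeping rather than conceptual: carefully managing the mod‑$2\pi$ arithmetic and the "$\pm\pi$ antipodal" shifts so that the three stated intervals for $\thetay$ in \ref{item:regime_1}–\ref{item:regime_3} tile the circle correctly and the stated optimizers are the right representatives (e.g.\ distinguishing $-\alpha+\thetae$ from $2\pi-\alpha+\thetae$, and making sure the "nearest peak" in case (i) vs.\ (iii) is identified with the correct endpoint). A secondary point to be careful about is the degenerate/edge behavior — when $\varepsilon \ge \re^2$ the constraint is vacuous and $\alpha = 0$, and when $\Byx$ and $\Bex$ are (anti)parallel so $\thetay \equiv \thetae \pmod \pi$; these should be checked to confirm the formulas still read off correctly (and are consistent with the footnote assumption $\radius \le \stwonorm{\Bex}$). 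Everything else is a direct one-variable calculus/monotonicity argument on $\cos^2$.
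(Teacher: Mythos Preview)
Your proposal is correct and follows essentially the same argument as the paper: both restrict to the unit circle, identify the feasible arcs $[\thetae+\alpha,\thetae+\pi-\alpha]$ (and its antipode), and split into three cases according to whether $\thetay$ falls inside the arc or to one side of it, with the optimum at $\thetay$ itself or at the nearer arc endpoint accordingly. The paper phrases this geometrically via perpendicular projections and a figure while you phrase it analytically via $\cos^2(\theta-\thetay)$ and monotonicity, but the logic is identical; your one informal step---that the optimum lies on the unit circle (``rotate rather than shrink'')---is correct and can be made rigorous by noting that the objective $\normalinner{\rvba}{\Byx}^2$ is convex, so its maximum over the convex feasible set is attained at an extreme point, and the extreme points of the intersection of the disk with the strip $\normalabs{\normalinner{\rvba}{\Bex}}\le\sqrt{\varepsilon}$ all lie on the circular arcs.
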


\begin{figure}[ht]
\centering
\begin{tabular}{cc}
\begin{tikzpicture}[scale=2.5, declare function={axis = 1; R = 1; re = 1.5; thetae = 10; alpha = 53; thetay = 170;},
dot/.style={circle,fill,inner sep=1.5pt},
]
\centering
\begin{scope}[nodes={dot}]
\draw[name path=circ]   (0,0) coordinate (origin) circle[radius=R]; 
\draw[name path=yaxis, black] (0,-axis) -- (0,axis);
\draw[name path=xaxis, black] (-axis,0) -- (axis,0) coordinate (xplus);
\draw[name path=e, very thick, orange, dotted] (thetae:re) node[label={[label distance=-17mm]270:\rotatebox{10}{\normalsize$\Bex = (\re, \thetae)$}}] (e){}  -- (180+thetae:re) node[](){};
\draw[very thick, magenta, dotted] (thetay:re) node[](){} -- (180+thetay:re) node[label={[label distance=-17mm]270:\rotatebox{350}{\normalsize$\Byx = (\ry, \thetay)$}}] (){};
\draw[name path=eperpone, ultra thick, red] (alpha+thetae:R) node[label={[label distance=-5mm]85:\normalsize$(1,\alpha + \thetae)$}] (circleone){}  -- (-alpha+thetae:R) node[label={[label distance=-4mm]300:\normalsize$(1,-\alpha + \thetae)$}] (circletwo){};
\draw[name path=epertwo, ultra thick, red] (180-alpha+thetae:R) node[label={[label distance=-1mm]180:\normalsize$(1,\pi -\alpha + \thetae)$}] (circlethree){}  -- (180+alpha+thetae:R) node[label={[label distance=0mm]180:\normalsize$(1,\pi + \alpha + \thetae)$}] (circlefour){};
\draw[very thick, red, dashed] (circleone) -- (origin);
\draw[ultra thick, red] (origin) ++(alpha+thetae:1) arc (alpha+thetae:180-alpha+thetae:R);
\draw[ultra thick, red] (origin) ++(180+alpha+thetae:1) arc (180+alpha+thetae:360-alpha+thetae:R);
\draw[thick, blue] (180-alpha+thetae:R) -- ($(origin)!(180-alpha+thetae:R)!(thetay:re)$) node (){};
\draw[thick, blue] (-alpha+thetae:R) -- ($(origin)!(-alpha+thetae:R)!(180+thetay:re)$) node (){};
\end{scope}
\begin{scope}[on background layer]
\draw[fill=blue,opacity=.0,fill opacity=.2] (0,0) -- (alpha+thetae:R) -- (thetae:{cos(alpha)}) -- cycle;
\draw[fill=blue,opacity=.0,fill opacity=.2] (0,0) -- (180+alpha+thetae:R) -- (180+thetae:{cos(alpha)}) -- cycle;
\draw[fill=olive,opacity=.0,fill opacity=.2] (0,0) -- (-alpha+thetae:R) -- (thetae:{cos(alpha)}) -- cycle;
\draw[fill=olive,opacity=.0,fill opacity=.2] (0,0) -- (180-alpha+thetae:R) -- (180+thetae:{cos(alpha)}) -- cycle;
\draw[fill=red,opacity=.0,fill opacity=.2]  ++(alpha+thetae:1) arc (alpha+thetae:180-alpha+thetae:R);
\draw[fill=red,opacity=.0,fill opacity=.2] (0,0) -- (alpha+thetae:R) -- (180-alpha+thetae:R) -- cycle;
\draw[fill=red,opacity=.0,fill opacity=.2]  ++(180+alpha+thetae:1) arc (180+alpha+thetae:360-alpha+thetae:R);
\draw[fill=red,opacity=.0,fill opacity=.2] (0,0) -- (180+alpha+thetae:R) -- (-alpha+thetae:R) -- cycle;
\end{scope}
\path pic["\normalsize$\alpha$",draw=black,thick,angle eccentricity=1.3,double,angle radius=1cm] {angle=e--origin--circleone};
\end{tikzpicture}
\end{tabular}
\caption{Visualizing QCQP in \cref{eq_qcqp}. Each point is shown in polar coordinates, i.e., a point $(r,\theta)$ denotes $(r\cos{\theta}, r\sin{\theta})$. The point $\Bex$ is shown in orange. The region enclosed by the solid red lines and red arcs is the feasible space in \cref{eq_qcqp}. The feasible space is shaded in blue, red, and olive to represent \cref{item:regime_1}, \cref{item:regime_2}, and \cref{item:regime_3}, respectively. The point $\Byx$ is shown in magenta and falls under \cref{item:regime_3} The optimal solution set $\truervba$ consists of points $(1, \pi - \alpha + \thetae)$ and $(1, - \alpha, + \thetae$).}
\label{fig_proofs_qcqp}
\end{figure}
\begin{proof}
Consider any $\Bex = \polar{\re}{\thetae}$ (shown in orange in \cref{fig_proofs_qcqp}) and $\varepsilon > 0$. 
Let $\ParameterSetTrue(\Bex) \defn \normalbraces{\rvba: \normalinner{\rvba}{\Bex}^2 \leq \varepsilon \stext{and} \rvba \in \ball}$ be the feasible space in \cref{eq_qcqp}, i.e., the set of all $\rvba$ satisfying the fairness constraint in \cref{eq_qcqp} w.r.t $\Bex$. Then, $\ParameterSetTrue(\Bex)$ is the region enclosed by the solid red lines and red arcs in \cref{fig_proofs_qcqp}, i.e., the union of the regions shaded in various colors. To obtain this region, consider any $\rvba \in \ball$. Then, ensuring $\biginner{\rvba}{\Bex}^2 \leq \varepsilon$ is equivalent to ensuring that the projection of $\rvba$ on the line joining the origin and $\Bex$ (shown in dotted orange in \cref{fig_proofs_qcqp}) is no more than $\sqrt{\varepsilon}/\re$. Therefore, we drop perpendiculars to this line at a distance $\sqrt{\varepsilon}/\re$ on either side of the origin. We obtain the intersection of these perpendiculars and $\ball$ by standard algebra and trigonometry (these points are shown in \cref{fig_proofs_qcqp} in polar coordinates), and the enclosed region is $\ParameterSetTrue(\Bex)$. 

Now, consider any $\Byx = \polar{\ry}{\thetay}$ (shown in magenta in \cref{fig_proofs_qcqp}). Then,  obtaining $\truervba$ is equivalent to obtaining every $\rvba \in \ParameterSetTrue(\Bex)$ such that the projection of $\rvba$ on the line joining the origin and $\Byx$ (shown in dotted magenta in \cref{fig_proofs_qcqp}) is maximized. As a result, every $\rvba \in \truervba$ lies on the boundary of this region. To obtain $\truervba$, we drop perpendiculars to the line joining the origin and $\Byx$ from every $\rvba$ on the boundary of this region, e.g., we show two such perpendiculars in solid blue in \cref{fig_proofs_qcqp}. We add $\rvba$ to $\truervba$ if the distance between the origin and the point where the corresponding perpendicular intersects the line is maximum. It is straightforward to verify
that \cref{item:regime_1}, \cref{item:regime_2}, and \cref{item:regime_3} correspond to the regions shaded in blue, red, and olive, respectively (see \cref{fig_proofs_qcqp}).
\end{proof}

\section{Proof of \cref{thm_2d_robust}: \robustqcqpsubspace}
\label{subsec_proof_thm_2d_robust}
It suffices to show that the projection of any $\rvba \in \ball$, satisfying the constraint $\normalinner{\rvba}{\B}^2 \leq \varepsilon$ for all $\B \in \ballunc$, onto the subspace spanned by $\Byx$ and $\BexEstimated$ preserves the value of the objective and continues to satisfy the constraint.  Fix some $\rvba$ in the feasible space. Let $\proj(\rvba)$ denote the projection of $\rvba$ on the subspace spanned by $\Byx$ and $\BexEstimated$. The proof that the value of the objective remains preserved is analogous to the proof of \cref{thm_2d} (see \cref{subsec_proof_thm_2d}). 

\paragraph{The constraint is satisfied.} It suffices to show $\proj(\rvba) \in \ball$ and $\normalinner{\proj(\rvba)}{\B}^2 \leq \varepsilon$ for all $\B \in \ballunc$. It is easy to see $\proj(\rvba) \in \ball$ because $\stwonorm{\proj(\rvba)}\le \stwonorm{\rvba}$. Now, consider a set of orthonormal basis $\normalbraces{\mathbf{n}_e,\mathbf{n}_y} \defn \bigbraces{\frac{\BexEstimated}{\stwonorm{\BexEstimated}}, \mathbf{n}_y}$ spanned by $\BexEstimated$ and $\Byx$ where $\mathbf{n}_e = \frac{\BexEstimated}{\stwonorm{\BexEstimated}}$ and $\mathbf{n}_y$ is chosen to be orthogonal to $\mathbf{n}_e$. Then,
\begin{equation}
    \proj(\rvba) = \biginner{\rvba}{\mathbf{n}_e} \mathbf{n}_e + \biginner{\rvba}{\mathbf{n}_y} \mathbf{n}_y.
\end{equation}
Using the definition of $\mathbf{n}_e$ and the orthogonality between $\mathbf{n}_y$ and $\BexEstimated$, we have
\begin{equation}
    \biginner{\proj(\rvba)}{\BexEstimated} = \biginner{\rvba}{\mathbf{n}_e} \stwonorm{\BexEstimated} \sequal{(a)} \biginner{\rvba}{\BexEstimated}, \label{eq_proj_estimated}
\end{equation}
where $(a)$ follows from the definition of $\mathbf{n}_e$.

Now, for every $\B \in \ballunc$, we have $\B = \BexEstimated +\Delta \B$, where $\|\Delta \B\|\le \radius$. Then,
\begin{align}
    \biginner{\rvba}{\B} = \biginner{\rvba}{\BexEstimated +\Delta \B} = \biginner{\proj(\rvba)}{\BexEstimated} +\biginner{\rvba}{\Delta \B} & \sequal{\cref{eq_proj_estimated}} \biginner{\proj(\rvba)}{\BexEstimated} +\biginner{\rvba}{\Delta \B} \\
    & \leq  
    \biginner{\proj(\rvba)}{\BexEstimated} +\stwonorm{\rvba} \radius \sless{(a)} \sqrt{\varepsilon},
    \label{eq_abc}
\end{align}
where $(a)$ follows because $\biginner{\rvba}{\B}^2 \leq \varepsilon$. Similarly, we can show $\normalinner{\rvba}{\B} \geq -\sqrt{\varepsilon}$. Now, we can bound $\normalinner{\proj(\rvba)}{\B}$ as follows:
\begin{align}
    \biginner{\proj(\rvba)}{\B} = \biginner{\proj(\rvba)}{\BexEstimated +\Delta \B}& =  \biginner{\proj(\rvba)}{\BexEstimated} +\biginner{\proj(\rvba)}{\Delta \B} \\
    & \leq \biginner{\proj(\rvba)}{\BexEstimated} +\stwonorm{\proj(\rvba)} \radius \\
    & \sless{(a)} \biginner{\proj(\rvba)}{\BexEstimated} +\stwonorm{\rvba}r  \sless{\cref{eq_abc}} \sqrt{\varepsilon},
    \end{align}
where $(a)$ follows because $\stwonorm{\proj(\rvba)}\le \stwonorm{\rvba}$. Similarly, we can show $\normalinner{\proj(\rvba)}{\B} \geq -\sqrt{\varepsilon}$ completing the proof.

\section{Proof of \cref{thm_ibqcqp_limited_infinite}: \qcqpinf}
\label{sec_proof_qcqpinf}
Fix any $\hre \geq 0$, $\hthetae \in [0,2\pi]$, $\Deltal \geq 0$, and $\phil \in [0,\pi/2]$. For any $\B$, let $\ParameterSetTrue(\B) \defn \normalbraces{\rvba: \normalinner{\rvba}{\B}^2 \leq \varepsilon \stext{and} \rvba \in \ball}$ be the set of all $\rvba$ satisfying the fairness constraint in \cref{eq_qcqp} w.r.t $\B$. For a given $\B$, $\ParameterSetTrue(\B)$ can be constructed as described in the proof of \cref{lemma_optimal_a_qcqp} in \cref{subsec_characterizing_qcqp}. See \cref{fig_proofs_qcqp} for reference.

Now, to show that the optimization problem in \cref{eq_qcqp_limited_infinite_worse} is equivalent to the optimization problem in \cref{eq_qcqp_limited_infinite}, it suffices to show the corresponding constraints are equivalent, i.e.,
\begin{align}
    \bigcap_{\B \in \cS} \ParameterSetTrue(\B) = \bigcap_{\B \in \bcS} \ParameterSetTrue(\B). \label{eq_equal_intersections}
\end{align}
Consider any $\B_1, \B_2 \in \cS$ of the form: $\B_1 = \polar{r_1}{\theta}$ and $\B_2 = \polar{r_2}{\theta}$ for some $r_1, r_2, \theta$ such that 
$\normalabs{r_1 - \hre} \leq \Deltal$, $\normalabs{r_2 - \hre} \leq \Deltal$, and $\normalabs{\theta - \hthetae} \leq \phi$.
Without loss of generality, let $r_1 \geq r_2$. Now, it is straightforward to see that $\ParameterSetTrue(\B_1) \subseteq \ParameterSetTrue(\B_2)$. Therefore, \cref{eq_equal_intersections} follows from the definition of $\bcS$.

\subsection{Characterizing optimal $\rvba$ in \cref{eq_qcqp_limited_infinite_worse}}
\label{subsec_characterizing_qcqp_limited_infinite}
Fix any $\varepsilon > 0$. Let $\BexEstimated = \polar{\hre}{\hthetae}$ for some $\hre > 0$ and $\hthetae \in [0,2\pi]$. Consider some $\Delta \geq 0$ and $\phi \in [0, \pi/2]$ such that $\Bex \in \cS$ with probability at least $1 - \delta$ where $\cS$ is as defined in \cref{eq_angular_sector}. Let $\Byx = \polar{\ry}{\thetay}$ for some $\ry > 0$ and $\thetay \in [0,2\pi]$. Let $\limitedinfrvba$ denote the set of optimal $\rvba$ in \cref{eq_qcqp_limited_infinite_worse}, i.e.,
\begin{align}
    \limitedinfrvba = \arg\max_{\rvba \in \ball} \biginner{\rvba}{\Byx}^2 \stext{s.t}  \biginner{\rvba}{\B}^2 \leq \varepsilon \stext{for all} \B \in \cS. \label{eq_qcqp_limited_infinite_optimal_a}
\end{align}
Then, the following Lemma characterizes $\limitedinfrvba$ depending on the values of $\BexEstimated$, $\Byx$, $\Delta$, $\phi$, and $\varepsilon$.
\begin{lemma}\label{lemma_optimal_a_limited_infinite}
Let $\balpha \in [0,2\pi]$ be such that $\cos{\balpha} = 
\sqrt{\varepsilon}/(\hre + \Delta)
$ and $\sin{\balpha} = 
\sqrt{1 - \varepsilon/(\hre + \Delta)^2}
$. Define the function $\Afun: [0,2\pi] \to [-1,1]^2$ such that $\Afun[(\theta)] = \polar{}{\theta}$. Then,
\begin{enumerate}[leftmargin=7mm]
    \item If $\sqrt{\varepsilon} \geq (\hre+ \Delta) \sin{\phi}$:
\begin{enumerate}[label=Case 2.\arabic*., leftmargin=12mm, labelsep=0.5em]
    \item \label{item:case_21} $\limitedinfrvba = \frac{\sqrt{\varepsilon}}{\hre + \Delta} \bigbraces{\Afun[(\thetay)], \Afun[(\pi + \thetay)]}$ $\iff$ $\thetay \in [\hthetae - \phi, \hthetae + \phi] \cup [\pi + \hthetae- \phi, \pi +\hthetae + \phi]$, 
    \item \label{item:case_22} $\limitedinfrvba = \bigbraces{\Afun[(\hthetae + \phi + \balpha)], \Afun[(\pi + \hthetae + \phi + \balpha)]}$ $\iff$ $\thetay \in [\hthetae + \phi, \hthetae + \phi + \balpha] \cup [\pi + \hthetae+\phi, \pi + \hthetae+\phi + \balpha]$, 
    \item \label{item:case_23} $\limitedinfrvba = \bigbraces{\Afun[(\thetay)], \Afun[(\pi + \thetay)]}$ $\iff$ $\thetay \in [\hthetae + \phi + \balpha, \pi + \hthetae- \phi - \balpha] \cup [\pi +\hthetae + \phi +\balpha, 2\pi + \hthetae - \phi - \balpha]$,
    \item \label{item:case_24} $\limitedinfrvba = \bigbraces{\Afun[(\pi + \hthetae-\phi - \balpha)], \Afun[(\hthetae -\phi - \balpha)]}$ $\iff$ $\thetay \in [\pi +\hthetae-\phi - \balpha, \pi +\hthetae-\phi] \cup [\hthetae-\phi - \balpha,\hthetae -\phi]$.
\end{enumerate}
\item If $\sqrt{\varepsilon} \leq (\hre+ \Delta) \sin{\phi}$:
\begin{enumerate}[label=Case 3.\arabic*., leftmargin=12mm, labelsep=0.5em]
    \item \label{item:case_31} $\limitedinfrvba = \frac{\sqrt{\varepsilon}}{\hre + \Delta} \bigbraces{\Afun[(\thetay)], \Afun[(\pi + \thetay)]}$ $\iff$ $\thetay \in [\hthetae- \phi, \hthetae+\phi] \cup [\pi +\hthetae- \phi, \pi + \hthetae+\phi]$,
    \item \label{item:case_32} $\limitedinfrvba = \frac{\sqrt{\varepsilon}}{(\hre + \Delta)\sin{\phi}} \bigbraces{\Afun(\frac{\pi}{2} + \hthetae),  \Afun(\frac{-\pi}{2} + \hthetae)}$ $\iff$ $\thetay \in [\hthetae + \phi, \pi + \hthetae - \phi] \cup [\pi + \hthetae + \phi, 2\pi + \hthetae- \phi]$.
\end{enumerate}
\end{enumerate}
\end{lemma}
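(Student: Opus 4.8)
\emph{Reduction and polar reformulation.} By \cref{thm_ibqcqp_limited_infinite}, the constraint ``$\biginner{\rvba}{\B}^2\le\varepsilon$ for all $\B\in\cS$'' in \cref{eq_qcqp_limited_infinite_worse} is equivalent to the same constraint with $\B$ ranging only over the outer arc $\bcS=\normalbraces{\polar{(\hre+\Deltau)}{\theta}:|\theta-\hthetae|\le\phiu}$, so I would work with \cref{eq_qcqp_limited_infinite}. After a rotation of coordinates I may take $\hthetae=0$ and recover the stated intervals by rotating back. Writing $\rvba=\polar{\rho}{\psi}$ with $\rho\in[0,1]$ and setting $\cos\balpha\defn\sqrt{\varepsilon}/(\hre+\Deltau)$, feasibility becomes the family of bounds $\rho\normalabs{\cos(\psi-\theta)}\le\cos\balpha$, $\theta\in[-\phiu,\phiu]$, together with $\rho\le1$; and since $\biginner{\rvba}{\Byx}^2=\ry^2\rho^2\cos^2(\psi-\thetay)$ with $\ry$ fixed, maximizing the objective is the same as maximizing the absolute projection $\rho\normalabs{\cos(\psi-\thetay)}$ of $\rvba$ onto the $\Byx$-direction.

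\emph{Feasible region and phase transition.} The feasible set is convex and contains the origin, hence star-shaped; it is the region $\rho\le\rho_{\max}(\psi)$ with $\rho_{\max}(\psi)=\min\normalbraces{1,\cos\balpha/g(\psi)}$ and $g(\psi)\defn\max_{|\theta|\le\phiu}\normalabs{\cos(\psi-\theta)}$. A short trigonometric computation gives $g(\psi)=1$ for $|\psi|\le\phiu$ (mod $\pi$) and $g(\psi)=\cos(|\psi|-\phiu)$ for $\phiu\le|\psi|\le\pi/2$, with the reflected formula for $\pi/2\le|\psi|\le\pi-\phiu$. Hence the boundary consists of: (i) circular arcs $\rho=\cos\balpha$ over $\psi\in[-\phiu,\phiu]$ and its antipode; (ii) straight segments on the lines $\biginner{\rvba}{\Afun[(\phiu)]}=\cos\balpha$, $\biginner{\rvba}{\Afun[(-\phiu)]}=-\cos\balpha$ and antipodes; and (iii) either unit-circle arcs or a single corner. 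This is the configuration drawn in \cref{fig_proofs_1}; it is built just like the feasible set in the proof of \cref{lemma_optimal_a_qcqp} (\cref{subsec_characterizing_qcqp}), now intersected over all points of $\bcS$. The dichotomy $\sqrt{\varepsilon}\ge(\hre+\Deltau)\sin\phiu$ versus $\sqrt{\varepsilon}\le(\hre+\Deltau)\sin\phiu$ is precisely $\balpha\le\pi/2-\phiu$ versus $\balpha\ge\pi/2-\phiu$: if $\cos\balpha\ge\sin\phiu$, each segment in (ii) meets the unit circle (at polar angle $\phiu+\balpha$, mod $\pi$) before the two segments would meet, leaving unit-circle arcs over $\psi\in[\phiu+\balpha,\pi-\phiu-\balpha]$ and its antipode (panel (a)); if $\cos\balpha\le\sin\phiu$, the two segments meet at $\tfrac{\cos\balpha}{\sin\phiu}\Afun[(\pi/2)]$, which lies strictly inside the ball, so no unit-circle arc survives (panel (b)).

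\emph{Optimizing the objective.} Since $\rho\normalabs{\cos(\psi-\thetay)}$ is the absolute value of a linear functional, its maximum over the convex feasible region lies on the boundary, and on each boundary piece the signed projection onto $\Byx$ is unimodal in the arc parameter (pieces (i), (iii)) or monotone along the segment (piece (ii)); so the optimizer follows from a direct comparison across the pieces, with the $\rvba\mapsto-\rvba$ symmetry of the objective supplying the antipodal copy. When $\thetay$ points into the wedge $[-\phiu,\phiu]$ the optimum is the arc point $\rho=\cos\balpha$, $\psi=\thetay$, i.e.\ $\tfrac{\sqrt{\varepsilon}}{\hre+\Deltau}\Afun[(\thetay)]$ and its antipode (Case~2.1 / Case~3.1, since $\cos\balpha=\sqrt{\varepsilon}/(\hre+\Deltau)$). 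If $\cos\balpha\ge\sin\phiu$: for $\thetay\in[\phiu,\phiu+\balpha]$ the optimum slides to the corner $\psi=\phiu+\balpha$, $\rho=1$ where a segment meets the circle (Case~2.2); for $\thetay$ inside the unit-circle-arc range the optimum is $\psi=\thetay$, $\rho=1$ (Case~2.3); the reflection across the $\hthetae$-axis gives Case~2.4. If $\cos\balpha\le\sin\phiu$: for $\thetay\in[\phiu,\pi-\phiu]$ the optimum is the interior corner $\tfrac{\cos\balpha}{\sin\phiu}\Afun[(\pi/2)]$ (Case~3.2). Rotating back by $\hthetae$ recovers the stated formulas and intervals.

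\emph{Main obstacle.} The only genuine work is the bookkeeping: evaluating $g$ on each sub-arc, checking that the boundary pieces join continuously at $\pm\phiu$, $\pm(\phiu+\balpha)$ and $\pm\pi/2$, and verifying that the $\thetay$-intervals of Cases~2.1--2.4 (resp.\ Cases~3.1--3.2) tile $[0,2\pi)$. At a $\thetay$ on a shared endpoint of two cases, both descriptions return the same optimal value (and the same optimal set up to that endpoint), so the overlap is harmless.
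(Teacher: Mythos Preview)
Your proposal is correct and follows essentially the same approach as the paper: reduce to the outer arc $\bcS$ via \cref{thm_ibqcqp_limited_infinite}, describe the intersection $\cap_{\B\in\bcS}\ParameterSetTrue(\B)$ as a union of arcs and straight segments whose shape depends on the phase transition $\sqrt{\varepsilon}\gtrless(\hre+\Delta)\sin\phi$, and then maximize the projection onto $\Byx$ piece by piece as in \cref{lemma_optimal_a_qcqp}. The only difference is presentational: the paper leans on \cref{fig_proofs_qcqp_limited_infinite} and the phrase ``standard algebra and trigonometry'' for the boundary description and the case verification, whereas you make the polar computation explicit via $\rho_{\max}(\psi)=\min\{1,\cos\balpha/g(\psi)\}$ and identify the corners analytically.
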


\begin{proof}
For any $\B$, let $\ParameterSetTrue(\B) \defn \normalbraces{\rvba: \normalinner{\rvba}{\B}^2 \leq \varepsilon \stext{and} \rvba \in \ball}$ be the set of all $\rvba$ satisfying the fairness constraint in \cref{eq_qcqp} w.r.t $\B$. Then, from \cref{eq_equal_intersections}, the constraint in \cref{eq_qcqp_limited_infinite_optimal_a} is equivalent to $\cap_{\B \in \bcS} \ParameterSetTrue(\B)$. 
Consider any $\B_1 \in \bcS$. Then, $\B_1 = \polar{(\hre+ \Delta)}{\theta}$ for some $\normalabs{\theta - \hthetae} \leq \phi$. Further, $\ParameterSetTrue(\B_1)$ can be constructed as described in the proof of \cref{lemma_optimal_a_qcqp} in \cref{subsec_characterizing_qcqp}. See \cref{fig_proofs_qcqp} for reference. From standard algebra and trigonometry, it is easy to see the two straight lines forming the boundary of $\ParameterSetTrue(\B_1)$ in \cref{fig_proofs_qcqp} intersect the y-axis at points $(0, \frac{\sqrt{\varepsilon}}{(\hre+ \Delta) \sin{\theta}})$ and $(0, \frac{-\sqrt{\varepsilon}}{(\hre+ \Delta) \sin{\theta}})$. Then, depending on whether these points lie inside or outside $\ball$ when $\theta = \phi$, we have two cases.
\begin{enumerate}
    \item Suppose $\sqrt{\varepsilon} \geq (\hre+ \Delta) \sin{\phi}$. Then, it is straightforward to see that $\cap_{\B \in \bcS} \ParameterSetTrue(\B)$ is the region enclosed by the solid blue lines and blue arcs in \cref{fig_proofs_qcqp_limited_infinite}(a) (see \cref{fig_proofs_1}(a) for reference), i.e., the union of the regions shaded in various colors. The rest of the proof is similar to the proof of \cref{lemma_optimal_a_qcqp}.    Lastly, it is straightforward to verify that \cref{item:case_21}, \cref{item:case_22}, \cref{item:case_23}, and \cref{item:case_24} correspond to the regions shaded in green, blue, red, and olive, respectively. 
    \item Suppose $\sqrt{\varepsilon} \leq (\hre+ \Delta) \sin{\phi}$. Then, it is straightforward to see that $\cap_{\B \in \bcS} \ParameterSetTrue(\B)$ is the region enclosed by the solid blue lines and blue arcs in \cref{fig_proofs_qcqp_limited_infinite}(b) (see \cref{fig_proofs_1}(b) for reference), i.e., the union of the regions shaded in various colors. The rest of the proof is similar to the proof of \cref{lemma_optimal_a_qcqp}. Lastly, it is straightforward to verify that \cref{item:case_31} and \cref{item:case_32} correspond to the regions shaded in green and blue, respectively. 
\end{enumerate}
\end{proof}
\begin{figure}[ht]
\centering
\begin{tabular}{cc}
\adjustbox{valign=c}{
\begin{tikzpicture}[scale=1.75, declare function={axis = 1; R = 1; remin = 1.4; remax = 1.8; phi = 15; alphatilde = 60;},
dot/.style={circle,fill,inner sep=1.5pt},
]
\centering
\begin{scope}[nodes={dot}]
\draw[name path=circ]   (0,0) coordinate (origin) circle[radius=R]; 
\draw[name path=yaxis, black] (0,-axis) -- (0,axis);
\draw[name path=xaxis, black] (-axis,0) -- (axis,0) coordinate (xplus);
\draw[name path=ori_to_eminus_upper, thick, orange, dashed] (phi:remin) node[label={[label distance=-16mm]290:\rotatebox{40}{\normalsize$(\hat{r}_e \!-\! \Delta, \phi)$}}] (eminusupper){}  -- (origin);
\draw[name path=ori_to_eminus_lower, thick, orange, dashed] (-phi:remin) node[label={[label distance=-10mm]270:\rotatebox{40}{\normalsize$(\hat{r}_e \!-\! \Delta, \!-\phi)$}}] (eminuslower){}  -- (origin);
\draw[name path=ori_to_eplus_upper, thick, orange] (phi:remax) node[label={[label distance=-16mm]290:\rotatebox{40}{\normalsize$(\hat{r}_e \!+\! \Delta, \phi)$}}](eplusupper){} -- (eminusupper);
\draw[name path=ori_to_eplus_lower, thick, orange] (-phi:remax) node[label={[label distance=-10mm]270:\rotatebox{40}{\normalsize$(\hat{r}_e \!+\! \Delta, \!-\phi)$}}] (epluslower){}  -- (eminuslower);
\draw[name path=circleone_intersectionone, ultra thick, blue] (phi+alphatilde:R) node[label={[label distance=-20mm]180:\normalsize$(1,\phi + \balpha)$}] (circleone){} -- ($(origin)!(circleone)!(eplusupper)$) node (intersectionone){};
\draw[name path=circletwo_intersectiontwo, ultra thick, blue] (-phi-alphatilde:R) node[label={[label distance=-0mm]0:\normalsize$(1,-\phi - \balpha)$}] (circletwo){} -- ($(origin)!(circletwo)!(epluslower)$) node (intersectiontwo){};
\draw[name path=circlethree_intersectionthree, ultra thick, blue] (180-phi-alphatilde:R) node[label={[label distance=2mm]180:\normalsize$(1,\pi - \phi - \balpha)$}] (circlethree){} -- ($(origin)!(circlethree)!(180-phi:remax)$) node (intersectionthree){};
\draw[name path=circlefour_intersectionfour, ultra thick, blue] (180+phi+alphatilde:R) node[label={[label distance=2mm]180:\normalsize$(1,\pi+\phi+\balpha)$}] (circlefour){} -- ($(origin)!(circlefour)!(180+phi:remax)$) node (intersectionfour){};
\draw[very thick, orange] (origin) ++(phi:remin) arc (phi:-phi:remin);
\draw[very thick, orange] (origin) ++(phi:remax) arc (phi:-phi:remax);
\draw[ultra thick, blue] (origin) ++(phi+alphatilde:R) arc (phi+alphatilde:180-phi-alphatilde:R);
\draw[ultra thick, blue] (origin) ++(360-phi-alphatilde:R) arc (360-phi-alphatilde:180+phi+alphatilde:R);

\draw[ultra thick, blue] (origin) ++(phi:{cos(alphatilde)}) arc (phi:-phi:{cos(alphatilde)});
\draw[ultra thick, blue] (origin) ++(180-phi:{cos(alphatilde)}) arc (180-phi:180+phi:{cos(alphatilde)});
\end{scope}
\draw[very thick, blue, dotted] (circleone) -- (origin);
\path pic["\normalsize$\balpha$",draw=black,thick,angle eccentricity=1.4,double,angle radius=0.5cm] {angle=eplusupper--origin--circleone};
\begin{scope}[on background layer]
\draw[fill=blue,opacity=.0,fill opacity=.2] (0,0) -- (phi:{cos{alphatilde}}) -- (phi+alphatilde:1) -- cycle;
\draw[fill=olive,opacity=.0,fill opacity=.2] (0,0) -- (180-phi:{cos{alphatilde}}) -- (180-phi-alphatilde:1) -- cycle;
\draw[fill=olive,opacity=.0,fill opacity=.2] (0,0) -- (-phi:{cos{alphatilde}}) -- (-phi-alphatilde:1) -- cycle;
\draw[fill=blue,opacity=.0,fill opacity=.2] (0,0) -- (180+phi:{cos{alphatilde}}) -- (180+phi+alphatilde:1) -- cycle;
\draw[fill=red,opacity=.0,fill opacity=.2] (0,0) -- (phi+alphatilde:R) -- (180-phi-alphatilde:R) -- cycle;
\draw[fill=red,opacity=.0,fill opacity=.2]  ++(phi+alphatilde:R) arc (phi+alphatilde:180-phi-alphatilde:R);
\draw[fill=red,opacity=.0,fill opacity=.2] (0,0) -- (-phi-alphatilde:R) -- (180+phi+alphatilde:R) -- cycle;
\draw[fill=red,opacity=.0,fill opacity=.2]  ++(360-phi-alphatilde:R) arc (360-phi-alphatilde:180+phi+alphatilde:R);
\draw[fill=green,opacity=.0,fill opacity=.2] (0,0) -- (phi:{cos{alphatilde}}) -- (-phi:{cos{alphatilde}}) -- cycle;
\draw[fill=green,opacity=.0,fill opacity=.2]  ++(phi:{cos{alphatilde}}) arc (phi:-phi:{cos{alphatilde}});
\draw[fill=green,opacity=.0,fill opacity=.2] (0,0) -- (180-phi:{cos{alphatilde}}) -- (180+phi:{cos{alphatilde}}) -- cycle;
\draw[fill=green,opacity=.0,fill opacity=.2]  ++(180-phi:{cos{alphatilde}}) arc (180-phi:180+phi:{cos{alphatilde}});
\end{scope}
\end{tikzpicture}}
&
\adjustbox{valign=c}{
\begin{tikzpicture}[scale=1.75, declare function={axis = 1; R = 1; remin = 1.2; remax = 2.0; phi = 30; alphatilde = 63; yintercept = 0.9;},
dot/.style={circle,fill,inner sep=1.5pt},
]
\centering
\begin{scope}[nodes={dot}]
\draw[name path=circ]   (0,0) coordinate (origin) circle[radius=R]; 
\draw[name path=yaxis, black] (0,-axis) -- (0,axis);
\draw[name path=xaxis, black] (-axis,0) -- (axis,0) coordinate (xplus);
\draw[name path=ori_to_eminus_upper, thick, orange, dashed] (phi:remin) node[label={[label distance=-16mm]120:\rotatebox{40}{\normalsize$(\hat{r}_e \!-\! \Delta, \phi)$}}] (eminusupper){}  -- (origin);
\draw[name path=ori_to_eminus_lower, thick, orange, dashed] (-phi:remin) node[label={[label distance=-10mm]160:\rotatebox{40}{\normalsize$(\hat{r}_e \!-\! \Delta, \!-\phi)$}}] (eminuslower){}  -- (origin);
\draw[name path=ori_to_eplus_upper, thick, orange] (phi:remax) node[label={[label distance=-16mm]120:\rotatebox{40}{\normalsize$(\hat{r}_e \!+\! \Delta, \phi)$}}](eplusupper){} -- (eminusupper);
\draw[name path=ori_to_eplus_lower, thick, orange] (-phi:remax) node[label={[label distance=-10mm]160:\rotatebox{40}{\normalsize$(\hat{r}_e \!+\! \Delta, \!-\phi)$}}] (epluslower){}  -- (eminuslower);
\draw[name path=circleone_intersectionone, ultra thick, blue] (0,yintercept) node[label={[label distance=-1mm]180:\normalsize$(\frac{\cos{\balpha}}{\sin{\phi}},\frac{\pi}{2})$}] (circleone){} -- ($(origin)!(circleone)!(eplusupper)$) node (intersectionone){};
\draw[name path=circletwo_intersectiontwo, ultra thick, blue] (0,-yintercept) node[label={[label distance=-1mm]180:\normalsize$(\frac{\cos{\balpha}}{\sin{\phi}},\frac{3\pi}{2})$}] (circletwo){} -- ($(origin)!(circletwo)!(epluslower)$) node (intersectiontwo){};
\draw[name path=circlethree_intersectionthree, ultra thick, blue] (0,yintercept) -- ($(origin)!(circleone)!(180-phi:remax)$) node (intersectionthree){};
\draw[name path=circlefour_intersectionfour, ultra thick, blue] (0,-yintercept) -- ($(origin)!(circletwo)!(180+phi:remax)$) node (intersectionfour){};
\draw[very thick, orange] (origin) ++(phi:remin) arc (phi:-phi:remin);
\draw[very thick, orange] (origin) ++(phi:remax) arc (phi:-phi:remax);

\draw[ultra thick, blue] (origin) ++(phi:{cos(alphatilde)}) arc (phi:-phi:{cos(alphatilde)});
\draw[ultra thick, blue] (origin) ++(180-phi:{cos(alphatilde)}) arc (180-phi:180+phi:{cos(alphatilde)});
\end{scope}
\begin{scope}[on background layer]
\draw[fill=blue,opacity=.0,fill opacity=.2] (0,0) -- (phi:{cos{alphatilde}}) -- (0,yintercept) -- (180-phi:{cos{alphatilde}}) -- cycle;
\draw[fill=blue,opacity=.0,fill opacity=.2] (0,0) -- (-phi:{cos{alphatilde}}) -- (0,-yintercept) -- (180+phi:{cos{alphatilde}}) -- cycle;
\draw[fill=green,opacity=.0,fill opacity=.2] (0,0) -- (phi:{cos{alphatilde}}) -- (-phi:{cos{alphatilde}}) -- cycle;
\draw[fill=green,opacity=.0,fill opacity=.2]  ++(phi:{cos{alphatilde}}) arc (phi:-phi:{cos{alphatilde}});
\draw[fill=green,opacity=.0,fill opacity=.2] (0,0) -- (180-phi:{cos{alphatilde}}) -- (180+phi:{cos{alphatilde}}) -- cycle;
\draw[fill=green,opacity=.0,fill opacity=.2]  ++(180-phi:{cos{alphatilde}}) arc (180-phi:180+phi:{cos{alphatilde}});
\end{scope}
\end{tikzpicture}}
\\
(a) QCQP in \cref{eq_qcqp_limited_infinite_worse} if $\sqrt{\varepsilon} \geq (\hre+ \Delta) \sin{\phi}$
&
(b) QCQP in \cref{eq_qcqp_limited_infinite_worse} if $\sqrt{\varepsilon} \leq (\hre+ \Delta) \sin{\phi}$
\end{tabular}
\caption{Visualizing QCQP in \cref{eq_qcqp_limited_infinite_worse} for $\varepsilon = 0.9$, $\hre = 1.6$, and $\hthetae = 0$. We set $\Delta = 0.2$ and $\phi = 15$ for panel $(a)$, and $\Delta = 0.4$ and $\phi = 30$ for panel $(b)$. Each point is shown in polar coordinates, i.e., a point $(r,\theta)$ denotes $(r\cos{\theta}, r\sin{\theta})$. 
The annular sector $\cS$ is shown in orange. The region enclosed by the solid blue lines and blue arcs is the feasible space in \cref{eq_qcqp_limited_infinite_worse}. In Panel (a), the feasible space is shaded in green, blue, red, and olive to represent \cref{item:case_21}, \cref{item:case_22}, \cref{item:case_23}, and \cref{item:case_24} respectively. In Panel (b), the feasible space is shaded in green and blue to represent \cref{item:case_31} and \cref{item:case_32}, respectively.}
\label{fig_proofs_qcqp_limited_infinite}
\end{figure}

\section{Proof of \cref{prop_ibqcqp_limited}: \approxqcqp}
\label{subsec_proof_lemma_ibqcqp_approx}
Fix any $\hre \geq 0$, $\thetae \in [0,2\pi]$, $\Deltal \geq 0$, and $\phil \in [0,\pi/2]$. For any $\B$, let $\ParameterSetTrue(\B) \defn \normalbraces{\rvba: \normalinner{\rvba}{\B}^2 \leq \varepsilon \stext{and} \rvba \in \ball}$ be the set of all $\rvba$ satisfying the fairness constraint in \cref{eq_qcqp} w.r.t $\B$. For a given $\B$, $\ParameterSetTrue(\B)$ can be constructed as described in the proof of \cref{lemma_optimal_a_qcqp} in \cref{subsec_characterizing_qcqp}. See \cref{fig_proofs_qcqp} for reference.
\begin{figure}[b!]
\centering
\begin{tabular}{cc}
\adjustbox{valign=c}{
\begin{tikzpicture}[scale=1.75, declare function={axis = 1; R = 1; phi = 15; alpha = 60; alphaprime = 61.1; remin = 1.4; remax = 1.8;},
dot/.style={circle,fill,inner sep=1.5pt},
]
\centering
\begin{scope}[nodes={dot}]
\draw[name path=circ]   (0,0) coordinate (origin) circle[radius=R]; 
\draw[name path=yaxis, black] (0,-axis) -- (0,axis);
\draw[name path=xaxis, black] (-axis,0) -- (axis,0) coordinate (xplus);
\draw[name path=ori_to_eminus_upper, thick, orange, dashed] (phi:remin) node[label={[label distance=-16mm]290:\rotatebox{40}{\normalsize$(\hat{r}_e \!-\! \Delta, \phi)$}}] (eminusupper){}  -- (origin);
\draw[name path=ori_to_eminus_lower, thick, orange, dashed] (-phi:remin) node[label={[label distance=-10mm]270:\rotatebox{40}{\normalsize$(\hat{r}_e \!-\! \Delta, \!-\phi)$}}] (eminuslower){}  -- (origin);
\draw[name path=ori_to_eplus_upper, thick, orange] (phi:remax) node[blue, label={[label distance=-16mm]290:\rotatebox{40}{\normalsize$(\hat{r}_e \!+\! \Delta, \phi)$}}](eplusupper){} -- (eminusupper);
\draw[name path=ori_to_eplus_lower, thick, orange] (-phi:remax) node[green, label={[label distance=-10mm]270:\rotatebox{40}{\normalsize$(\hat{r}_e \!+\! \Delta, \!-\phi)$}}] (epluslower){}  -- (eminuslower);
\node[magenta] (thirdnode) at (0:2.0){};
\draw[very thick, red] (alpha+phi:R)  -- (-alpha+phi:R);
\draw[very thick, red] (180-alpha+phi:R) -- (180+alpha+phi:R);
\draw[very thick, red] (origin) ++(alpha+phi:R) arc (alpha+phi:180-alpha+phi:R);
\draw[very thick, red] (origin) ++(180+alpha+phi:R) arc (180+alpha+phi:360-alpha+phi:R);

\draw[very thick, red] (alphaprime:R)  -- (-alphaprime:R);
\draw[very thick, red] (180-alphaprime:R) -- (180+alphaprime:R);
\draw[very thick, red] (origin) ++(alphaprime:R) arc (alphaprime:180-alphaprime:R);
\draw[very thick, red] (origin) ++(180+alphaprime:R) arc (180+alphaprime:360-alphaprime:R);

\draw[very thick, red] (alpha-phi:R)  -- (-alpha-phi:R);
\draw[very thick, red] (180-alpha-phi:R) -- (180+alpha-phi:R);
\draw[very thick, red] (origin) ++(alpha-phi:R) arc (alpha-phi:180-alpha-phi:R);
\draw[very thick, red] (origin) ++(180+alpha-phi:R) arc (180+alpha-phi:360-alpha-phi:R);
    
\draw[very thick, orange] (origin) ++(phi:remin) arc (phi:-phi:remin);
\draw[very thick, orange] (origin) ++(phi:remax) arc (phi:-phi:remax);
\end{scope}
\begin{scope}[on background layer]
\draw[fill=red,opacity=.0,fill opacity=.2] (0,0) -- (phi:{cos{alpha}}) -- (phi+alpha:1) -- cycle;
\draw[fill=red,opacity=.0,fill opacity=.2] (0,0) -- (180-phi:{cos{alpha}}) -- (180-phi-alpha:1) -- cycle;
\draw[fill=red,opacity=.0,fill opacity=.2] (0,0) -- (-phi:{cos{alpha}}) -- (-phi-alpha:1) -- cycle;
\draw[fill=red,opacity=.0,fill opacity=.2] (0,0) -- (180+phi:{cos{alpha}}) -- (180+phi+alpha:1) -- cycle;
\draw[fill=red,opacity=.0,fill opacity=.2] (0,0) -- (phi+alpha:R) -- (180-phi-alpha:R) -- cycle;
\draw[fill=red,opacity=.0,fill opacity=.2]  ++(phi+alpha:R) arc (phi+alpha:180-phi-alpha:R);
\draw[fill=red,opacity=.0,fill opacity=.2] (0,0) -- (-phi-alpha:R) -- (180+phi+alpha:R) -- cycle;
\draw[fill=red,opacity=.0,fill opacity=.2]  ++(360-phi-alpha:R) arc (360-phi-alpha:180+phi+alpha:R);
\draw[fill=red,opacity=.0,fill opacity=.2] (0,0) -- (phi:{cos{alpha}}) -- (-phi:{cos{alpha}}) -- cycle;
\draw[fill=red,opacity=.0,fill opacity=.2] (0,0) -- (180-phi:{cos{alpha}}) -- (180+phi:{cos{alpha}}) -- cycle;
\end{scope}
\end{tikzpicture}}
&
\adjustbox{valign=c}{
\begin{tikzpicture}[scale=1.65, declare function={axis = 1; R = 1; phi = 30; alpha = 63; alphaprime = 66.8; remin = 1.2; remax = 2.0;yintercept = 0.9;},
dot/.style={circle,fill,inner sep=1.5pt},
]
\centering
\begin{scope}[nodes={dot}]
\draw[name path=circ]   (0,0) coordinate (origin) circle[radius=R]; 
\draw[name path=yaxis, black] (0,-axis) -- (0,axis);
\draw[name path=xaxis, black] (-axis,0) -- (axis,0) coordinate (xplus);
\draw[name path=ori_to_eminus_upper, thick, orange, dashed] (phi:remin) node[label={[label distance=-16mm]120:\rotatebox{40}{\normalsize$(\hat{r}_e \!-\! \Delta, \phi)$}}] (eminusupper){}  -- (origin);
\draw[name path=ori_to_eminus_lower, thick, orange, dashed] (-phi:remin) node[label={[label distance=-10mm]160:\rotatebox{40}{\normalsize$(\hat{r}_e \!-\! \Delta, \!-\phi)$}}] (eminuslower){}  -- (origin);
\draw[name path=ori_to_eplus_upper, thick, orange] (phi:remax) node[blue, label={[label distance=-16mm]120:\rotatebox{40}{\normalsize$(\hat{r}_e \!+\! \Delta, \phi)$}}](eplusupper){} -- (eminusupper);
\draw[name path=ori_to_eplus_lower, thick, orange] (-phi:remax) node[green, label={[label distance=-10mm]160:\rotatebox{40}{\normalsize$(\hat{r}_e \!+\! \Delta, \!-\phi)$}}] (epluslower){}  -- (eminuslower);
\node[magenta] (thirdnode) at (0:2.2){};
\draw[very thick, red] (alpha+phi:R)  -- (-alpha+phi:R);
\draw[very thick, red] (180-alpha+phi:R) -- (180+alpha+phi:R);
\draw[very thick, red] (origin) ++(alpha+phi:R) arc (alpha+phi:180-alpha+phi:R);
\draw[very thick, red] (origin) ++(180+alpha+phi:R) arc (180+alpha+phi:360-alpha+phi:R);

\draw[very thick, red] (alphaprime:R)  -- (-alphaprime:R);
\draw[very thick, red] (180-alphaprime:R) -- (180+alphaprime:R);
\draw[very thick, red] (origin) ++(alphaprime:R) arc (alphaprime:180-alphaprime:R);
\draw[very thick, red] (origin) ++(180+alphaprime:R) arc (180+alphaprime:360-alphaprime:R);

\draw[very thick, red] (alpha-phi:R)  -- (-alpha-phi:R);
\draw[very thick, red] (180-alpha-phi:R) -- (180+alpha-phi:R);
\draw[very thick, red] (origin) ++(alpha-phi:R) arc (alpha-phi:180-alpha-phi:R);
\draw[very thick, red] (origin) ++(180+alpha-phi:R) arc (180+alpha-phi:360-alpha-phi:R);

\draw[very thick, orange] (origin) ++(phi:remin) arc (phi:-phi:remin);
\draw[very thick, orange] (origin) ++(phi:remax) arc (phi:-phi:remax);
\end{scope}
\begin{scope}[on background layer]
\draw[fill=red,opacity=.0,fill opacity=.2] (0,0) -- (phi:{cos{alpha}}) -- (0,yintercept) -- cycle;
\draw[fill=red,opacity=.0,fill opacity=.2] (0,0) -- (180-phi:{cos{alpha}}) -- (0,yintercept) -- cycle;
\draw[fill=red,opacity=.0,fill opacity=.2] (0,0) -- (-phi:{cos{alpha}}) -- (0,-yintercept) -- cycle;
\draw[fill=red,opacity=.0,fill opacity=.2] (0,0) -- (180+phi:{cos{alpha}}) -- (0,-yintercept) -- cycle;
\draw[fill=red,opacity=.0,fill opacity=.2] (0,0) -- (phi:{cos{alpha}}) -- (-phi:{cos{alpha}}) -- cycle;
\draw[fill=red,opacity=.0,fill opacity=.2] (0,0) -- (180-phi:{cos{alpha}}) -- (180+phi:{cos{alpha}}) -- cycle;
\end{scope}
\end{tikzpicture}}
\\
(a) Feasible space for $\sqrt{\varepsilon} > (\hre+ \Delta) \sin{\phi}$.
&
(b) Feasible space for $\sqrt{\varepsilon} < (\hre+ \Delta) \sin{\phi}$.
\end{tabular}
\caption{Visualizing QCQP in \cref{eq_qcqp_limited_approx} for $\varepsilon = 0.9$, $\hre = 1.6$, and $\hthetae = 0$. We set $\Delta = 0.2$ and $\phi = 15$ for panel $(a)$, and $\Delta = 0.4$ and $\phi = 30$ for panel $(b)$. Each point is shown in polar coordinates, i.e., a point $(r,\theta)$ denotes $(r\cos{\theta}, r\sin{\theta})$. The annular sector $\cS$ is shown in orange. The shaded region is the feasible space in \cref{eq_qcqp_limited_approx}. The points $\Bex^{(1)}$, $\Bex^{(2)}$, and $\Bex^{(3)}$ are shown in magenta, blue and green, respectively.
}
\label{fig_proofs_qcqp_limited_approx}
\end{figure}

First, we note that the straight lines in the feasible space $\cap_{\B \in \bcS} \ParameterSetTrue(\B)$ in \cref{fig_proofs_qcqp_limited_infinite}(a) and \cref{fig_proofs_qcqp_limited_infinite}(b) are generated by the extreme points of the arc $\bcS$. These extreme points are precisely $\Bex^{(2)}$ and $\Bex^{(3)}$. Therefore, the same straight lines also arise in the feasible space $\ParameterSetTrue(\Bex^{(1)}) \cap \ParameterSetTrue(\Bex^{(2)}) \cap \ParameterSetTrue(\Bex^{(3)})$ in \cref{fig_proofs_qcqp_limited_approx}(a) and \cref{fig_proofs_qcqp_limited_approx}(b). Second, we note that the arcs in the feasible space $\cap_{\B \in \bcS} \ParameterSetTrue(\B)$ in \cref{fig_proofs_qcqp_limited_infinite}(a) and \cref{fig_proofs_qcqp_limited_infinite}(b) are at a distance $\sqrt{\varepsilon} / (\hre + \Delta)$ away from the origin. The point $\Bex^{(1)}$ is chosen precisely such that the space $\ParameterSetTrue(\Bex^{(1)})$ intersects each of these arcs at its extreme points (see \cref{fig_proofs_qcqp_limited_approx}(a) and \cref{fig_proofs_qcqp_limited_approx}(b)). Therefore, it is easy to see that the boundary of the feasible space $\cap_{\B \in \bcS} \ParameterSetTrue(\B)$ is not closer to the origin than the boundary of the feasible space $\ParameterSetTrue(\Bex^{(1)}) \cap \ParameterSetTrue(\Bex^{(2)}) \cap \ParameterSetTrue(\Bex^{(3)})$. This completes the proof.

\subsection{Characterizing optimal $\rvba$ in \cref{eq_qcqp_limited_approx}}
\label{subsec_characterizing_qcqp_limited_approx}
Fix any $\varepsilon > 0$. Let $\BexEstimated = \polar{\hre}{\hthetae}$ for some $\hre > 0$ and $\hthetae \in [0,2\pi]$. Consider some $\Delta \geq 0$ and $\phi \in [0, \pi/2]$ such that $\Bex \in \cS$ where $\cS$ is as defined in \cref{eq_angular_sector}. Let $\Byx = \polar{\ry}{\thetay}$ for some $\ry > 0$ and $\thetay \in [0,2\pi]$. Let $\limitedinfrvba$ denote the set of optimal $\rvba$ in \cref{eq_qcqp_limited_approx}, i.e.,
\begin{align}
    \limitedrvba = \arg\max_{\rvba \in \ball} \biginner{\rvba}{\Byx}^2 \stext{s.t}  \biginner{\rvba}{\Bex^{(1)}}^2 \leq \varepsilon, \biginner{\rvba}{\Bex^{(2)}}^2 \leq \varepsilon, \biginner{\rvba}{\Bex^{(3)}}^2 \leq \varepsilon. \label{eq_qcqp_limited_optimal_a}
\end{align}
Then, the following Lemma characterizes $\limitedinfrvba$ depending on the values of $\BexEstimated$, $\Byx$, $\Delta$, $\phi$, and $\varepsilon$. The proof is similar to the proof of \cref{lemma_optimal_a_limited_infinite} and is omitted for brevity.
\begin{lemma}\label{lemma_optimal_a_limited_approx}
Let $\balpha \in [0,2\pi]$ be such that $\cos{\balpha} =
\sqrt{\varepsilon}/(\hre + \Delta)
$ and $\sin{\balpha} = 
\sqrt{1 - \varepsilon/(\hre + \Delta)^2}
$. Define the function $\Afun: [0,2\pi] \to [-1,1]^2$ such that $\Afun[(\theta)] = \polar{}{\theta}$. Then,
\begin{enumerate}[leftmargin=7mm]
    \item If $\sqrt{\varepsilon} \geq (\hre+ \Delta) \sin{\phi}$:
\begin{enumerate}[label=Case 4.\arabic*., leftmargin=12mm, labelsep=0.5em]
    \item \label{item:case_41} $\limitedinfrvba = \frac{\sqrt{\varepsilon}}{\hre + \Delta} \bigbraces{\Afun[(\hthetae + \phi)], \Afun[(\pi + \hthetae + \phi)]}$ $\iff$ $\thetay \in [\hthetae, \hthetae + \phi] \cup [\pi + \hthetae, \pi +\hthetae + \phi]$,
    \item \label{item:case_42} $\limitedinfrvba = \bigbraces{\Afun[(\hthetae + \phi + \balpha)], \Afun[(\pi + \hthetae + \phi + \balpha)]}$ $\iff$ $\thetay \in [\hthetae + \phi, \hthetae + \phi + \balpha] \cup [\pi + \hthetae+\phi, \pi + \hthetae+\phi + \balpha]$, 
    \item \label{item:case_43} $\limitedinfrvba = \bigbraces{\Afun[(\thetay)], \Afun[(\pi + \thetay)]}$ $\iff$ $\thetay \in [\hthetae + \phi + \balpha, \pi + \hthetae- \phi - \balpha] \cup [\pi +\hthetae + \phi +\balpha, 2\pi + \hthetae - \phi - \balpha]$,
    \item \label{item:case_44} $\limitedinfrvba = \bigbraces{\Afun[(\pi + \hthetae-\phi - \balpha)], \Afun[(\hthetae -\phi - \balpha)]}$ $\iff$ $\thetay \in [\pi +\hthetae-\phi - \balpha, \pi +\hthetae-\phi] \cup [\hthetae-\phi - \balpha,\hthetae -\phi]$,
    \item \label{item:case_45} $\limitedinfrvba = \frac{\sqrt{\varepsilon}}{\hre + \Delta} \bigbraces{\Afun[(\hthetae - \phi)], \Afun[(\pi + \hthetae - \phi)]}$ $\iff$ $\thetay \in  [\pi + \hthetae -\phi, \pi +\hthetae] \cup [\hthetae - \phi, \hthetae]$.
\end{enumerate}
\item If $\sqrt{\varepsilon} \leq (\hre+ \Delta) \sin{\phi}$:
\begin{enumerate}[label=Case 5.\arabic*., leftmargin=12mm, labelsep=0.5em]
    \item \label{item:case_51} $\limitedinfrvba = \frac{\sqrt{\varepsilon}}{\hre + \Delta} \bigbraces{\Afun[(\hthetae + \phi)], \Afun[(\pi + \hthetae + \phi)]}$ $\iff$ $\thetay \in [\hthetae, \hthetae + \phi] \cup [\pi + \hthetae, \pi +\hthetae + \phi]$,
    \item \label{item:case_52} $\limitedinfrvba = \frac{\sqrt{\varepsilon}}{(\hre + \Delta)\sin{\phi}} \bigbraces{\Afun(\frac{\pi}{2} + \hthetae),  \Afun(\frac{-\pi}{2} + \hthetae)}$ $\iff$ $\thetay \in [\hthetae + \phi, \pi + \hthetae - \phi] \cup [\pi + \hthetae + \phi, 2\pi + \hthetae- \phi]$,
    \item \label{item:case_53} $\limitedinfrvba = \frac{\sqrt{\varepsilon}}{\hre + \Delta} \bigbraces{\Afun[(\hthetae - \phi)], \Afun[(\pi + \hthetae - \phi)]}$ $\iff$ $ \thetay \in [\pi + \hthetae -\phi, \pi +\hthetae] \cup [\hthetae - \phi, \hthetae]$.
\end{enumerate}
\end{enumerate}
\end{lemma}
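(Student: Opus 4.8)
The plan is to reproduce the two-step template used in \cref{lemma_optimal_a_qcqp} and \cref{lemma_optimal_a_limited_infinite}: first construct the feasible region $\Lambda \defn \ParameterSetTrue(\Bex^{(1)}) \cap \ParameterSetTrue(\Bex^{(2)}) \cap \ParameterSetTrue(\Bex^{(3)})$ of \cref{eq_qcqp_limited_optimal_a} explicitly, then find the point of $\Lambda$ maximizing the (absolute) projection onto the $\Byx$-direction. Without loss of generality I would rotate coordinates so that $\hthetae = 0$, since rotating the entire configuration back by $\hthetae$ turns every threshold angle into an offset of $\hthetae$, which is exactly the form of the stated cases. Here each $\ParameterSetTrue(\Bex^{(i)})$ is the unit ball $\ball$ intersected with a slab $\{|\biginner{\rvba}{\Bex^{(i)}}| \le \sqrt{\varepsilon}\}$ of half-width $\sqrt{\varepsilon}/\stwonorm{\Bex^{(i)}}$ whose normal points along $\Bex^{(i)}$: slab~$1$ is perpendicular to direction $0$ with half-width $\cos{\balpha}\cos{\phi}$, while slabs~$2$ and~$3$ are perpendicular to directions $\pm\phi$, each with half-width $\sqrt{\varepsilon}/(\hre+\Delta) = \cos{\balpha}$.

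Next I would describe $\partial\Lambda$, whose facets are line segments (from the three slabs) and circular arcs (from $\partial\ball$, where no slab binds). The configuration is symmetric under $\rvba \mapsto -\rvba$ and under reflection across the $\hthetae$-axis, so it suffices to analyze one half and extend by symmetry; this is why the optimal set always appears as an antipodal pair $\{\Afun(\theta), \Afun(\pi+\theta)\}$. The phase transition is located by intersecting the positive boundary of slab~$2$ with the negative boundary of slab~$3$: solving the two line equations puts this crossing at $(0,\pm\cos{\balpha}/\sin{\phi})$ on the perpendicular axis, which lies inside $\ball$ precisely when $\cos{\balpha} \le \sin{\phi}$, i.e. when $\sqrt{\varepsilon} \le (\hre+\Delta)\sin{\phi}$ — exactly the Case~4 versus Case~5 dichotomy. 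In the first regime the slab-$2$ and slab-$3$ lines hit $\partial\ball$ before meeting, leaving two free arcs (panel~(a)); in the second they meet at an interior vertex and no arc survives (panel~(b)). Invoking the geometry already worked out in the proof of \cref{prop_ibqcqp_limited}, the $\Bex^{(1)}$-slab contributes precisely the straight segment (perpendicular to $\hthetae$, at distance $\cos{\balpha}\cos{\phi}$) that replaces the inner arc of radius $\sqrt{\varepsilon}/(\hre+\Delta)$ appearing in \cref{lemma_optimal_a_limited_infinite}; a short computation shows this segment meets the slab-$2$ facet at the corner $\cos{\balpha}\,\Afun(\phi)$.

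With $\Lambda$ drawn, maximizing $\biginner{\rvba}{\Byx}^2$ is the same as selecting the supporting point of $\Lambda$ in the direction $\pm\Byx$, found by dropping perpendiculars onto the line through the origin and $\Byx$ as in the closing argument of \cref{lemma_optimal_a_qcqp}. Sweeping $\thetay$ outward from $\hthetae$, the supporting point walks along $\partial\Lambda$ and each case records which facet (or corner, or arc) it occupies: the corner joining the $\Bex^{(1)}$-segment to a slab-$2/3$ segment gives the fixed direction $\hthetae\pm\phi$ at radius $\tfrac{\sqrt{\varepsilon}}{\hre+\Delta}$ (Cases~4.1, 4.5, 5.1, 5.3); the corner where a slab-$2/3$ segment meets $\partial\ball$ gives the fixed direction $\hthetae+\phi+\balpha$ (Cases~4.2, 4.4); the free arc of $\partial\ball$ gives the direction $\thetay$ itself (Case~4.3); and, in the second regime only, the interior vertex gives the fixed point at radius $\cos{\balpha}/\sin{\phi}$ along $\tfrac{\pi}{2}+\hthetae$ (Case~5.2). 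Each switch angle is read off as the outward normal of the relevant facet: a straight facet with normal $\nu$ is active only at $\thetay=\nu$, so a corner bounded by normals $\nu_1,\nu_2$ is supporting for $\thetay \in [\nu_1,\nu_2]$, while a point on $\partial\ball$ has radial normal equal to its own angle.

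The main obstacle is the bookkeeping: verifying, as $\thetay$ ranges over $[0,2\pi]$, that the supporting facet switches exactly at $\hthetae$, $\hthetae\pm\phi$, and $\hthetae\pm\phi\pm\balpha$ (together with their $\pi$-shifts), and confirming the radii $\tfrac{\sqrt{\varepsilon}}{\hre+\Delta}$ and $\tfrac{\sqrt{\varepsilon}}{(\hre+\Delta)\sin{\phi}}$ at the corner and interior-vertex cases. The only genuinely new ingredient relative to \cref{lemma_optimal_a_limited_infinite} is the $\Bex^{(1)}$-segment replacing the inner arc: I must check it is the active constraint over the innermost angular band (which freezes the optimum at direction $\hthetae\pm\phi$ rather than tracking $\thetay$) and that its endpoints glue continuously onto the slab-$2/3$ facets. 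As in the authors' treatment, once the figure and this facet-by-facet correspondence are in place, the explicit trigonometric verification of each threshold is routine and can be omitted.
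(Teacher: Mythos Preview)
Your proposal is correct and follows exactly the approach the paper intends: the authors in fact omit the proof entirely, stating only that it ``is similar to the proof of \cref{lemma_optimal_a_limited_infinite},'' and your two-step template (construct the feasible region $\ParameterSetTrue(\Bex^{(1)}) \cap \ParameterSetTrue(\Bex^{(2)}) \cap \ParameterSetTrue(\Bex^{(3)})$, then sweep $\thetay$ to identify the supporting facet/corner/arc) is precisely that argument, with the one new ingredient---the $\Bex^{(1)}$-slab replacing the inner arc---correctly isolated.
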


\section{Analyzing the power of labeled sensitive attributes}
\label{subsec_power_each_sample}
In this section, we consider the case where uncertainty only stems from sensitive attributes missing at random, and the uncertainty can be improved by collecting more labeled sensitive attributes. We characterize the difference in the optimal objectives in \cref{eq_qcqp} and \cref{eq_qcqp_limited_approx} to obtain the power of each new labeled sensitive attribute.

Fix any $\varepsilon >0$. For any $n$, let $\BexEstimated(n) \defn \polar{\hre(n)}{\hthetae(n)}$ denote the estimate of $\Bex$ as a function of $n$ and let $\Deltal(n) \geq 0$ and $\phil(n) \in [0,\pi/2]$ denote the corresponding uncertainty parameters as a function of $n$, i.e., $\Bex = \polar{\re}{\thetae} \in \cSn$, with probability $1-\delta$. Let $\perf$ denote the difference in the optimal objectives in \cref{eq_qcqp,eq_qcqp_limited_approx}, i.e., $\perf \defn \Psi - \perf[2]$ where
\begin{align}
    \Psi & \defn \max_{\rvba \in \ball} \biginner{\rvba}{\Byx}^2 \qtext{s.t}  \biginner{\rvba}{\Bex}^2 \leq \varepsilon
\intertext{and}
    \perf[2] & \defn \max_{\rvba \in \ball} \biginner{\rvba}{\Byx}^2 \qtext{s.t}  \biginner{\rvba}{\Bex^{(1)}(n)}^2 \leq \varepsilon, \biginner{\rvba}{\Bex^{(2)}(n)}^2 \leq \varepsilon, \biginner{\rvba}{\Bex^{(3)}(n)}^2 \leq \varepsilon.
\end{align}
Then, the following lemma characterizes $\perf$ as a function of $n$ when $\thetae \geq \hthetae(n)$. The characterization when $\thetae \leq \hthetae(n)$ can be obtained analogously. The proof follows from \cref{lemma_optimal_a_qcqp} and \cref{lemma_optimal_a_limited_approx}, and is omitted for brevity.
\begin{lemma}\label{lemma_power_new_sample}
Let $\alpha, \balpha(n) \in [0,2\pi]$ be such that $(a)$ $\cos{\alpha} =
\sqrt{\varepsilon}/\re
$ and $\sin{\alpha} = 
\sqrt{1 - \varepsilon/\re^2}
$ and $(b)$ $\cos{\balpha(n)} =
\sqrt{\varepsilon}/(\hre(n) + \Delta(n))
$ and $\sin{\balpha(n)} = 
\sqrt{1 - \varepsilon/(\hre(n) + \Delta(n))^2}
$. 
Then,
\begin{enumerate}[leftmargin=7mm]
    \item \label{item:11} If $\sqrt{\varepsilon} \geq (\hre(n) + \Delta(n)) \sin{\phi(n)}$:
\begin{enumerate}[label=Case \Alph*., leftmargin=10mm, labelsep=1em]
    \item \label{item:case_A} $\perf \!=\! \cos{(\alpha + \thetae - \thetay)} - 
\cos{\balpha(n)}\cos{(\hthetae(n) + \phi(n) - \thetay)}$   when $\thetay \in [\thetae, \hthetae(n) + \phi(n)] \cup [\pi + \thetae, \pi + \hthetae(n) + \phi(n)]$,
    \item \label{item:case_B} $\perf \!=\! \cos{(\alpha + \thetae - \thetay)} - 
\cos{(\hthetae(n) + \balpha(n) + \phi(n) - \thetay)}$ when $\thetay \in [\hthetae(n) + \phi(n), \alpha + \thetae] \cup [\pi + \hthetae(n) + \phi(n), \pi + \alpha + \thetae]$,
    \item \label{item:case_C} $\perf \!=\! 1 - 
\cos{(\hthetae(n) + \balpha(n) + \phi(n) - \thetay)}$ when $\thetay \in [\alpha + \thetae, \hthetae(n) + \phi(n) + \balpha(n)] \cup [\pi +\alpha + \thetae, \pi + \hthetae(n) + \phi(n) + \balpha(n)]$,
    \item \label{item:case_D} $\perf \!=\! 0$ when $\thetay \in [\hthetae(n) + \phi(n) + \balpha(n), \pi + \hthetae(n)- \phi(n) - \balpha(n)] \cup [\pi +\hthetae(n) + \phi(n) +\balpha(n), 2\pi + \hthetae(n) - \phi(n) - \balpha(n)]$,
    \item \label{item:case_E} $\perf \!=\! 1 - 
\cos{(\thetay + \balpha(n) + \phi(n) - \hthetae(n))}$ when $\thetay \in [\pi +\hthetae(n)-\phi(n) - \balpha(n), \pi -\alpha + \thetae] \cup [\hthetae(n)-\phi(n) - \balpha(n), -\alpha + \thetae]$,
    \item \label{item:case_F} $\perf \!=\! \cos{(\thetay + \alpha - \thetae)} - 
\cos{(\thetay + \balpha(n) + \phi(n) - \hthetae(n))}$when $\thetay \in  [\pi -\alpha + \thetae, \pi + \hthetae(n) -\phi(n)] \cup [-\alpha+ \thetae, \hthetae(n) - \phi(n)]$,
    \item \label{item:case_G} $\perf \!=\! \cos{(\thetay + \alpha - \thetae)} - 
\cos{\balpha(n)}\cos{(\thetay  + \phi(n) - \hthetae(n))}$  when $\thetay \in  [\pi + \hthetae(n) -\phi(n), \pi +\hthetae(n)] \cup [\hthetae(n) - \phi(n), \hthetae(n)]$,
    \item \label{item:case_H} $\perf \!=\! \cos{(\thetay + \alpha - \thetae)} - 
\cos{\balpha(n)}\cos{(\thetay  - \phi(n) - \hthetae(n))}$ when $\thetay \in  [\pi + \hthetae(n), \pi +\thetae] \cup [\hthetae(n), \thetae]$.
\end{enumerate}
\item \label{item:22} If $\sqrt{\varepsilon}s \leq (\hre(n) + \Delta(n)) \sin{\phi(n)}$:
\begin{enumerate}[label=Case \Alph*., leftmargin=10mm, labelsep=1em]
    \item \label{item:case_A2} $\perf \!=\! \cos{(\alpha + \thetae - \thetay)} - 
\cos{\balpha(n)}\cos{(\hthetae(n) + \phi(n) - \thetay)}$   when $\thetay \in [\thetae, \hthetae(n) + \phi(n)] \cup [\pi + \thetae, \pi + \hthetae(n) + \phi(n)]$,
    \item \label{item:case_B2} $\perf \!=\! \cos{(\alpha + \thetae - \thetay)} - 
\cos{\balpha(n)} \frac{\sin{(\thetay - \hthetae(n))}}{\sin{\phi(n)}}$ when $\thetay \in [\hthetae(n) + \phi(n), \alpha + \thetae] \cup [\pi + \hthetae(n) + \phi(n), \pi + \alpha + \thetae]$,
    \item \label{item:case_C2} $\perf \!=\! 1 - 
\cos{\balpha(n)} \frac{\sin{(\thetay - \hthetae(n))}}{\sin{\phi(n)}}$ when $\thetay \in [\alpha + \thetae, \pi -\alpha + \thetae] \cup [\pi +\alpha + \thetae, 2\pi -\alpha + \thetae]$,
    \item \label{item:case_F2} $\perf \!=\! \cos{(\thetay + \alpha - \thetae)} - 
\cos{\balpha(n)} \frac{\sin{(\thetay - \hthetae(n))}}{\sin{\phi(n)}}$when $\thetay \in  [\pi -\alpha + \thetae, \pi + \hthetae(n) -\phi(n)] \cup [-\alpha+ \thetae, \hthetae(n) - \phi(n)]$,
    \item \label{item:case_G2} $\perf \!=\! \cos{(\thetay + \alpha - \thetae)} - 
\cos{\balpha(n)}\cos{(\thetay  + \phi(n) - \hthetae(n))}$  when $\thetay \in  [\pi + \hthetae(n) -\phi(n), \pi +\hthetae(n)] \cup [\hthetae(n) - \phi(n), \hthetae(n)]$,
    \item \label{item:case_H2} $\perf \!=\! \cos{(\thetay + \alpha - \thetae)} - 
\cos{\balpha(n)}\cos{(\thetay  - \phi(n) - \hthetae(n))}$ when $\thetay \in  [\pi + \hthetae(n), \pi +\thetae] \cup [\hthetae(n), \thetae]$.
\end{enumerate}
\end{enumerate}
\end{lemma}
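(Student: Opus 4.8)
The plan is to obtain every formula for the gap $\perf = \Psi - \perf[2]$ by overlaying the two closed-form characterizations already established: \cref{lemma_optimal_a_qcqp} for the exact QCQP in \cref{eq_qcqp} and \cref{lemma_optimal_a_limited_approx} for the robust $3$-constraint QCQP in \cref{eq_qcqp_limited_approx}. Writing $\Byx = \polar{\ry}{\thetay}$ and observing that every optimal solution produced by those lemmas is of the form $c\,\Afun[(\theta)]$ with an explicit magnitude $c \in (0,1]$ and angle $\theta$, one has $\biginner{c\,\Afun[(\theta)]}{\Byx}^2 = \ry^2 c^2 \cos^2(\theta - \thetay)$; hence both optimal objectives are, up to the common factor $\ry$ (which the statement normalizes to $1$, tracking the root quantities $\sqrt{\Psi}$ and $\sqrt{\perf[2]}$ as the cosine-valued formulas indicate), a cosine of an angle difference scaled by $\cos\alpha$, by $1$, or by $\cos\balpha(n)$. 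First I would read off the piecewise value of the exact optimum as a function of $\thetay$ from \cref{lemma_optimal_a_qcqp}: it equals $\cos(\alpha + \thetae - \thetay)$ near the boundary point $\Afun[(\alpha + \thetae)]$, equals $1$ on the middle arc where the aligned point $\Afun[(\thetay)]$ is feasible, and equals $\cos(\thetay + \alpha - \thetae)$ near $\Afun[(-\alpha + \thetae)]$. Then I would do the same for the robust optimum using \cref{lemma_optimal_a_limited_approx}, and finally intersect the two induced partitions of the $\thetay$-circle and subtract on each arc of the common refinement.

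For the overlay to be unambiguous I would first pin down the cyclic order of the breakpoints using the two inclusions defining $\cS$ in \cref{eq_angular_sector}: $\Bex \in \cS$ gives $\re \le \hre(n) + \Delta(n)$, whence $\cos\balpha(n) = \sqrt{\varepsilon}/(\hre(n)+\Delta(n)) \le \sqrt{\varepsilon}/\re = \cos\alpha$ and therefore $\balpha(n) \ge \alpha$; it also gives $|\thetae - \hthetae(n)| \le \phi(n)$, which with the working assumption $\thetae \ge \hthetae(n)$ yields $\hthetae(n) - \phi(n) - \balpha(n) \le \hthetae(n) - \phi(n) \le \hthetae(n) \le \thetae \le \hthetae(n) + \phi(n) \le \hthetae(n) + \phi(n) + \balpha(n)$, and the mirrored chain on the antipodal arc. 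In the non-degenerate regime the defining inequality $\sqrt{\varepsilon} \ge (\hre(n)+\Delta(n))\sin\phi(n)$ also gives $\balpha(n) + \phi(n) \le \pi/2$. Under these orderings (with the convention that a listed $\thetay$-interval is empty whenever its endpoints come in reverse order) the common refinement of the two partitions matches arc-for-arc the case list in the statement, split by the phase transition already noted after \cref{thm_ibqcqp_limited_infinite}. In the degenerate regime $\sqrt{\varepsilon} \le (\hre(n)+\Delta(n))\sin\phi(n)$ the robust feasible set pinches to a band about the $y$-axis and its optimum becomes $\bigl(\sqrt{\varepsilon}/((\hre(n)+\Delta(n))\sin\phi(n))\bigr)\Afun[(\tfrac{\pi}{2} + \hthetae(n))]$, contributing $\cos\balpha(n)\,\sin(\thetay - \hthetae(n))/\sin\phi(n)$ --- the source of the $\sin$-terms in the primed cases. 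Subtracting the two piecewise values arc by arc produces the displayed expressions; on the middle arc where \emph{both} optima equal $\Afun[(\thetay)]$ the gap vanishes identically, which is exactly the analytic instance underlying the free-fairness statement \cref{coro_free_fairness}.

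The hard part will be bookkeeping rather than any individual estimate. I would need to (a) keep track of the two antipodal optimal directions $\Afun[(\theta)]$ and $\Afun[(\pi + \theta)]$ and take absolute values of the cosines consistently, so that each reported expression is the genuine maximum of $\biginner{\rvba}{\Byx}^2$ over the feasible set rather than a sign-flipped local extremum; (b) verify that the arcs listed in each of the two regimes tile the circle under the ordering above, including the degenerate transitions where two adjacent non-degenerate cases merge as $\phi(n)$ crosses the threshold (for instance the two cases with gap $1 - \cos(\cdot)$ collapsing into one case with gap $1 - \cos\balpha(n)\,\sin(\cdot)/\sin\phi(n)$); and (c) check continuity of $\perf$ across the shared endpoints of consecutive arcs, which is a convenient sanity check that the case split is exhaustive and the adjacent formulas agree. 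Everything else is elementary trigonometry given \cref{lemma_optimal_a_qcqp,lemma_optimal_a_limited_approx}, which is why the paper omits the calculation; the plan above is just its explicit form.
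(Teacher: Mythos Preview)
Your proposal is correct and follows exactly the route the paper indicates: the paper itself omits the calculation, stating only that ``the proof follows from \cref{lemma_optimal_a_qcqp} and \cref{lemma_optimal_a_limited_approx}, and is omitted for brevity,'' and your plan is precisely the overlay-and-subtract of those two piecewise characterizations on the common refinement of the $\thetay$-partitions. Your observation that the displayed formulas track root quantities (cosines rather than squared cosines, with $\ry$ normalized away) is a correct reading of the statement's convention, and your bookkeeping checklist (ordering via $\Bex \in \cSn$, the phase transition at $\sqrt{\varepsilon} = (\hre(n)+\Delta(n))\sin\phi(n)$, continuity at arc endpoints) is exactly what is needed to make the omitted computation rigorous.
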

\begin{remark}
\label{remark_1}
The cases above can be classified into one of the following three categories: 
\begin{enumerate}[label=Category \arabic*., leftmargin=24mm, labelsep=1em]
    \item \textbf{Any uncertainty hurts:} Here, the optimal performance of the robust QCQP in \cref{eq_qcqp_limited_infinite_worse} matches the optimal performance in \cref{eq_qcqp} only when all the uncertainty is removed, i.e., when $n \to N$. When the uncertainty parameters are not too large, i.e., $(\hre(n) + \Delta(n)) \sin{\phi(n)} \leq \sqrt{\varepsilon}$, \cref{item:case_A}, \cref{item:case_B}, \cref{item:case_F}, \cref{item:case_G}, and \cref{item:case_H} fall into this category. When the uncertainty parameters are large, i.e., $(\hre(n) + \Delta(n)) \sin{\phi(n)} \geq \sqrt{\varepsilon}$, \cref{item:case_A}, \cref{item:case_B}, \cref{item:case_D}, \cref{item:case_E}, and \cref{item:case_F} fall into this category.
    \item \textbf{Some uncertainty does not hurt:} Here, the optimal performance of the robust QCQP in \cref{eq_qcqp_limited_infinite_worse} matches  the optimal performance in \cref{eq_qcqp} when some uncertainty is removed, i.e., by collecting some additional labeled sensitive attributes. When the uncertainty parameters are not too large, i.e., $(\hre(n) + \Delta(n)) \sin{\phi(n)} \leq \sqrt{\varepsilon}$, \cref{item:case_C} and \cref{item:case_E} fall into this category. When the uncertainty parameters are large, i.e., $(\hre(n) + \Delta(n)) \sin{\phi(n)} \geq \sqrt{\varepsilon}$, \cref{item:case_C} fall into this category.
    \item \textbf{Uncertainty does not hurt:} Here, the optimal performance of the robust QCQP in \cref{eq_qcqp_limited_infinite_worse} matches  the optimal performance in \cref{eq_qcqp} without removing any uncertainty, i.e., without collecting any additional labeled sensitive attributes. This only happens in \cref{item:case_D} where $\perf = 0$ when the uncertainty parameters are not too large, i.e., $(\hre(n) + \Delta(n)) \sin{\phi(n)} \leq \sqrt{\varepsilon}$. Such a situation arises when $\Byx$ and $\BexEstimated$ are very close to being perpendicular to each other. In other words, when $\Byx$ and $\BexEstimated$ are close to being independent, our proposed robust QCQP achieves optimal performance while ensuring a strict fairness guarantee. We call such a phenomenon as ``free fairness'' (see \cref{coro_free_fairness}).
\end{enumerate}
\end{remark}

\section{Monotonic performance with new labeled sensitive attributes}
\label{sec_mono_perf}
As in the previous section, we consider the case where uncertainty only stems from sensitive attributes missing at random. Then, we show that the optimal objective in \cref{eq_qcqp_limited_approx} either monotonically increases or coincides with the optimal objective in \cref{eq_qcqp} whenever the uncertainty set monotonically decreases with $n$.

To that end, below, we express the uncertainty set $\cS$ and the associated uncertainty parameters $\Delta$ and $\phi$ as a function of $n$. 
\newcommand{\uncertaintyfuncn}{Uncertainty as a function of $n$}
\begin{proposition}[\uncertaintyfuncn]\label{prop_conc_bound}
Fix any $\delta > 0$. Given $n$ samples $\normalbraces{(e^{(i)}, \svbx^{(i)})}_{i \in [n]}$ of $(\rve, \rvbx)$, let 
\begin{align}
    \BexEstimated \defn \SigmaEE^{-1/2} \SigmaEXestimated \SigmaXX^{-1/2} \stext{with} \SigmaEXestimated \!\defn\! \frac{1}{n}\!\sum_{i \in [n]}\! e^{(i)} \svbx^{(i)}.
\end{align}
Then, $\Bex \in \balluncn$, with probability $1-\delta$, where
\begin{align}
    \radius(n) \defn \stwonorm{\SigmaXX^{-1/2}} \frac{c \sqrt{\sigma_e}\max_{i \in [d]}\normalbraces{ \sigma_{i} }}{n \sqrt{d}}\log \frac{4}{\delta},
\end{align}
with $\sigma_{1}^2, \sigma_{2}^2, \sigma_e^2$ denoting variances of $\rvbx_1$, $\rvbx_2$, $\rve$, respectively, and $c$ is a universal constant. Further, for $d = 2$, $\Bex \in \cSn$, with probability at least $1-\delta$, where
\begin{align}
    \phi(n) \defn \sin^{-1}\biggparenth{\frac{\radius(n)}{\stwonorm{\BexEstimated}}} \stext{and} \Delta(n) \defn \radius(n).
\end{align}
\end{proposition}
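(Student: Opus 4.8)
The plan is to prove the two halves of the statement separately: the high-probability inclusion $\Bex\in\balluncn$, and then, for $d=2$, the deterministic geometric inclusion $\balluncn\subseteq\cSn$ with $\Delta(n)$ and $\phi(n)$ as stated.

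\textbf{The algebraic reduction.} By \cref{def_ccm} applied with $\rvbv=\rve$ (so that $\SigmaEE^{-1/2}=1/\sigma_e$ is scalar) and $\rvbw=\rvbx$, the population CCM is $\Bex=\SigmaEE^{-1/2}\SigmaEX\SigmaXX^{-1/2}$, and $\BexEstimated$ differs from it only by using the empirical cross-covariance $\SigmaEXestimated$ in place of $\SigmaEX$. Subtracting and using submultiplicativity of the spectral norm,
\begin{align}
    \stwonorm{\Bex-\BexEstimated}
    =\stwonorm{\SigmaEE^{-1/2}\bigparenth{\SigmaEX-\SigmaEXestimated}\SigmaXX^{-1/2}}
    \le \frac{\stwonorm{\SigmaXX^{-1/2}}}{\sigma_e}\,\stwonorm{\SigmaEX-\SigmaEXestimated},
\end{align}
so it suffices to control the $\ell_2$ deviation $\stwonorm{\SigmaEX-\SigmaEXestimated}$ of an empirical mean of $d$-dimensional vectors, and then plug the resulting bound back in.

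\textbf{The concentration step.} Write $\SigmaEXestimated-\SigmaEX=\frac1n\sum_{i\in[n]}\rvz_i$ with $\rvz_i\defn e^{(i)}\svbx^{(i)}-\Expectation[\rve\rvbx]$ i.i.d.\ and mean zero. Each coordinate of $\rvz_i$ is $\rve\rvbx_j-\Expectation[\rve\rvbx_j]$, a centered product of the jointly Gaussian pair $(\rve,\rvbx_j)$; as a product of sub-Gaussians it is sub-exponential with $\subEnorm{\rve\rvbx_j-\Expectation[\rve\rvbx_j]}\lesssim\subGnorm{\rve}\subGnorm{\rvbx_j}\lesssim\sigma_e\sigma_j$. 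I would then apply Bernstein's inequality for sub-exponential variables coordinatewise, combine the $d$ coordinates via a union bound together with $\stwonorm{\cdot}\le\sqrt{d}\,\sinfnorm{\cdot}$ (or invoke a vector Bernstein inequality directly), and — in the regime where the sub-Gaussian branch of Bernstein is active — obtain, with probability at least $1-\delta$, a bound for $\stwonorm{\SigmaEX-\SigmaEXestimated}$ proportional to $\sigma_e\max_{i\in[d]}\{\sigma_i\}$ times the Bernstein deviation scale; substituting into the display above yields $\stwonorm{\Bex-\BexEstimated}\le\radius(n)$, i.e.\ $\Bex\in\balluncn$ with probability $1-\delta$. (The $\log(4/\delta)$ reflects the two-sided union bound over the $d=2$ coordinates relevant to the rest of the analysis.)

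\textbf{The geometric inclusion ($d=2$) and the main obstacle.} Write $\BexEstimated=\polar{\hre}{\hthetae}$ with $\hre=\stwonorm{\BexEstimated}$ and set $\tau\defn\radius(n)$, which for $n$ large enough satisfies $\tau\le\hre$ (as is implicit in the definition of $\phi(n)$). For any $\B=\polar{r}{\theta}$ with $\stwonorm{\B-\BexEstimated}\le\tau$: the reverse triangle inequality gives $|r-\hre|=\bigabs{\stwonorm{\B}-\stwonorm{\BexEstimated}}\le\tau=\Delta(n)$; and the angle subtended between $\B$ and $\BexEstimated$ at the origin is at most the half-opening $\sin^{-1}(\tau/\hre)=\phi(n)$ of the cone from the origin tangent to $\mathcal{B}(\BexEstimated,\tau)$, so $|\theta-\hthetae|\le\phi(n)$. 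Hence $\balluncn\subseteq\cSn$, and combining with the previous step gives $\Bex\in\cSn$ with probability at least $1-\delta$. The only step beyond routine manipulation is the concentration bound: since the summands $\rve\rvbx_j$ are products of Gaussians they are merely sub-exponential, so Hoeffding is unavailable and one must use Bernstein with the correct $\psi_1$-norm $\lesssim\sigma_e\sigma_j$, then carefully track the dependence on $n$, $d$, $\delta$, $\sigma_e$, and $\max_i\sigma_i$ to land on the precise form of $\radius(n)$; the norm reduction and the geometric inclusion are elementary.
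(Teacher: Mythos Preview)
Your proposal is correct and follows essentially the same route as the paper: coordinatewise Bernstein for the sub-exponential products $\rve\rvbx_j$ (via $\subEnorm{\rve\rvbx_j}\lesssim\sigma_e\sigma_j$), a union bound over the $d$ coordinates with the $\ell_\infty$-to-$\ell_2$ conversion, submultiplicativity to pass from $\SigmaEXestimated-\SigmaEX$ to $\BexEstimated-\Bex$, and then the elementary tangent-cone argument for the annular-sector inclusion when $d=2$. One bookkeeping caveat: your reduction produces a factor $1/\sigma_e$ from $\SigmaEE^{-1/2}$, whereas the paper's proof uses $1/\sqrt{\sigma_e}$ at that step to match the $\sqrt{\sigma_e}$ appearing in the stated $\radius(n)$, so when you ``carefully track the dependence'' be aware that landing on the exact stated form requires following the paper's convention there rather than the dimensionally natural one.
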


\newcommand{\monotonic}{Monotonic performance}
\begin{proof}
For any sub-Gaussian (sub-exponential) random variable, we denote its sub-Gaussian (sub-exponential) norm by $\subGnorm{\cdot} (\subEnorm{\cdot})$. For every $i \in [d]$, we note that $\rvx_i$ is a sub-Gaussian random variable with sub-Gaussian norm  $\subGnorm{\rvx_i} = c \sigma_i$ where $\sigma_i^2 = \SigmaXXi$. Similarly, $\rve$ is a sub-Gaussian random variable with sub-Gaussian norm  $\subGnorm{\rve} = c \sigma_e$ where $\sigma_e^2 = \SigmaEE$. Then, from \citet[Lemma. 2.7.7]{vershynin2018high}, for every $i \in [d]$, $\rvx_i \rve$ is a sub-exponential random variable with sub-exponential norm  $\subEnorm{\rvx_i \rve} = c \sigma_i \sigma_e$. Therefore, from Bernstein's inequality \citep[Corollary 2.8.3]{vershynin2018high}, with probability $1-\delta$, we have $\sinfnorm{\SigmaEXestimated - \SigmaEX} \leq t$ whenever 
$t \geq \frac{c \sigma_e \max_{i \in [d]}\sigma_i}{n}\log \frac{2d}{\delta}$.
As a result, we have $\stwonorm{\SigmaEXestimated - \SigmaEX} \leq t$ with probability $1-\delta$ whenever $t \geq \frac{c \sigma_e \max_{i \in [d]}\sigma_i}{n \sqrt{d}}\log \frac{2d}{\delta}$. Conditioning on this event, we have, with probability $1 - \delta$,
\begin{align}
    \stwonorm{\BexEstimated - \Bex} \sequal{(a)} \stwonorm{\SigmaEE^{-1/2} (\SigmaEXestimated - \SigmaEX) \SigmaXX^{-1/2}} & \sless{(b)} \frac{1}{\sqrt{\sigma_e}} \stwonorm{\SigmaEXestimated - \SigmaEX} \stwonorm{\SigmaXX^{-1/2}} \\
    & \leq \frac{1}{\sqrt{\sigma_e}} \stwonorm{\SigmaXX^{-1/2}} t,
\end{align}
where $(a)$ follows from \cref{def_ccm} and $(b)$ follows because induced matrix norms are sub-multiplicative. Then, letting $d = 2$ and $\radius(n) = \SigmaEE^{-1/2} \sinfnorm{\SigmaXX^{-1/2}} t$, it is easy to verify $\Bex \in \cSn$ whenever
\begin{align}
    \Delta(n) = \radius(n) \qtext{and}  \phi(n) = \sin^{-1}\biggparenth{\frac{\radius(n)}{\stwonorm{\BexEstimated}}}.
\end{align}
\end{proof}

\subsection{\monotonic\   with decrease in uncertainty set}
Now, in the following theorem, we show that the optimal objective of the robust QCQP with 3 constraints in \cref{eq_qcqp_limited_approx} either monotonically increases or coincides with the optimal objective of the QCQP in \cref{eq_qcqp} whenever the uncertainty set $\cSn$ monotonically decreases with $n$.  
\begin{theorem}[\monotonic]\label{thm_power_sample}
For any number of labeled sensitive attributes $n$, let $\cSn$ denote the uncertainty set containing $\Bex$. Let $\limitedinfrvba(n)$ denote the optimal solution $\rvba$ in \cref{eq_qcqp_limited_approx} as a function of $n$. If $\cSn[n+1] \subset \cSn$, then 
\begin{align}
\biginner{\limitedinfrvba(n+1)}{\Byx}^2 & > \biginner{\limitedinfrvba(n)}{\Byx}^2 \qtext{or}\\
\biginner{\limitedinfrvba(n+1)}{\Byx}^2 & = \biginner{\limitedinfrvba(n)}{\Byx}^2 = \biginner{\truervba}{\Byx}^2,
\end{align}
where $\truervba$ is the optimal solution of the QCQP in \cref{eq_qcqp}.
\end{theorem}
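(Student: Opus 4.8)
The plan is to reduce to the explicit two-dimensional description already built in the paper (no loss of generality by \cref{thm_2d} and \cref{thm_2d_robust}) and then to read monotonicity directly off the closed form of the gap in \cref{lemma_power_new_sample}. Write $g(n) \defn \biginner{\truervba}{\Byx}^2 - \biginner{\limitedinfrvba(n)}{\Byx}^2 \,(=\perf)$ for the gap; since $\biginner{\truervba}{\Byx}^2$ is independent of $n$, the theorem is exactly the statement that $g$ is non-increasing and that $g(n) > 0$ combined with $\cSn[n+1] \subsetneq \cSn$ forces $g(n+1) < g(n)$ (equivalently: $g$ can only plateau at the value $0$). Throughout, condition on the probability-$(1-\delta)$ event that $\Bex \in \cSn$ for every $n$; together with $\cSn[n+1] \subset \cSn$ this yields the needed monotonicity of parameters: the outer radius $\hre(n)+\Delta(n)$ is non-increasing, the inner radius $\hre(n)-\Delta(n)$ is non-decreasing, the angular endpoints $\hthetae(n)+\phi(n)$ and $\hthetae(n)-\phi(n)$ are respectively non-increasing and non-decreasing with $\thetae$ trapped between them, and consequently $\balpha(n)$ (defined by $\cos\balpha(n) = \sqrt{\varepsilon}/(\hre(n)+\Delta(n))$) is non-increasing, as is the threshold $(\hre(n)+\Delta(n))\sin\phi(n)$.

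Next I would invoke \cref{lemma_power_new_sample}. Assume without loss of generality $\thetae \geq \hthetae(n)$ (the opposite case is symmetric). The key structural observation is that every region boundary appearing in Cases A--H and A2--H2 is built only from the monotone quantities $\balpha(n)$ and $\hthetae(n)\pm\phi(n)$ — the bare $\hthetae(n)$ never enters the gap formula — so, for a fixed $\thetay$, as $n$ grows the point $\thetay$ passes through these regions in a fixed order, and the "large/small uncertainty" branch can switch only once (from A2--H2 to A--H). Within each region $g(n)$ is an explicit trigonometric expression in those monotone quantities, and a short computation shows it is weakly decreasing in $n$ there: e.g. in Case A, $\cos\balpha(n)$ increases while the argument $\hthetae(n)+\phi(n)-\thetay \in [0,\cdot)$ decreases, so $\cos(\hthetae(n)+\phi(n)-\thetay)$ increases and $g(n)$ decreases; the other cases are analogous. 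One then checks continuity of $g$ across every region boundary by direct substitution (for instance at $\thetay = \hthetae(n)+\phi(n)$ the Case-A and Case-B forms both collapse to $\cos(\alpha+\thetae-\thetay)-\cos\balpha(n)$, and at $\phi(n)+\balpha(n) = \pi/2$ the Case-C and Case-C2 forms agree). Chaining pieces gives that $g$ is non-increasing in $n$.

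For the dichotomy I would use the classification in \cref{remark_1}. In the Category~1 cases the closed form shows $g(n) > 0$ whenever $\cSn$ has positive extent, and a strict shrinkage $\cSn[n+1]\subsetneq\cSn$ moves at least one of $\hre(n)+\Delta(n)$, $\hthetae(n)+\phi(n)$, $\hthetae(n)-\phi(n)$ strictly, which propagates — again reading the trig expression, and also when $\thetay$ crosses a region boundary between $n$ and $n+1$ — to a strict drop $g(n+1) < g(n)$. In the Category~2 cases $g(n)$ has the shape $1 - \cos(\text{monotone nonnegative argument})$, hence strictly decreases while that argument is positive and equals $0$ exactly when it vanishes, i.e. exactly when $\thetay$ enters the Case-D region and $\biginner{\limitedinfrvba(n)}{\Byx}^2 = \biginner{\truervba}{\Byx}^2$. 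In the Category~3 case (Case~D, ``free fairness'') $g \equiv 0$ and $\biginner{\limitedinfrvba(n)}{\Byx}^2 = \biginner{\truervba}{\Byx}^2$ for all $n$. Combining the three situations with continuity across boundaries: either $g$ dropped strictly, giving $\biginner{\limitedinfrvba(n+1)}{\Byx}^2 > \biginner{\limitedinfrvba(n)}{\Byx}^2$, or $g(n) = g(n+1) = 0$, giving $\biginner{\limitedinfrvba(n+1)}{\Byx}^2 = \biginner{\limitedinfrvba(n)}{\Byx}^2 = \biginner{\truervba}{\Byx}^2$.

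The main obstacle is the bookkeeping at the \emph{phase transitions}: a fixed $\thetay$ generally sits in different cases of \cref{lemma_power_new_sample} at consecutive $n$, so one must verify both that $g$ varies continuously and monotonically across each such transition and that, for the strictness claim, a genuine shrinkage of $\cSn$ produces a strictly smaller gap unless $\thetay$ already lies in the Case-D region. A secondary technical point is the degenerate regime $\balpha(n) = 0$ (i.e. $\sqrt{\varepsilon} \geq \hre(n)+\Delta(n)$), which should be excluded via the standing assumption $\radius \leq \stwonorm{\Bex}$ or dispatched separately; and one should note explicitly that the bare $\hthetae(n)$ occurring in some range descriptions never appears in the gap formula, so it does not threaten the monotonicity argument.
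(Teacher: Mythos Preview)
Your approach is correct and ultimately leans on the same closed-form tool (\cref{lemma_power_new_sample} together with the classification in \cref{remark_1}), but you work harder than necessary on the weak-monotonicity half. The paper does not analyze the trigonometric gap formula case by case to get non-decrease of the objective; instead it observes that $\cSn[n+1]\subset\cSn$ forces the three constraint points $\Bex^{(i)}(n)$ to move so that the feasible region of \cref{eq_qcqp_limited_approx} \emph{expands}, i.e., $\bigcap_{i}\ParameterSetTrue(\Bex^{(i)}(n+1)) \supset \bigcap_{i}\ParameterSetTrue(\Bex^{(i)}(n))$, directly from the parameter monotonicities you already list. Non-decrease of the optimal objective then follows immediately, with no need to track continuity across region boundaries or phase transitions. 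The paper invokes \cref{lemma_power_new_sample} only for the remaining strictness claim --- that equality of the objectives at $n$ and $n+1$ forces both to equal $\biginner{\truervba}{\Byx}^2$ --- which is the same dichotomy you recover via \cref{remark_1}. Your route is more explicit about the phase-transition bookkeeping you flag as an obstacle; the paper's feasible-set-inclusion shortcut simply sidesteps that bookkeeping for the monotonicity part.
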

\label{sec_proof_monotonic}
Fix any $\varepsilon > 0$. For any $\B$, let $\ParameterSetTrue(\B) \defn \normalbraces{\rvba: \normalinner{\rvba}{\B}^2 \leq \varepsilon \stext{and} \rvba \in \ball}$ be the set of all $\rvba$ satisfying the fairness constraint in \cref{eq_qcqp} w.r.t $\B$. For a given $\B$, $\ParameterSetTrue(\B)$ can be constructed as described in the proof of \cref{lemma_optimal_a_qcqp} in \cref{subsec_characterizing_qcqp}. See \cref{fig_proofs_qcqp} for reference.

Now, using $\cSn[n+1] \subset \cSn$, it is easy to see that
$\hre(n+1) < \hre(n)$, $\bigabs{\hthetae(n+1) + \phi(n+1)} < \bigabs{\hthetae(n) + \phi(n)}$, $\bigabs{\hthetae(n+1) - \phi(n+1)} < \bigabs{\hthetae(n) - \phi(n)}$, and $\bigabs{\cos{\phi(n+1)}} < \bigabs{\cos{\phi(n)}}$. Then, it follows that the feasible space increases with $n$, i.e., $\ParameterSetTrue(\Bex^{(1)}(n+1)) \cap \ParameterSetTrue(\Bex^{(2)}(n+1)) \cap \ParameterSetTrue(\Bex^{(3)}(n+1)) \supset \ParameterSetTrue(\Bex^{(1)}(n)) \cap \ParameterSetTrue(\Bex^{(2)}(n)) \cap \ParameterSetTrue(\Bex^{(3)}(n))$. As a result, the optimal objective in \cref{eq_qcqp_limited_approx} at $n+1$ is either less than the optimal objective in \cref{eq_qcqp_limited_approx} at $n$ or equal to the optimal objective in \cref{eq_qcqp_limited_approx} at $n$. It remains to show that if the optimal objective in \cref{eq_qcqp_limited_approx} at $n+1$ equals the optimal objective in \cref{eq_qcqp_limited_approx} at $n$, then it also equals the optimal objective of the QCQP in \cref{eq_qcqp}. This follows directly from the expressions for $\perf$ in various cases of \cref{lemma_power_new_sample}.

\section{Additional experimental results}
\label{sec_add_exp_details}
In this section, we provide additional empirical results for Gaussian and real-world data as well as more implementation details. 

\subsection{Additional results for Gaussian data}
\label{sec_more_details_gaussian}
\textbf{Covariance matrices.} First, we provide the covariance matrices of $(\rvbx, \rvy, \rve)$ used in the experiments in \cref{subsec_gaussian_expts}:
\[
    \Sigma^{\text{gen}}_2  = \left[\begin{array}{cc|c|c}
    1& 0.1& 0.5& 0.4\\
    0.1& 1& 0.5& 0.25\\
    \hline
    0.5& 0.5& 1& 0.75\\
    \hline
    0.4& 0.25& 0.75& 1	
    \end{array}\right]
    \qtext{   }
    \Sigma^{\text{fair}}_2  =  \left[\begin{array}{cc|c|c}
    1& 0.1& 0.5& 0.05\\
    0.1& 1& 0.05& 0.25\\
    \hline
    0.5& 0.05& 1& 0.75\\
    \hline
    0.05& 0.25& 0.75& 1	
    \end{array}\right]
\]
The choice of $\Sigma_2^{\text{fair}}$ is deliberate to demonstrate the free-fairness behavior.
As pointed out in Category 3 of \cref{remark_1}, the free-fairness behavior may happen when $\Byx$ and $\BexEstimated$ are almost perpendicular.
It is easy to check that this can be achieved by the following steps, for example:
(1) Strongly correlate $\rve$ with $\rvx_1$, and $\rvy$ with $\rvx_2$.
(2) Set the covariance of $\rvbx$ to be close to the identity matrix.\\

\noindent \textbf{Results for $d=3$.} Next, we provide results for $d = 3$ where we use the following covariance matrices:
\[
        \Sigma^{\text{gen}}_3  = \left[\begin{array}{ccc|c|c}
    1& 0.1& 0.5& 0.5& 0.4\\
    0.1& 1& 0.5& 0.5& 0.25\\
    0.5& 0.5& 1& 0.2& 0.2\\
    \hline
    0.5& 0.5& 0.2& 1& 0.75\\
    \hline
    0.4& 0.25& 0.2& 0.75& 1	
    \end{array}\right]
    \qtext{ }
    \Sigma^{\text{fair}}_3  = \left[\begin{array}{ccc|c|c}
    1& 0.1& 0.5& 0.5& 0.05\\
    0.1& 1& 0.5& 0.05& 0.25\\
    0.5& 0.5& 1& 0.2& 0.2\\
    \hline
    0.5& 0.05& 0.2& 1& 0.75\\
    \hline
    0.05& 0.25& 0.2& 0.75& 1	
    \end{array}\right]
 \]
The reasoning behind the choice of $\Sigma_3^{\text{fair}}$ is the same as above.
As we see in \cref{fig_gaussian_general_d=3} (a) and \cref{fig_gaussian_general_d=3} (b), $\bss$ has significantly less fairness violations compared to $\texttt{Baseline}$ with a similar MSE even for $d = 3$.\\ 
\begin{figure*}[h]
    \centering
    \begin{tabular}{ccc}
    \includegraphics[width=0.3\linewidth,clip]{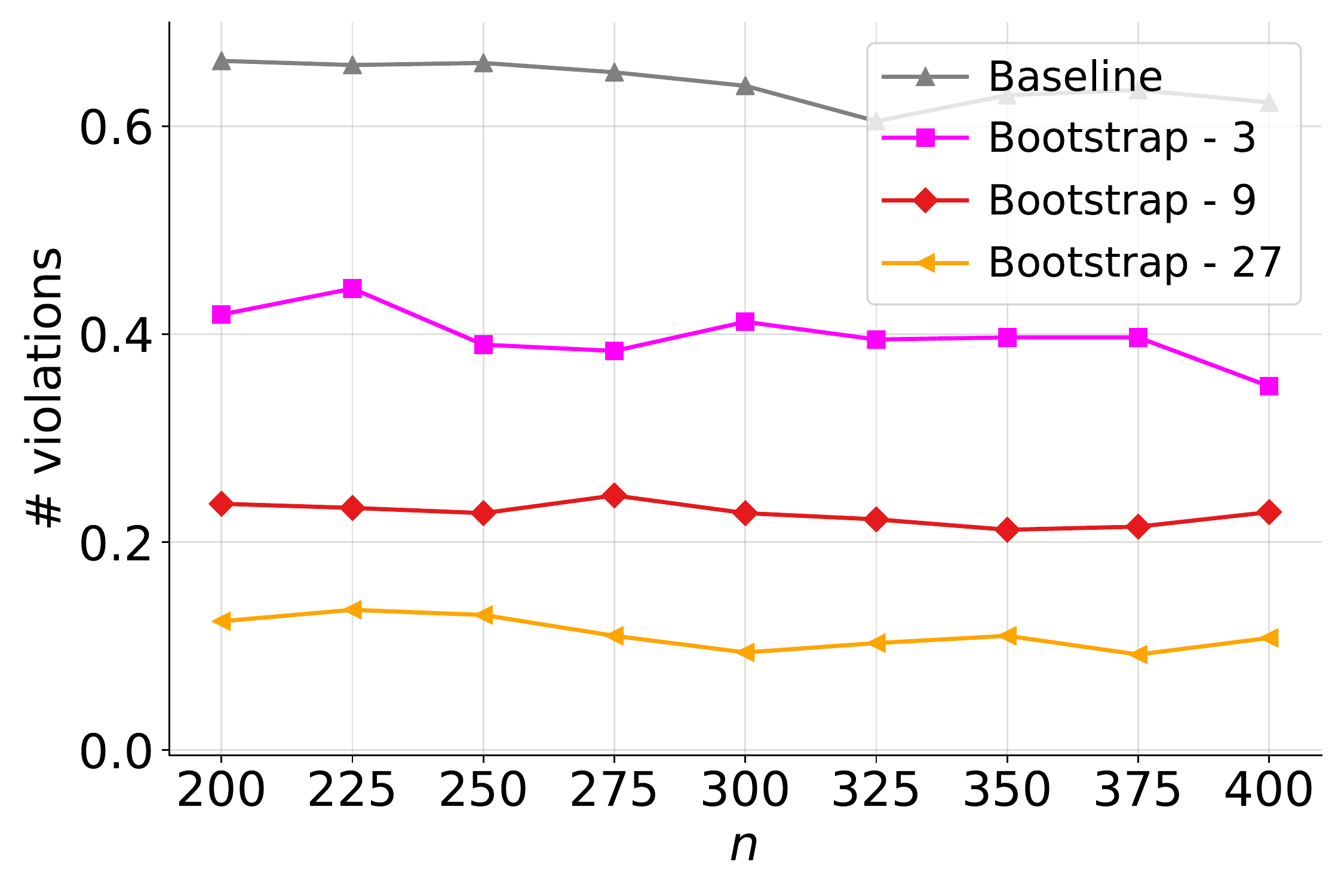}&
    \includegraphics[width=0.3\linewidth,clip]{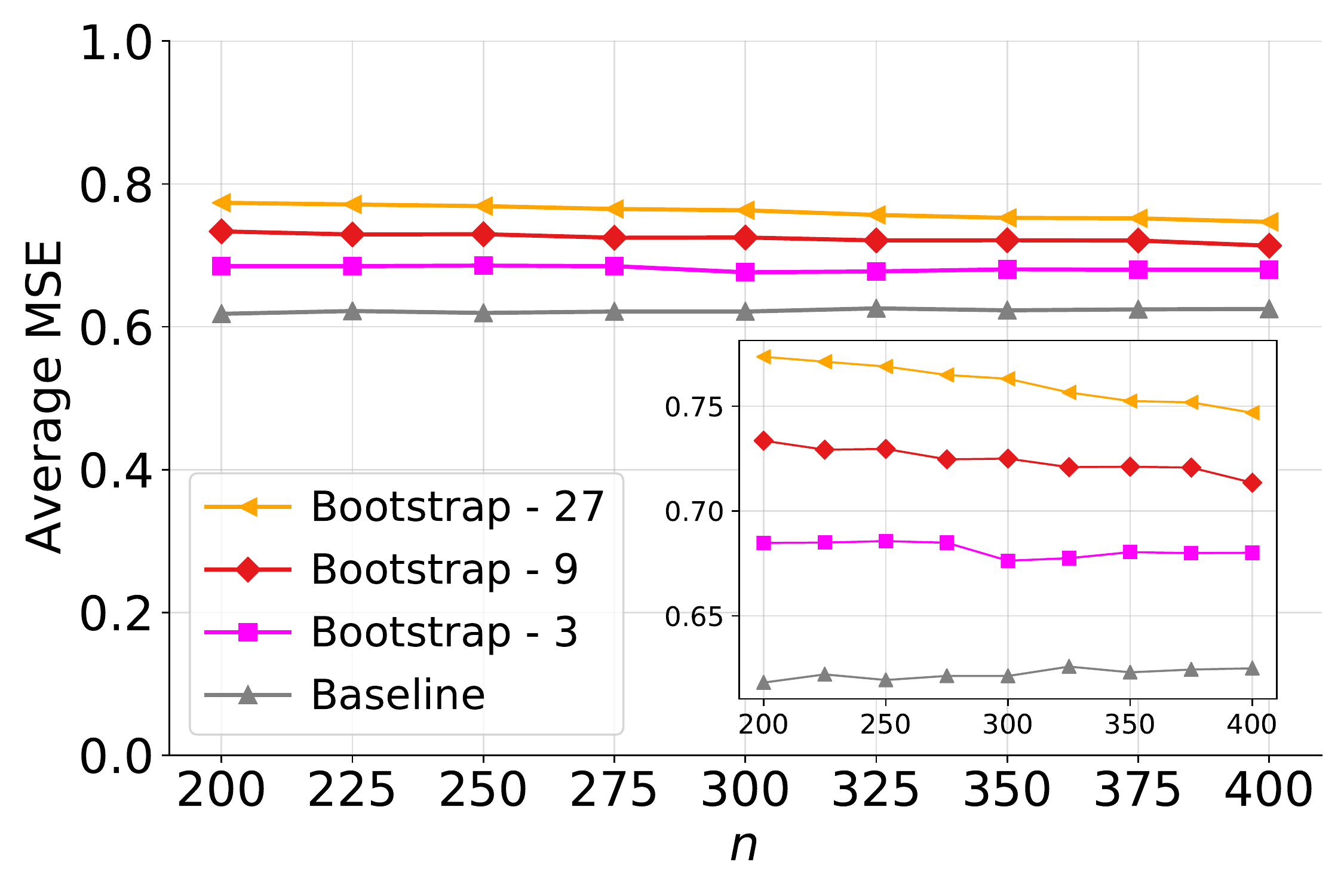}&
    \includegraphics[width=0.3\linewidth,clip]{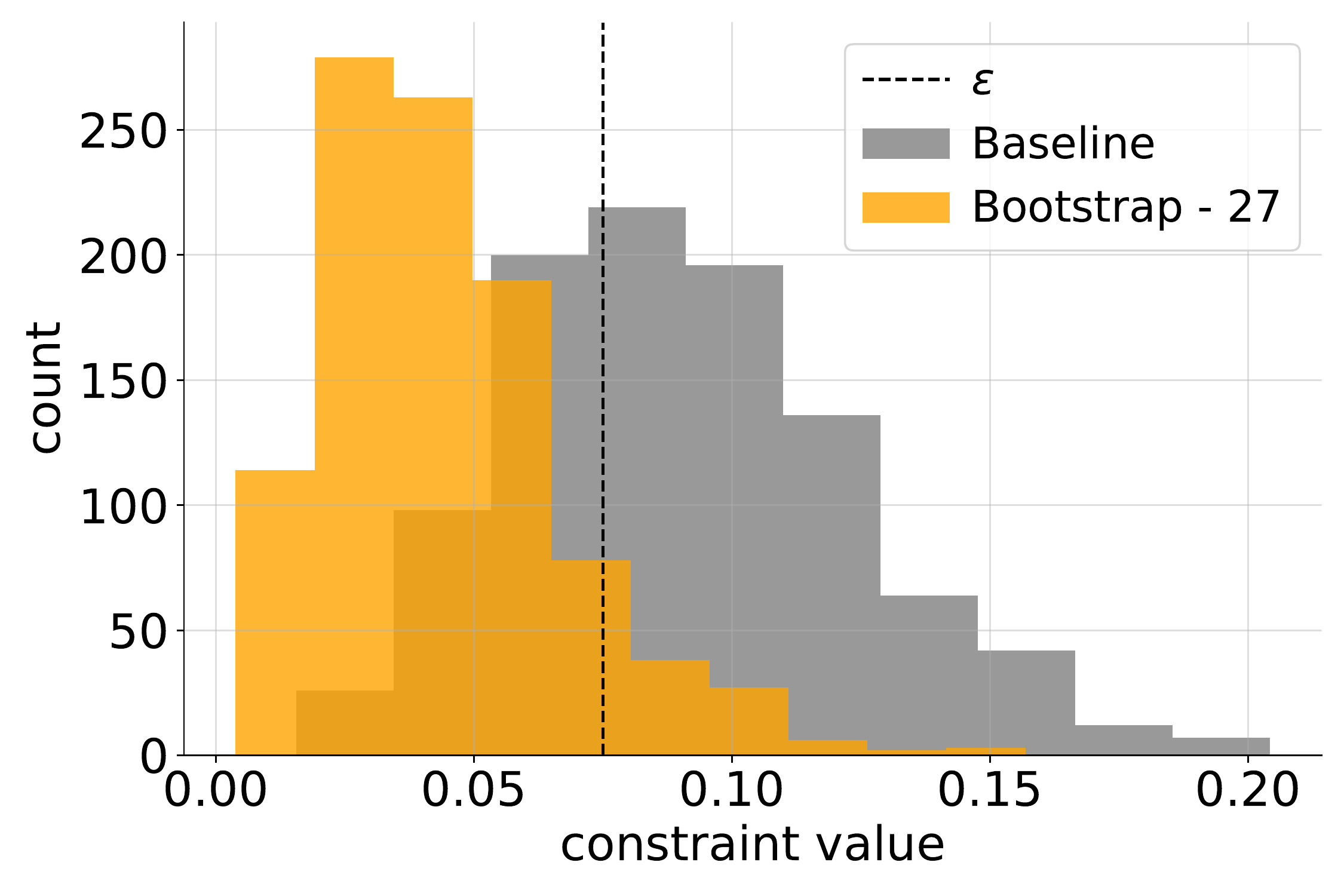}
    \\ & (a) $d = 3$, $\Sigma^{\star} = \Sigma^{\text{gen}}_3$, $\varepsilon = 0.075$ & \\
    \includegraphics[width=0.3\linewidth,clip]{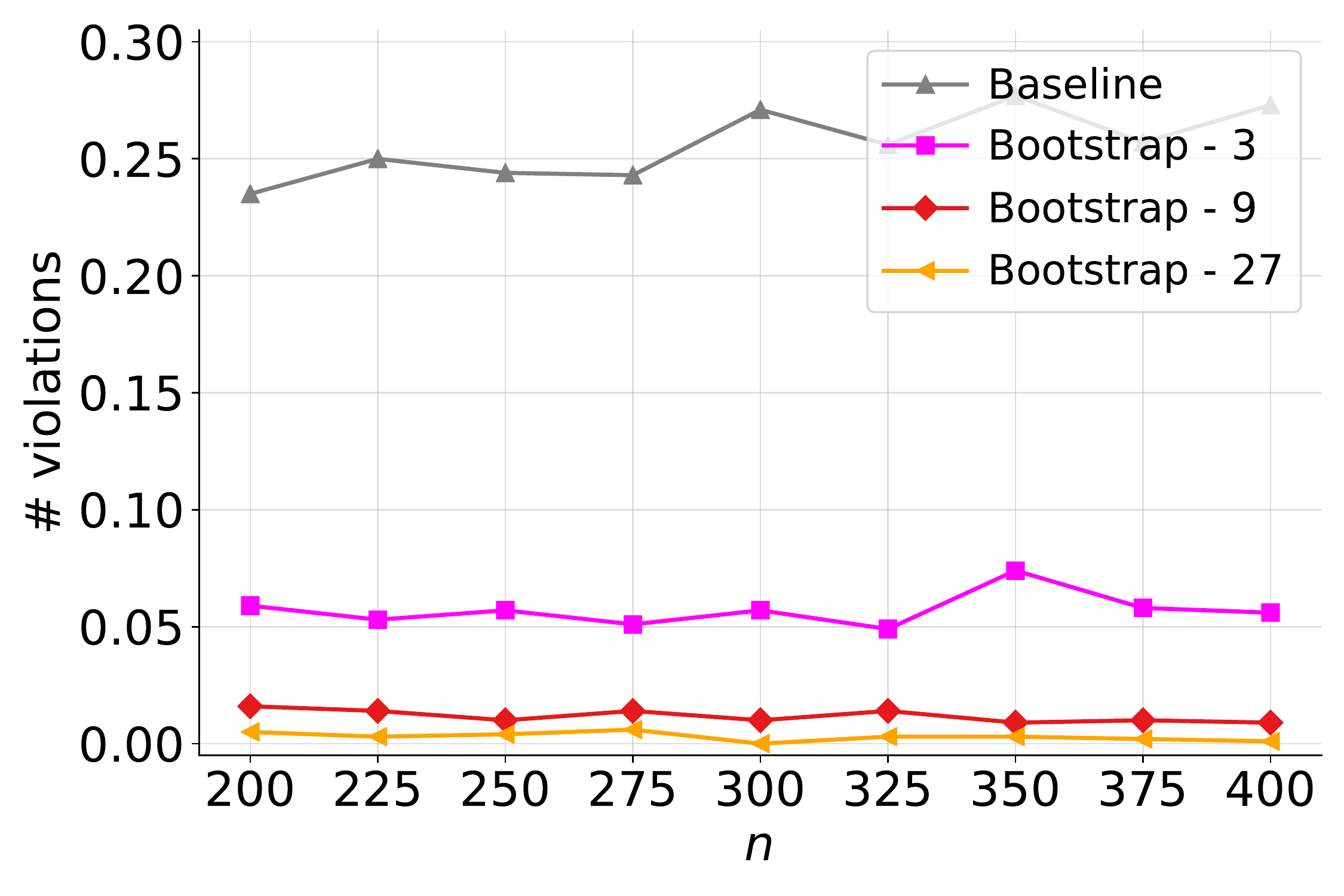}&
    \includegraphics[width=0.3\linewidth,clip]{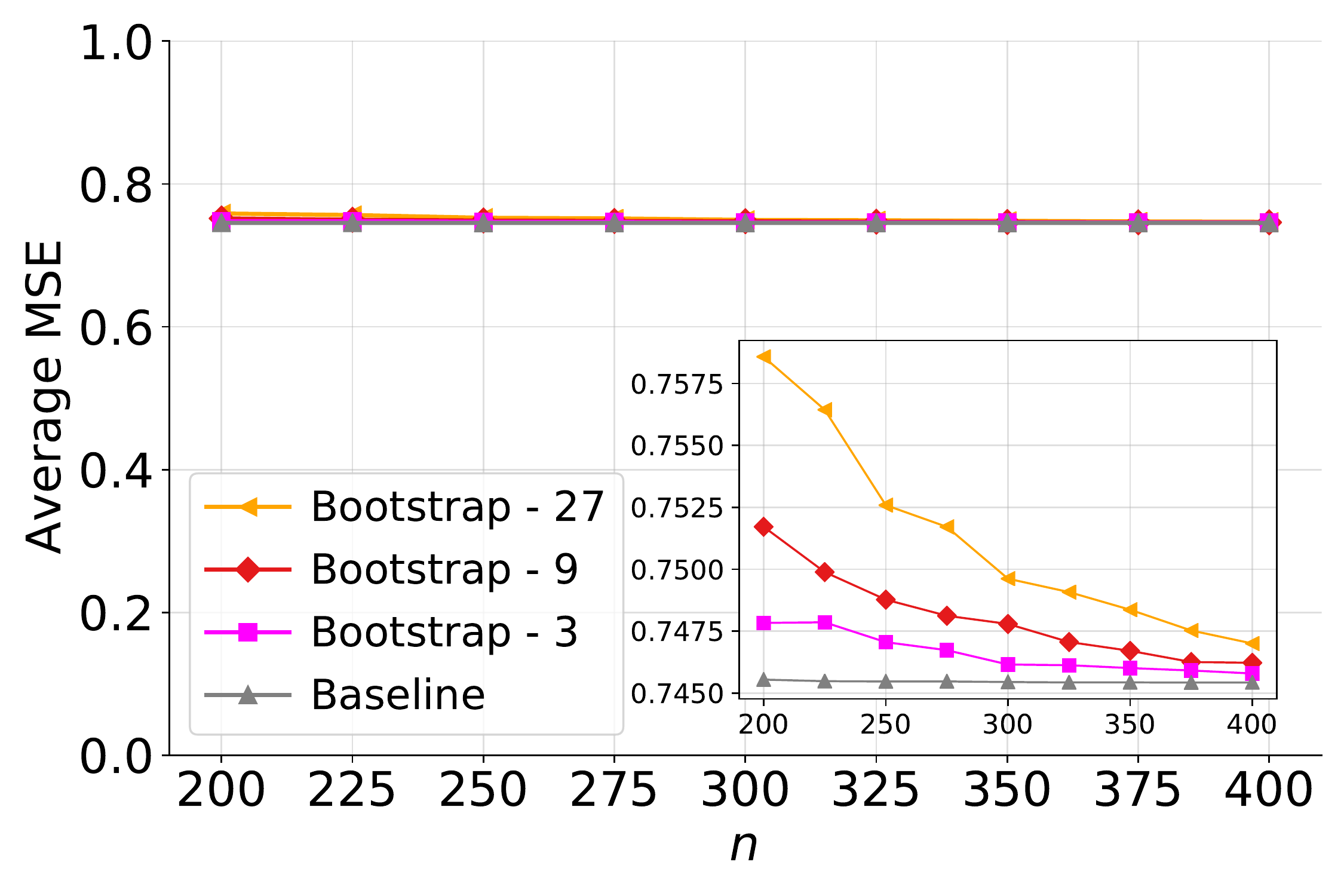}&
    \includegraphics[width=0.3\linewidth,clip]{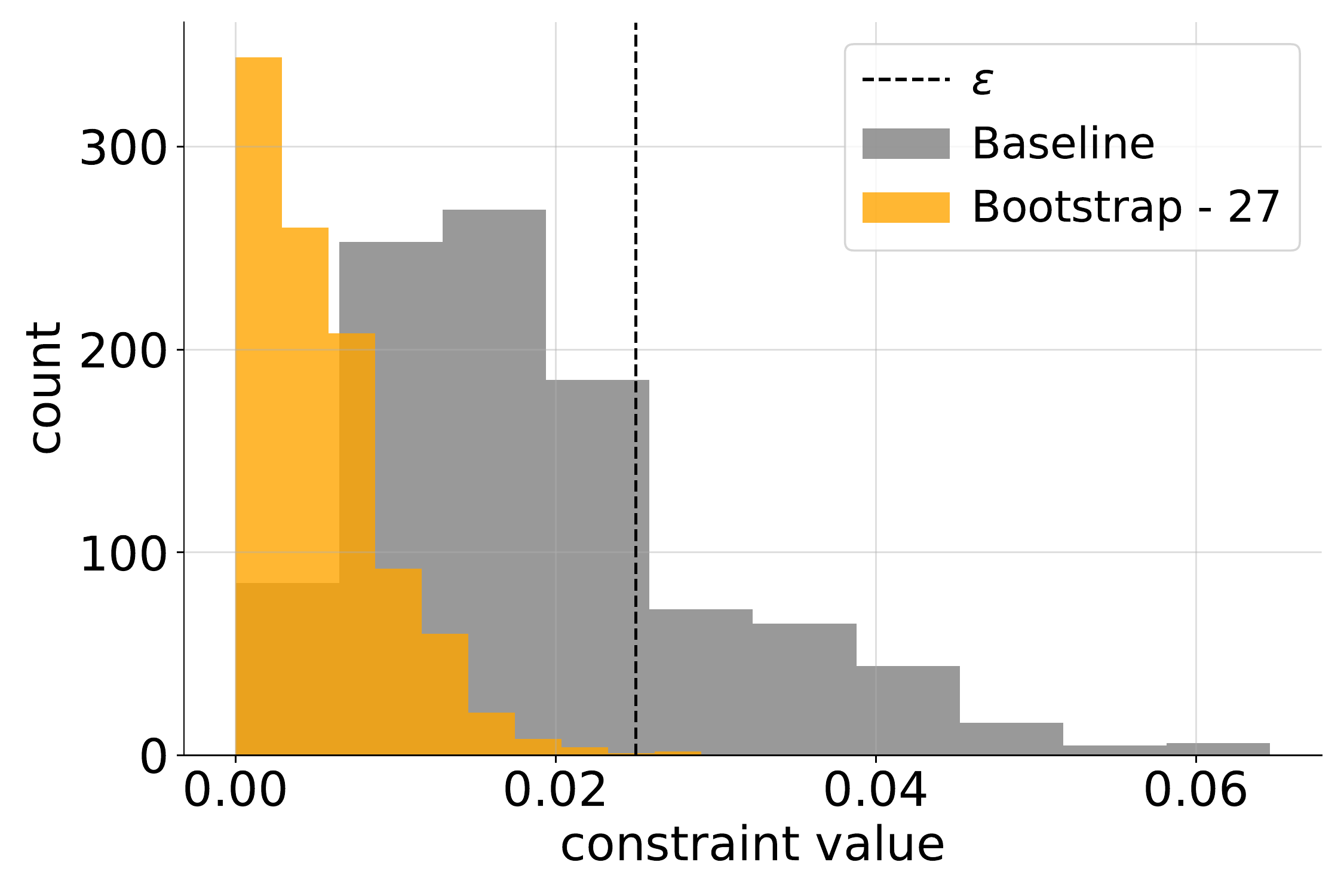}
    \\ & (b) $d = 3$, $\Sigma^{\star} = \Sigma^{\text{fair}}_3$, $\varepsilon = 0.025$ & 

    \end{tabular}
\caption{The performance of $\bss$ with $\s\in \{3,9,27\}$ and \texttt{Baseline} for $d=3$ and various $\Sigma^{\star}$. 
In the left column, we plot the fraction of violations of the fairness constraint $\normalinner{\rvba}{\Bex}^2 \leq \varepsilon$ vs. $n$; in the middle column, we plot average MSE vs. $n$; in the right column, we plot the histogram of the value of $\normalinner{\rvba}{\Bex}^2$ over 1,000 trials for $n = 250$.}
\label{fig_gaussian_general_d=3}
\end{figure*}

\noindent \textbf{A different choice of covariance matrices.} Next, we provide results with a different choice of covariance matrices:
\[   
    \Sigma^{\text{a}}_2  =  \left[\begin{array}{cc|c|c}
    1& 0.1& 0.5& 0.01\\
    0.1& 1& 0.01& 0.25\\
    \hline
    0.5& 0.01& 1& 0.75\\
    \hline
    0.01& 0.25& 0.75& 1	
    \end{array}\right]
    \qtext{ }
    \Sigma^{\text{a}}_3  = \left[\begin{array}{ccc|c|c}
    1& 0.1& 0.5& 0.5& 0.01\\
    0.1& 1& 0.5& 0.01& 0.25\\
    0.5& 0.5& 1& 0.2& 0.2\\
    \hline
    0.5& 0.01& 0.2& 1& 0.75\\
    \hline
    0.01& 0.25& 0.2& 0.75& 1	
    \end{array}\right]
\]
As we see in \cref{fig_gaussian_another}, the behavior is similar as in other scenarios.\\
\begin{figure*}[h!]
    \centering
    \begin{tabular}{ccc}
    \includegraphics[width=0.31\linewidth,clip]{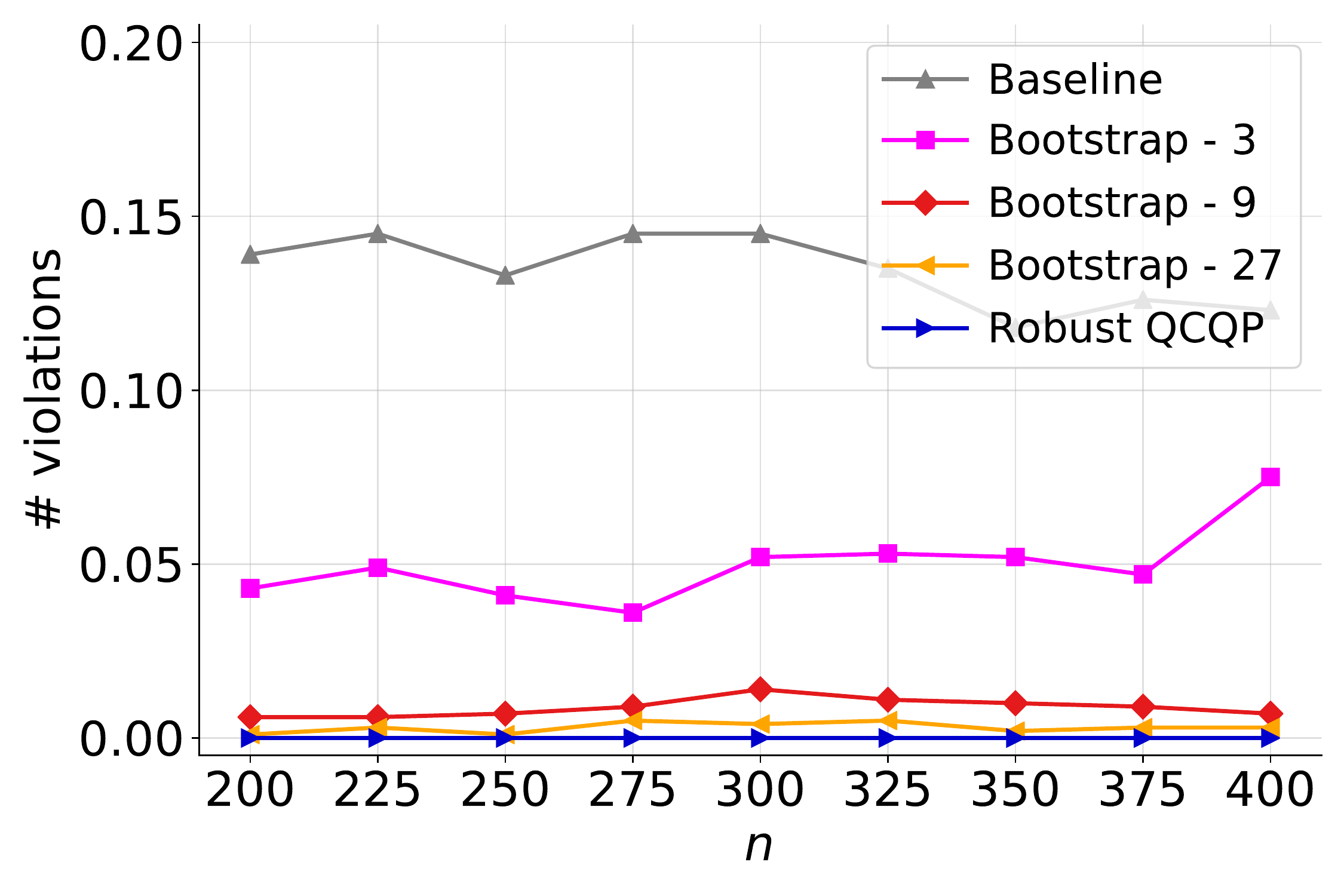}&
    \includegraphics[width=0.31\linewidth,clip]{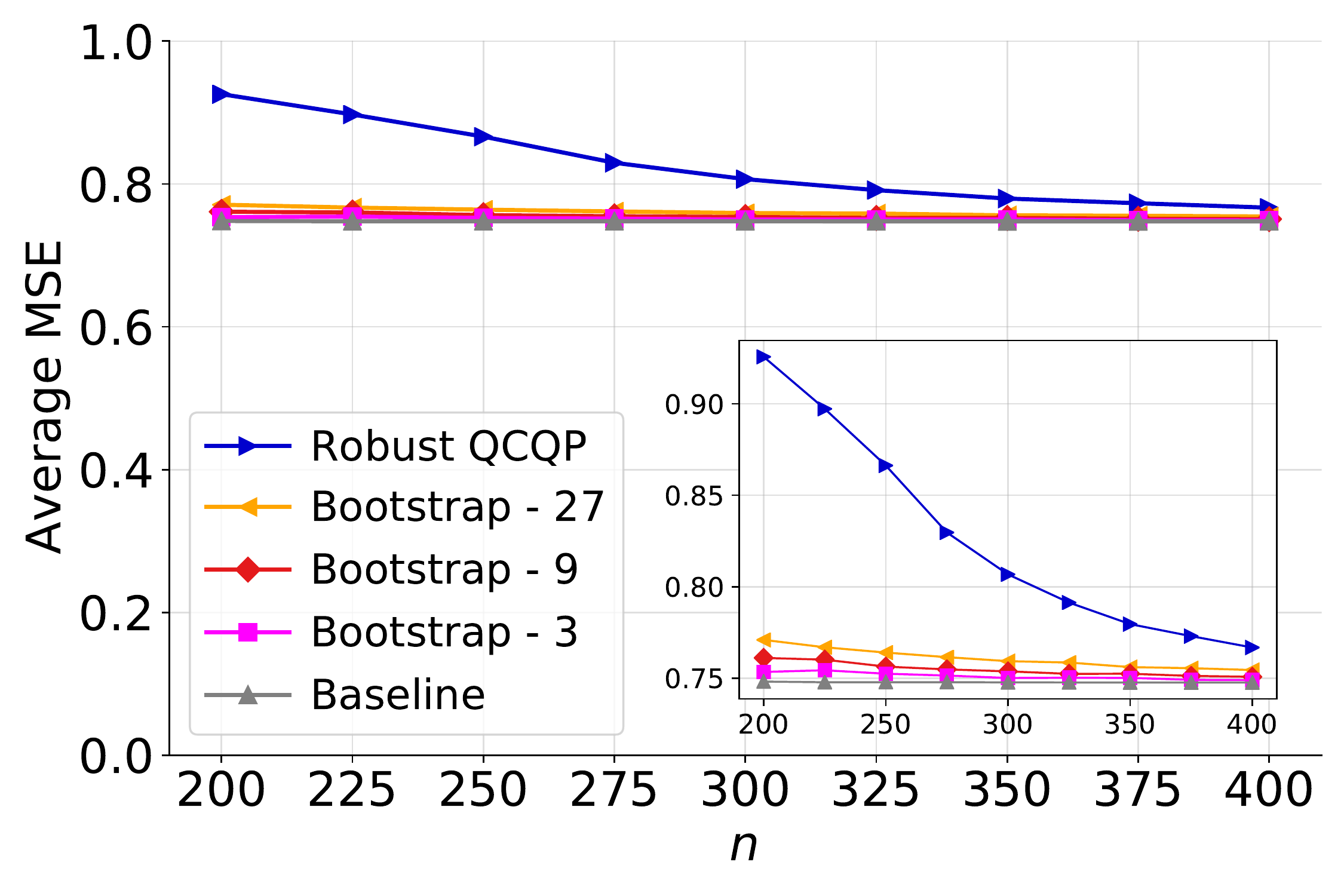}&
    \includegraphics[width=0.31\linewidth,clip]{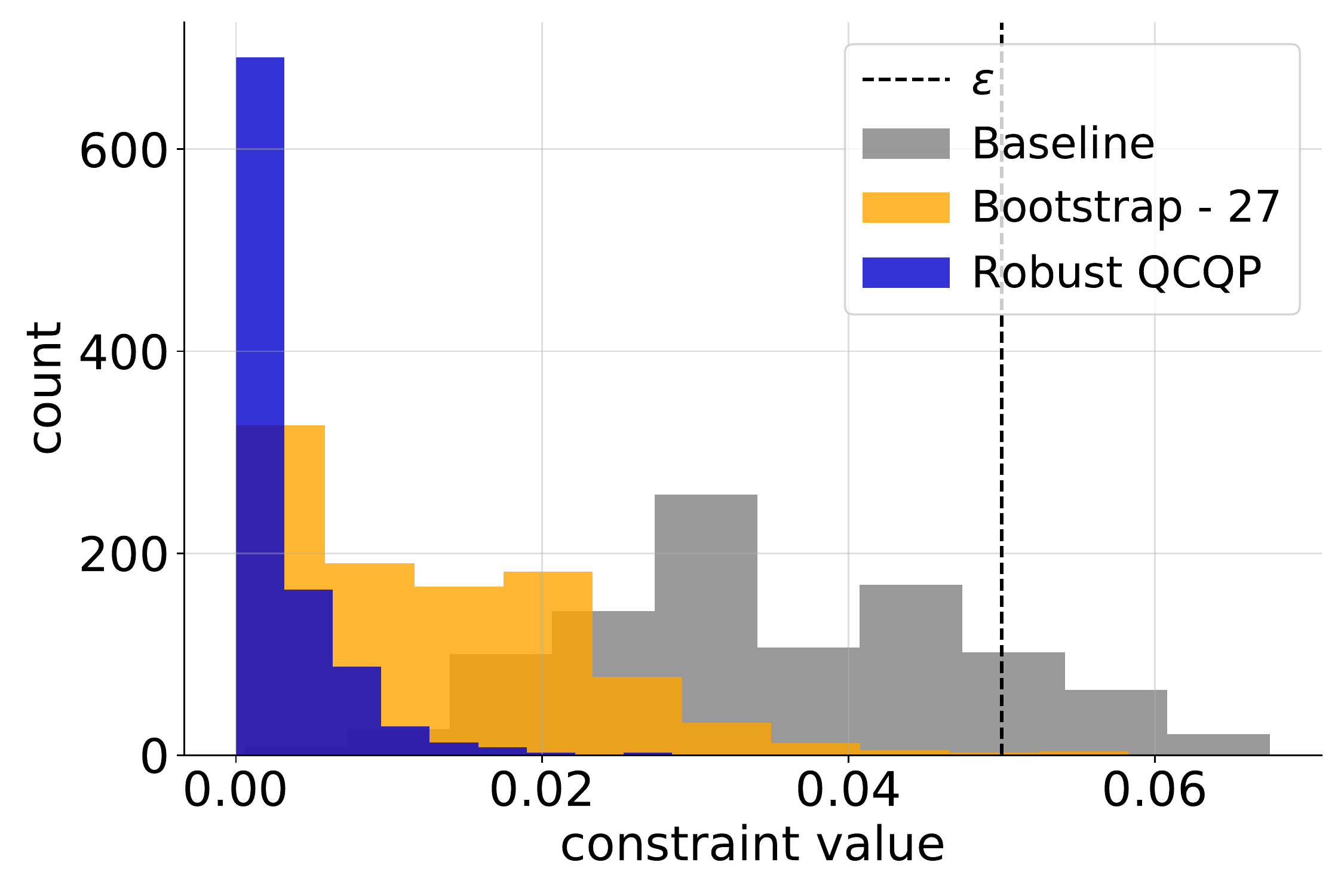}
    \\ & (a) $d = 2$, $\Sigma^{\star} = \Sigma^{\text{a}}_2$, $\varepsilon = 0.05$ &  \\
    \includegraphics[width=0.31\linewidth,clip]{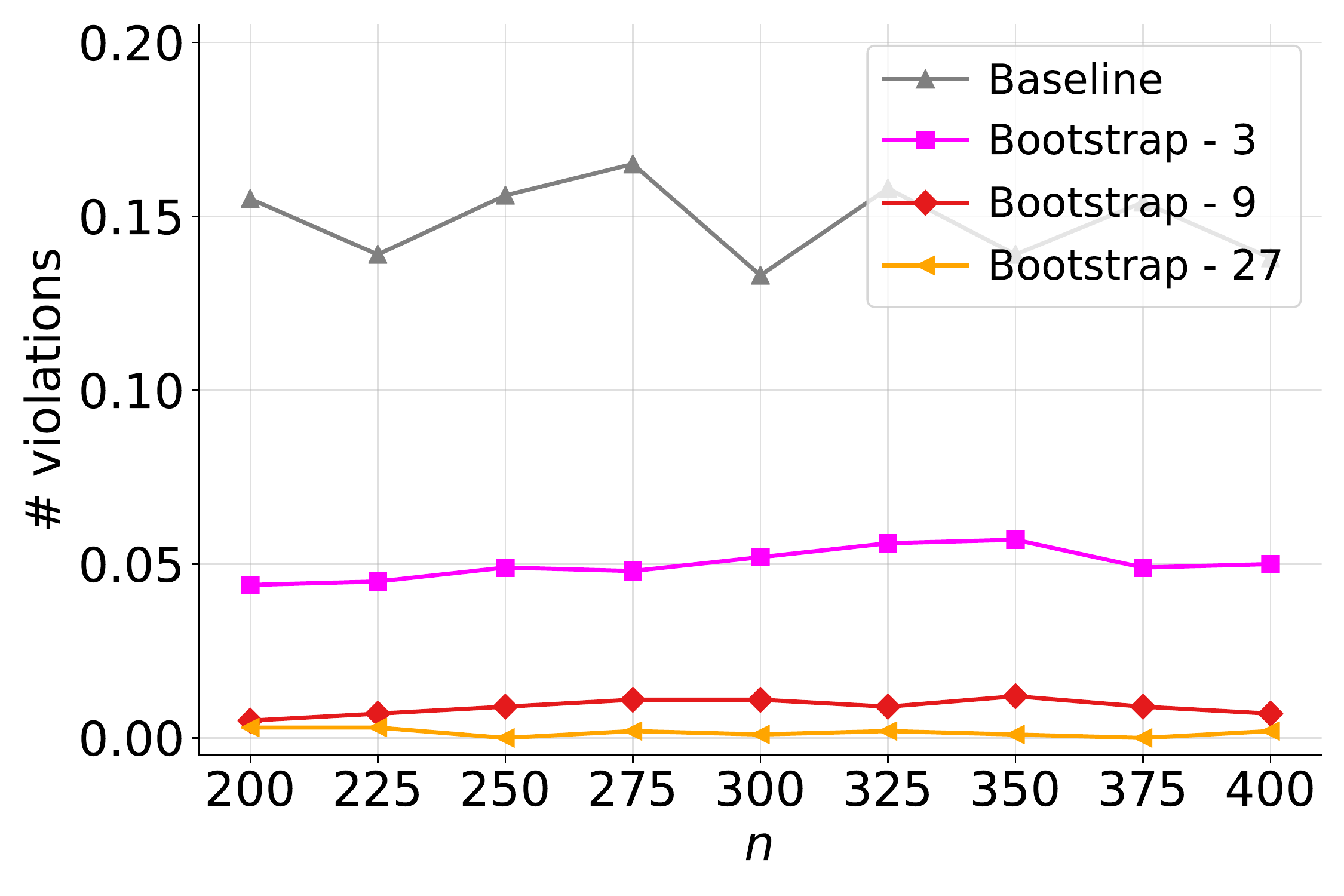}&
    \includegraphics[width=0.31\linewidth,clip]{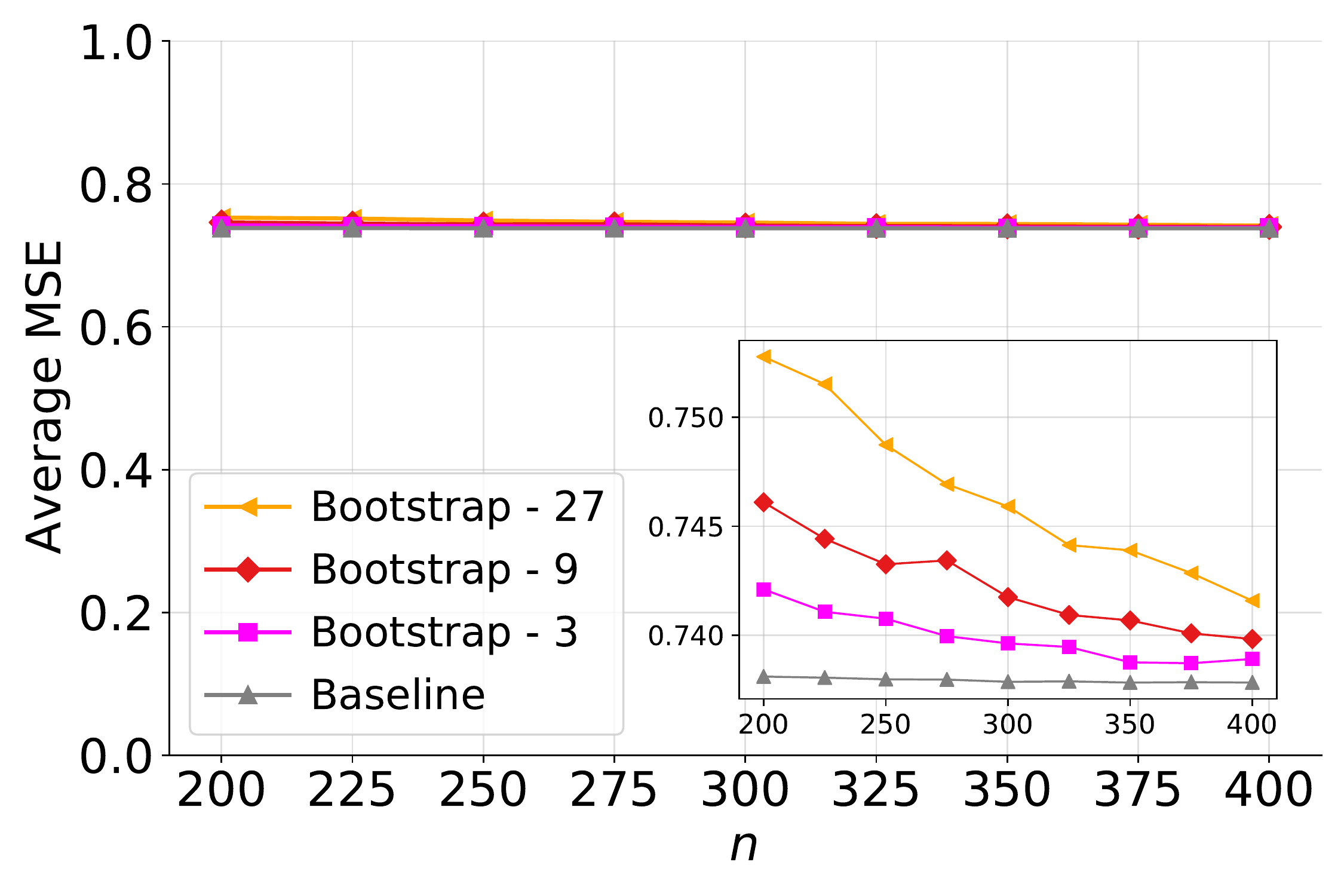}&
    \includegraphics[width=0.31\linewidth,clip]{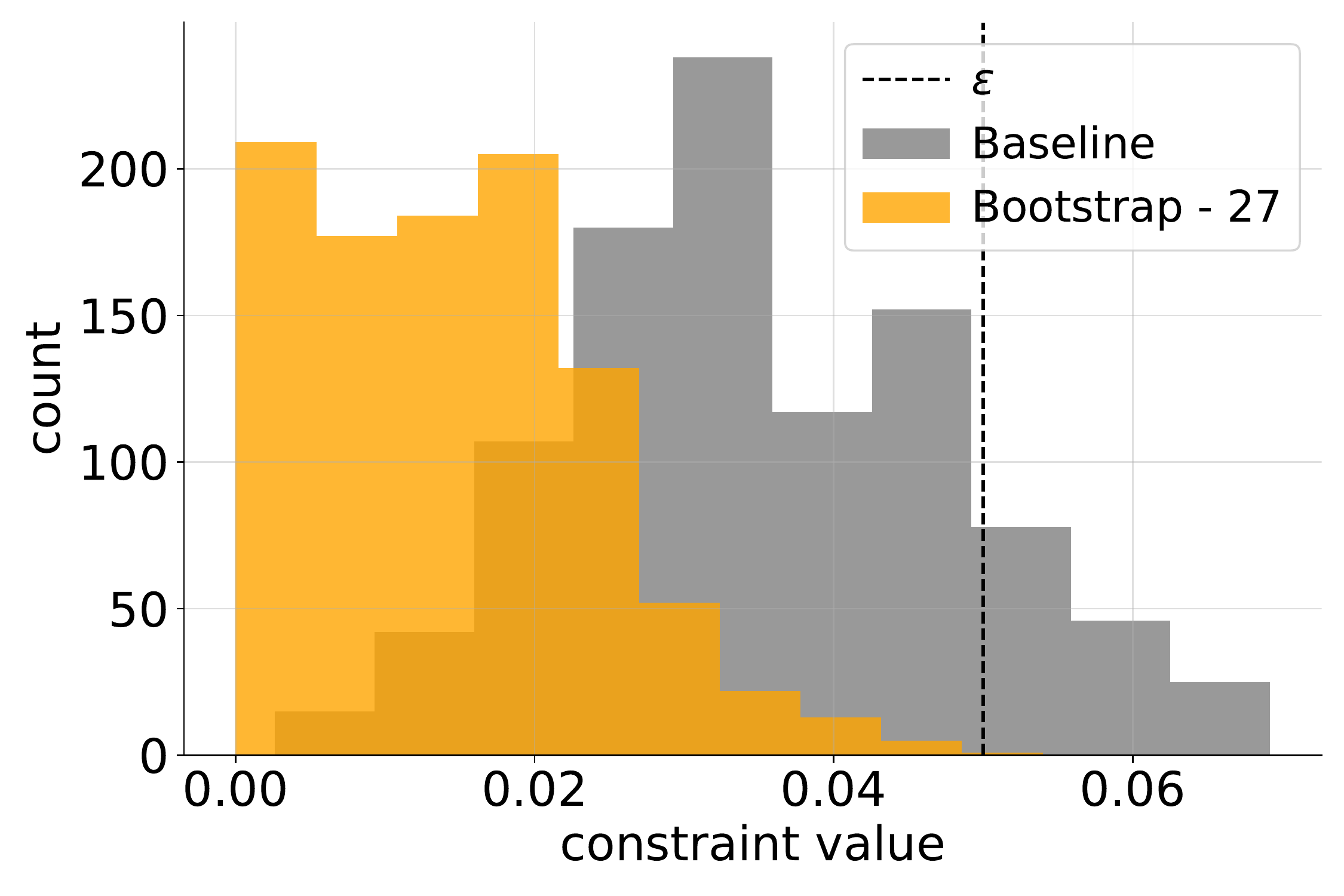}
    \\  & (b) $d = 3$, $\Sigma^{\star} = \Sigma^{\text{a}}_3$, $\varepsilon = 0.05$ & 
    \end{tabular}
\caption{The performance of robust QCQP in \cref{eq_qcqp_limited_approx} (for $d = 2$), $\bss$ with $\s\in\{3,9,27\}$, and \texttt{Baseline} for various $d$ and $\Sigma^{\star}$. In the left column, we plot the fraction of violations of the fairness constraint $\normalinner{\rvba}{\Bex}^2 \leq \varepsilon$ vs. $n$; in the middle column, we plot average MSE vs. $n$; in the right column, we plot the histogram of the value of $\normalinner{\rvba}{\Bex}^2$ over 1000 trials for $n = 250$.}
\label{fig_gaussian_another}
\end{figure*}

\noindent \textbf{Results for uncertainty due to noise.}
In \cref{subsec_gaussian_expts}, we considered uncertainty arising due to limited number of sensitive attributes. Here, we consider uncertainty arising due to noise. Specifically, we generate synthetic data using the covariance matrix $\Sigma^{\text{fair}}_2$ and conduct two sets of experiments as follows. In the first experiment, we add independent zero-mean Gaussian noise, i.e., $\mathcal{N}(0,\sigma^2)$, with varying noise level, i.e., $\sigma = 1, \cdots, 10 $, to the sensitive attributes. Such kind of noise is common when the sensitive attributes need to be privatized. 
In the second experiment, we add independent zero-mean Gaussian noise, i.e., $\mathcal{N}(0,\sigma^2)$, with varying noise level, i.e., $\sigma = 1, \cdots, 10 $, to the sensitive attributes only if the sensitive attribute magnitude is larger than a certain threshold, i.e., $|\rve| \geq \SigmaEE$. Such data-dependent noise simulates noisy responses when sensitive attributes are collected via a survey, e.g., minority groups pretend to be the majority due to fear of discrimination. In our experiment, samples with sensitive attribute magnitudes larger than the threshold have noisy responses. We provide the results for both experiments in ~\cref{fig_gaussian_noise} where we observe that $\bss$ achieves significantly fewer fairness violations than $\texttt{Baseline}$ while maintaining  comparable MSE.

\begin{figure*}[h]
    \centering
    \begin{tabular}{cc}
    \includegraphics[width=0.31\linewidth,clip]{Gaussian_noise_indep_fairness.pdf}\qquad
    \includegraphics[width=0.31\linewidth,clip]{Gaussian_noise_indep_mse.pdf}
    \\ \hspace{1cm} (a) Independence Noise, $d = 2$, $\Sigma^{\star} = \Sigma^{\text{fair}}_2$, $\varepsilon = 0.025$ \\
    \includegraphics[width=0.31\linewidth,clip]{Gaussian_noise_dep_fairness.pdf}\qquad
    \includegraphics[width=0.31\linewidth,clip]{Gaussian_noise_dep_mse.pdf}
    \\ \hspace{1cm} (b) Dependent Noise, $d = 2$, $\Sigma^{\star} = \Sigma^{\text{fair}}_2$, $\varepsilon = 0.025$ \\
    \end{tabular}
\caption{The performance of $\bss$ with $\s\in\{3,9,27\}$, and \texttt{Baseline}. In the left column, we plot the fraction of violations of the fairness constraint $\normalinner{\rvba}{\Bex}^2 \leq \varepsilon$ vs. the level of noise added to the sensitive attributes; in the right column, we plot average MSE vs. the level of noise added to the sensitive attributes.}
\label{fig_gaussian_noise}
\end{figure*}

\subsection{Additional results for real-world data}
\label{sec_more_details_real}
\textbf{Dataset description.} The Adult data~\citep{lantz2019machine} (\url{https://archive.ics.uci.edu/ml/datasets/adult}) considers predicting whether an individual's income is more than \$50,000  from the 1994 Census database using 14 demographic features such as age, education, marital status, and country of origin. 
The sensitive attribute is sex. During pre-processing, we remove the sensitive attribute from the set of input features and discard rows with any missing data. The data has 48,842 samples, with 30,527 males and 14,695 females. We use a train-test split ratio of 0.72:0.28.

The Insurance data~\citep{lantz2019machine} (\url{https://www.kaggle.com/datasets/teertha/ushealthinsurancedataset}) considers predicting the total annual medical expenses of individuals using 5 demographic features from the U.S. Census Bureau, such as BMI, number of children, and age. The sensitive attribute is sex. During pre-processing, we remove the sensitive attribute from the set of input features and perform normalization. The data has 1,338 samples with 676 males and 662 females. We use a train-test split ratio 0.8:0.2.

The Crime dataset~\citep{redmond2002data} (\url{https://archive.ics.uci.edu/ml/datasets/communities+and+crime}) considers predicting the number of violent crimes per 100K population using socio-economic information of communities in the U.S. The sensitive attribute is the percentage of people belonging to a particular race in the community. During pre-processing, we drop all the samples with the value of sensitive attribute less than $5\%$ to remove any outliers. We also remove the non-predictive attributes and the sensitive attribute from the set of input features, and normalize all attributes to the standardized range of $[0, 1]$. The resulting data has 1,112 samples, and we use a train-test split ratio 0.8:0.2.\\

\noindent \textbf{Implementation details.} We use two-layer fully connected networks in all our experiments. For all hidden layers, we use the selu activation function. For the output layer, we use softmax activation for classification and no activation for regression. We use 80 (50) units in the hidden layer and train the network for 30 (200) epochs for classification (regression). Empirically, we do not observe any substantial improvement in performance for $\s\ge 5$, and thus report for a single $\s$ for each dataset. The hyperparameters $\lambda, \lambda_1\, \cdots, \lambda_{\s}$ are initialized to $10$ ($5$) for classification (regression). We optimize the model parameters and the hyperparameters $\lambda, \lambda_1\, \cdots, \lambda_{\s}$ using different optimizers. For model optimization, we use Adam optimizer with batch size $128$ $(100)$, initial learning rate $10^{-3}$ $(10^{-4})$, and weight decay $0$ $(0.01)$ for classification (regression). For optimizing the hyperparameters, we use SGD optimizer with learning rate $10^{-2}$. The batch sizes for Adult, Crime, and Insurance datasets are set to be 128, 100, and 128, respectively. All experiments are implemented in PyTorch using Tesla V100 GPUs with 32 GB memory.\\

\noindent \textbf{Results of different levels of uncertainty.} In \cref{plots_real_different}, we provide results for independence notion of fairness with $n = 200$ for Adult dataset, $\sigma = 0.25$ for the Crime dataset, and $n = 20$ for Insurance dataset, respectively, where $n$ denotes the number of sensitive attributes kept out of $N$ and  $\sigma$ decides the amount of noise added to the sensitive attributes. The observations are consistent to that in \cref{plots_real}(a), (b), and (c).
\begin{figure}[ht]
    \centering
    \begin{tabular}{ccc}
    \includegraphics[width=0.31\linewidth,clip]{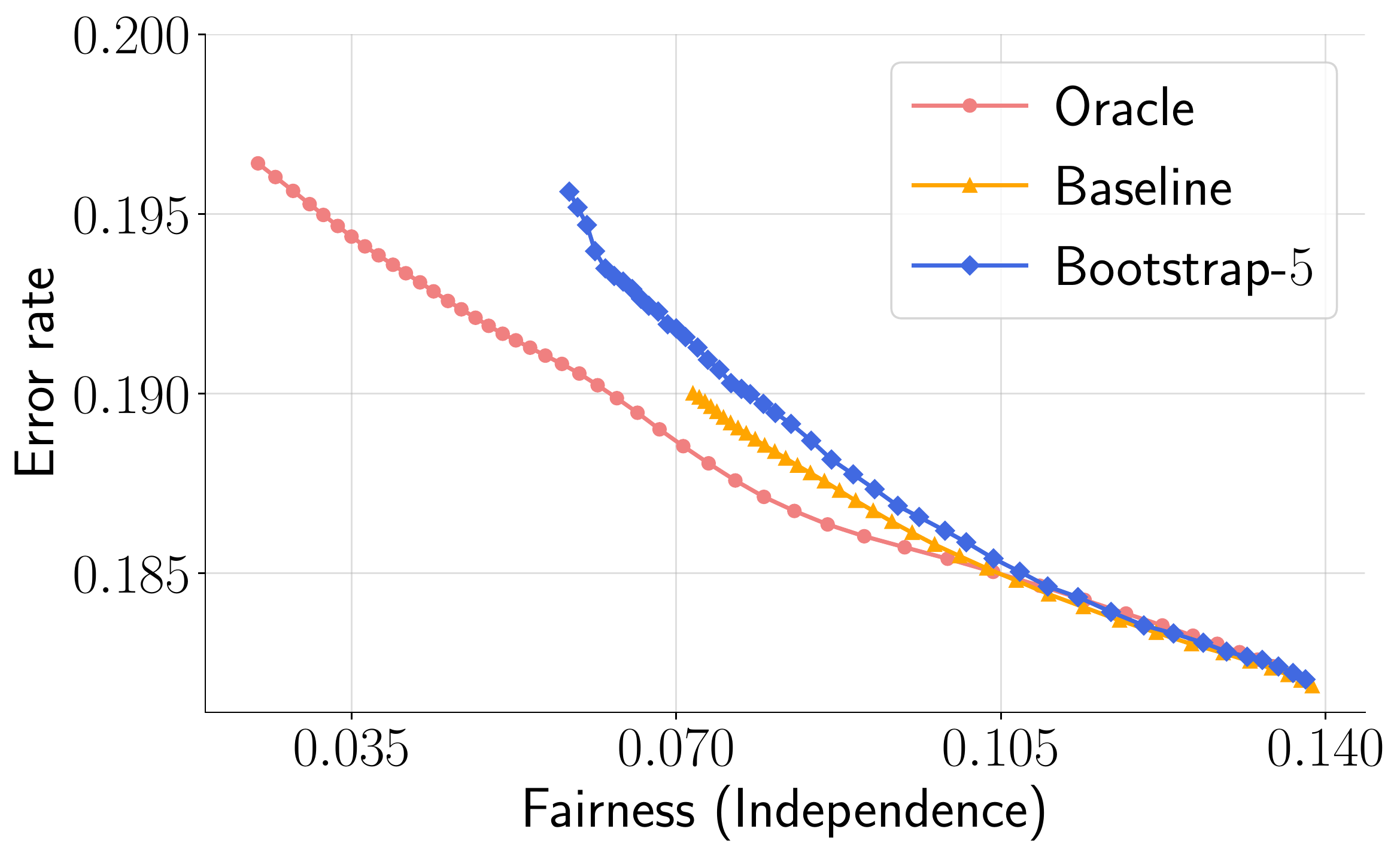}&
    \includegraphics[width=0.31\linewidth,clip]{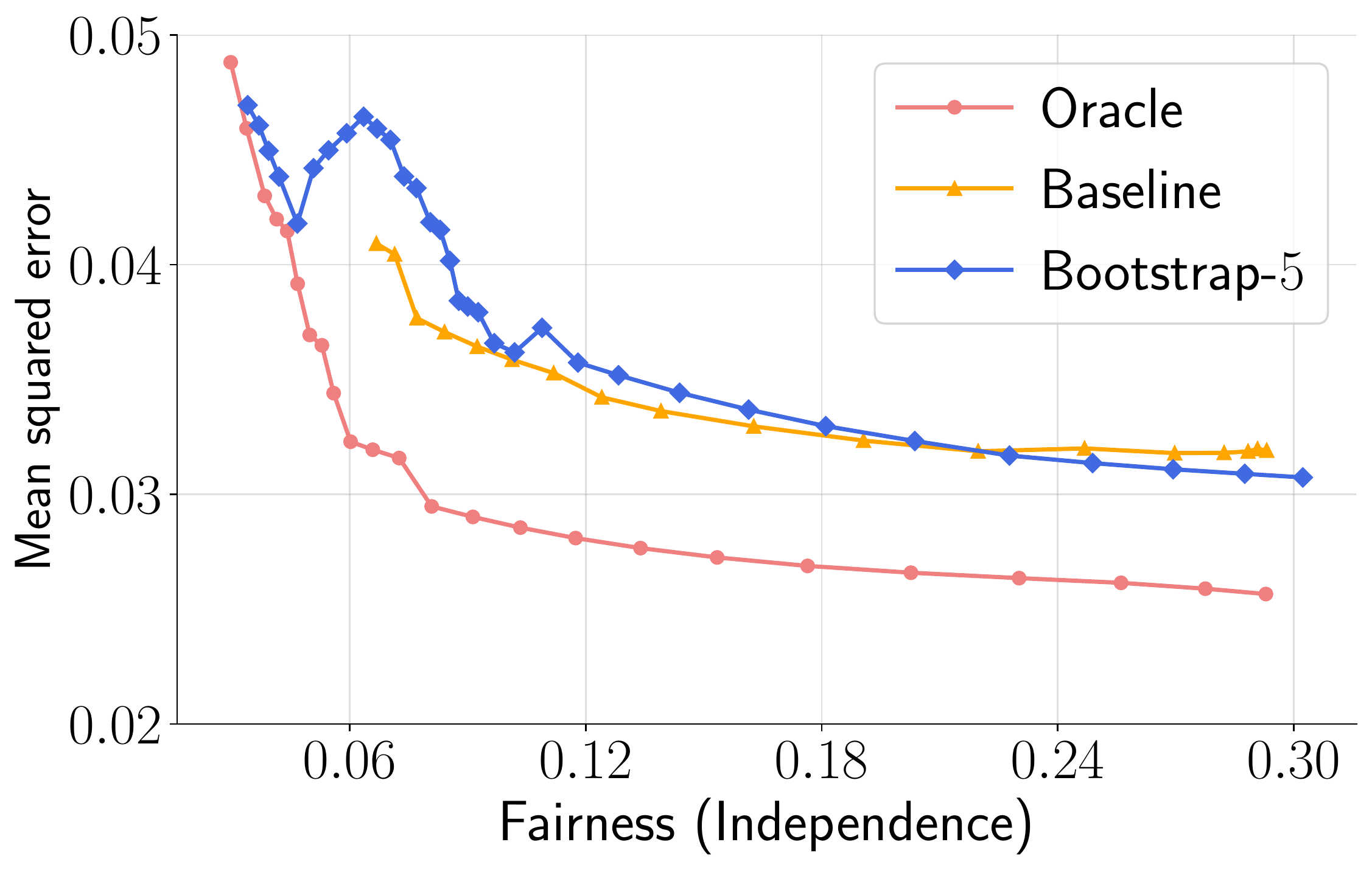}&
    \includegraphics[width=0.31\linewidth,clip]{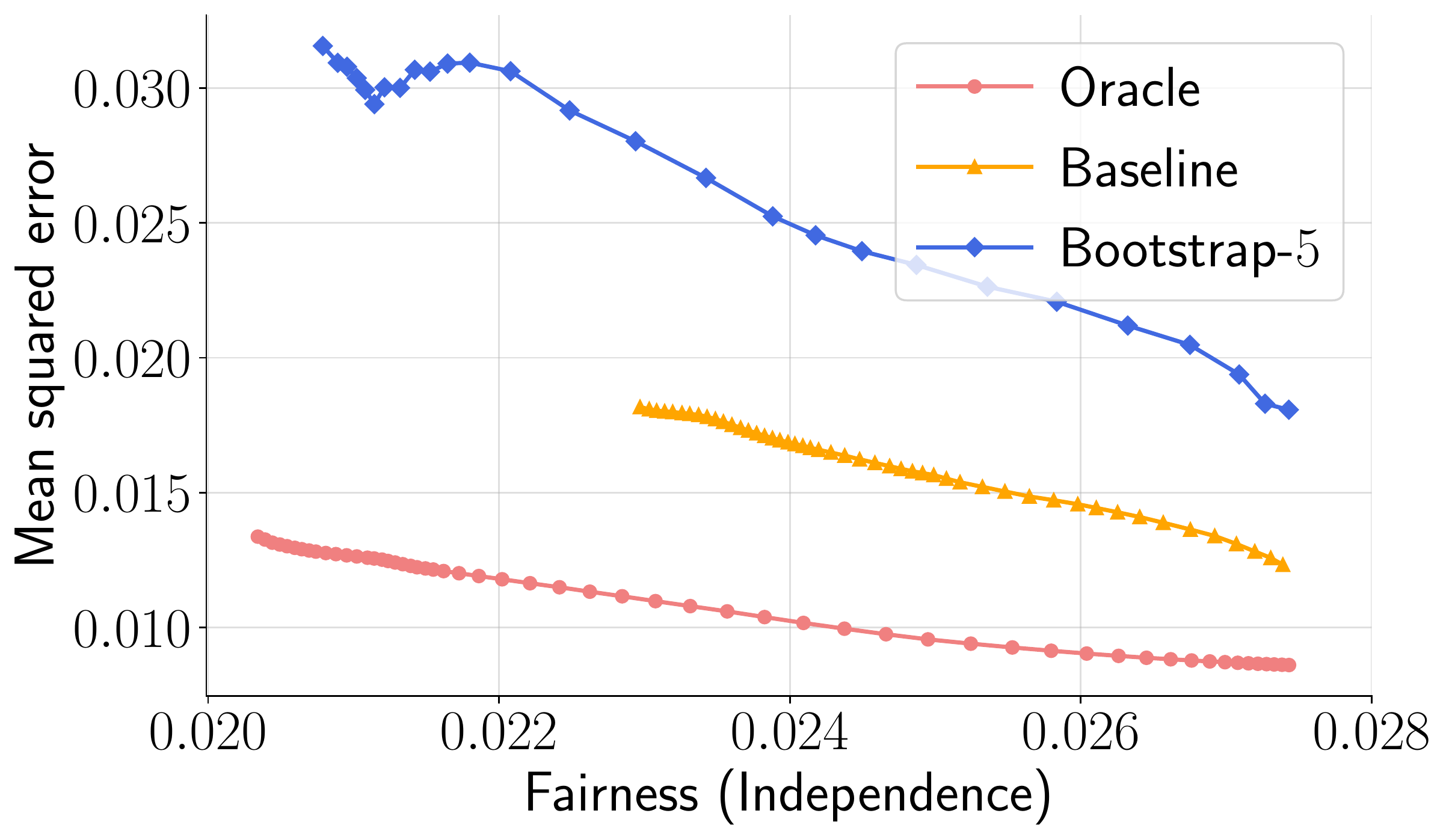}
    \\ (a) Adult dataset & (b) Crime dataset & (c) Insurance  dataset
    \end{tabular}
\caption{Performance of $\bss$ and \texttt{Baseline} for independence for various datasets. $\bss$ achieves much better fairness levels compared to \texttt{Baseline} throughout.}
\label{plots_real_different}
\end{figure}
\end{document}